\newcommand{\dataset}[1]{#1}
\newcommand{\PBkl}{\operatorname{PBkl}}
\newcommand{\PBUB}{\operatorname{PBUB}}
\newcommand{\PBSkl}{\operatorname{PBSkl}}
\newcommand{\TND}{\operatorname{TND}}
\newcommand{\CCTND}{\operatorname{CCTND}}
\newcommand{\CCPBB}{\operatorname{CCPBB}}
\newcommand{\CCPBUB}{\operatorname{CCPBUB}}
\newcommand{\CCPBSkl}{\operatorname{CCPBSkl}}
\newcommand{\Ex}{\operatorname{Ex}}
\newtheorem{theorem}{Theorem}
\newtheorem{lemma}[theorem]{Lemma}
\newtheorem{remk}[theorem]{Remark}
\newtheorem{exmp}[theorem]{Example}
\def\FullBox{\hbox{\vrule width 8pt height 8pt depth 0pt}}
\def\qed{\ifmmode\qquad\FullBox\else{\unskip\nobreak\hfil
\penalty50\hskip1em\null\nobreak\hfil\FullBox
\parfillskip=0pt\finalhyphendemerits=0\endgraf}\fi}
\def\qedsketch{\ifmmode\Box\else{\unskip\nobreak\hfil
\penalty50\hskip1em\null\nobreak\hfil$\Box$
\parfillskip=0pt\finalhyphendemerits=0\endgraf}\fi}
\newcommand{\ie} {{\it i.e.,\ }}
\newcommand{\R}{{\mathbb R}} 
\DeclareMathOperator*{\argmin}{arg\,min}
\DeclareMathOperator*{\argmax}{arg\,max}
\newcommand{\lr}[1]{\left (#1\right)} 
\newcommand{\lrs}[1]{\left [#1 \right]} 
\newcommand{\lrc}[1]{\left \{#1\right\}} 
\let\P\undefined
\NewDocumentCommand{\P}{o}{\mathbb P{\IfValueT{#1}{\lr{#1}}}}
\NewDocumentCommand{\E}{o}{\mathbb E\IfValueT{#1}{\lrs{#1}}}
\newcommand{\V}{\mathbb{V}}
\NewDocumentCommand{\Var}{o}{\V\IfValueT{#1}{\lrs{#1}}}
\NewDocumentCommand{\1}{o}{\mathds 1{\IfValueT{#1}{\lr{#1}}}}
\def\Bin{\textsf{Bin}} 
\newcommand{\cH}{\mathcal H}
\newcommand{\cD}{\mathcal{D}}
\newcommand{\cX}{\mathcal{X}}
\newcommand{\cY}{\mathcal{Y}}
\DeclareMathOperator{\KL}{KL}
\DeclareMathOperator{\kl}{kl}
\DeclareMathOperator{\MV}{MV}
\NewDocumentCommand{\CO}{o}{\mathcal O{\IfValueT{#1}{\lr{#1}}}}
\title{Split-$\kl$ and PAC-Bayes-split-$\kl$ Inequalities for Ternary Random Variables}
\author{%
   Yi-Shan Wu\\
   University of Copenhagen\\
   \texttt{yswu@di.ku.dk}\\
   \And
   Yevgeny Seldin \\
   University of Copenhagen \\
   \texttt{seldin@di.ku.dk} \\
}
\begin{document}
\maketitle
\begin{abstract}
    We present a new concentration of measure inequality for sums of independent bounded random variables, which we name a split-$\kl$ inequality. The inequality is particularly well-suited for ternary random variables, which naturally show up in a variety of problems, including analysis of excess losses in classification, analysis of weighted majority votes, and learning with abstention. We demonstrate that for ternary random variables the inequality is simultaneously competitive with the $\kl$ inequality, the Empirical Bernstein inequality, and the Unexpected Bernstein inequality, and in certain regimes outperforms all of them. It resolves an open question by \citet{TS13} and \citet{MGG20} on how to match simultaneously the combinatorial power of the $\kl$ inequality when the distribution happens to be close to binary and the power of Bernstein inequalities to exploit low variance when the probability mass is concentrated on the middle value. We also derive a PAC-Bayes-split-$\kl$ inequality and compare it with the PAC-Bayes-$\kl$, PAC-Bayes-Empirical-Bennett, and PAC-Bayes-Unexpected-Bernstein inequalities in an analysis of excess losses and in an analysis of a weighted majority vote for several UCI datasets. Last but not least, our study provides the first direct comparison of the Empirical Bernstein and Unexpected Bernstein inequalities and their PAC-Bayes extensions.
    
\end{abstract}

\section{Introduction}\label{sec:Intro}

Concentration of measure inequalities for sums of independent random variables are the most fundamental analysis tools in statistics and many other domains \citep{BLM13}. Their history stretches almost a century back, and inequalities such as Hoeffding's \citep{Hoe63} and Bernstein's \citep{Ber46} are the main work horses of learning theory. 

For binary random variables, one of the tightest concentration of measure inequalities is the $\kl$ inequality \citep{Mau04,Lan05,FBBT21,FBB22}, which is based on combinatorial properties of a sum of $n$ independent random variables.\footnote{The Binomial tail bound is slightly tighter, but it does not extend to the PAC-Bayes setting \citep{Lan05}. Our split-$\kl$ approach can be directly applied to obtain a ``split-Binomial-tail'' inequality.} However, while being extremely tight for binary random variables and applicable to any bounded random variables, the $\kl$ inequality is not necessarily a good choice for sums of bounded random variables that can take more than two values. In the latter case, the Empirical Bernstein \citep{MSA08,AMS09,MP09} and the Unexpected Bernstein \citep{CBMS07,MGG20} inequalities can be significantly tighter due to their ability to exploit low variance, as shown by \citet{TS13}. However, the Empirical and Unexpected Bernstein inequalities are loose for binary random variables \citep{TS13}.

The challenge of exploiting low variance and, at the same time, matching the tightness of the $\kl$ inequality if a distribution happens to be close to binary, was faced by multiple prior works \citep{TS13,MGG20,WMLIS21}, but remained an open question. We resolve this question for the case of ternary random variables. Such random variables appear in a variety of applications, and we illustrate two of them. One is a study of excess losses, which are differences between the zero-one losses of a prediction rule $h$ and a reference prediction rule $h^*$, $Z = \ell(h(X),Y) - \ell(h^*(X),Y) \in \{-1,0,1\}$. \citet{MGG20} have applied the PAC-Bayes-Unexpected-Bernstein bound to excess losses in order to improve generalization bounds for classification. Another example of ternary random variables is the tandem loss with an offset, defined by $\ell_\alpha(h(X),h'(X),Y) = (\ell(h(X),Y) - \alpha)(\ell(h'(X),Y) - \alpha) \in \lrc{\alpha^2, -\alpha(1-\alpha),(1-\alpha)^2}$. \citet{WMLIS21} have applied the PAC-Bayes-Empirical-Bennett inequality to the tandem loss with an offset to obtain a generalization bound for the weighted majority vote. Yet another potential application, which we leave for future work, is learning with abstention \citep{CDG+18,TBB+19}.

We present the split-$\kl$ inequality, which simultaneously matches the tightness of the Empirical/Unexpected Bernstein and the $\kl$, and outperforms both for certain distributions. It works for sums of any bounded random variables $Z_1,\dots,Z_n$, not only the ternary ones, but it is best suited for ternary random variables, for which it is almost tight (in the same sense, as the $\kl$ is tight for binary random variables). The idea behind the split-$\kl$ inequality is to write a random variable $Z$ as  $Z = \mu + Z^+ - Z^-$, where $\mu$ is a constant, $Z^+ = \max\{0, Z-\mu\}$, and $Z^-=\max\{0,\mu-Z\}$. Then $\E[Z] = \mu + \E[Z^+] - \E[Z^-]$ and, given an i.i.d.\ sample $Z_1,\dots,Z_n$, we can bound the distance between $\frac{1}{n}\sum_{i=1}^n Z_i$ and $\E[Z]$ by using $\kl$ upper and lower bounds on the distances between $\frac{1}{n}\sum_{i=1}^n Z_i^+$ and $\E[Z^+]$, and $\frac{1}{n}\sum_{i=1}^n Z_i^-$ and $\E[Z^-]$, respectively. For ternary random variables $Z \in \lrc{a,b,c}$ with $a \leq b \leq c$, the best split is to take $\mu=b$, then both $Z^+$ and $Z^-$ are binary and the $\kl$ upper and lower bounds for their rescaled versions are tight and, therefore, the split-$\kl$ inequality for $Z$ is also tight. Thus, this approach provides the best of both worlds: the combinatorial tightness of the $\kl$ bound and exploitation of low variance when the probability mass on the middle value happens to be large, as in Empirical Bernstein inequalities. We further elevate the idea to the PAC-Bayes domain and derive a PAC-Bayes-split-$\kl$ inequality.

We present an extensive set of experiments, where we first compare the $\kl$, Empirical Bernstein, Unexpected Bernstein, and split-$\kl$ inequalities applied to (individual) sums of independent random variables in simulated data, and then compare the PAC-Bayes-$\kl$, PAC-Bayes-Unexpected-Bersnstein, PAC-Bayes-split-$\kl$, and, in some of the setups, PAC-Bayes-Empirical-Bennett, for several prediction models on several UCI datasets. In particular, we evaluate the bounds in the linear classification setup studied by \citet{MGG20} and in the weighted majority prediction setup studied by \citet{WMLIS21}. To the best of our knowledge, this is also the first time when the Empirical Bernstein and the Unexpected Bernstein inequalities are directly compared, with and without the PAC-Bayesian extension. In Appendix~\ref{sec:comp:Unexpected-Bernstein} we also show that an inequality introduced by \citet{CBMS07} yields a relaxation of the Unexpected Bernstein inequality by \citet{MGG20}.
\section{Concentration of Measure Inequalities for Sums of Independent Random Variables}

We start with the most basic question in probability theory and statistics: how far can an average of an i.i.d.\ sample $Z_1,\dots,Z_n$ deviate from its expectation? We cite the major existing inequalities, the $\kl$, Empirical Bernstein, and Unexpected Bernstein, then derive the new split-$\kl$ inequality, and then provide a numerical comparison.

\subsection{Background}

We use $\KL(\rho\|\pi)$ to denote the Kullback-Leibler divergence between two probability distributions, $\rho$ and $\pi$ \citep{CT06}. We further use $\kl(p\|q)$ as a shorthand for the Kullback-Leibler divergence between two Bernoulli distributions with biases $p$ and $q$, namely $\kl(p\|q)=\KL((1-p,p)\|(1-q,q))$.
For $\hat{p}\in[0,1]$ and $\varepsilon\geq 0$ we define the upper and lower inverse of $\kl$, respectively, as $\kl^{-1,+}(\hat{p},\varepsilon):=\max\lrc{p: p\in[0,1] \text{ and } \kl(\hat{p}\|p)\leq \varepsilon}$ and $\kl^{-1,-}(\hat{p},\varepsilon):=\min\lrc{p: p \in [0,1] \text{ and } \kl(\hat{p}\|p)\leq \varepsilon}$. 

The first inequality that we cite is the $\kl$ inequality.
\begin{theorem}[$\kl$ Inequality~\citep{Lan05,FBBT21,FBB22}]\label{thm:kl}
Let $Z_1,\cdots,Z_n$ be i.i.d.\ random variables bounded in the $[0,1]$ interval and with $\E[Z_i] = p$ for all $i$. Let $\hat{p}=\frac{1}{n}\sum_{i=1}^n Z_i$ be their empirical mean. Then, for any $\delta\in(0,1)$
:
\[
\P[\kl(\hat p\|p)\geq \frac{\ln\frac{1}{\delta}}{n}] \leq \delta
\]
and, by inversion of the $\kl$,
\begin{equation}\label{eq:kl+}
\P[p\geq \kl^{-1,+}\lr{\hat p, \frac{1}{n}\ln\frac{1}{\delta}}
] \leq \delta,
\end{equation}
\begin{equation}\label{eq:kl-}
\P[p \leq \kl^{-1,-}\lr{\hat p, \frac{1}{n}\ln\frac{1}{\delta}}]\leq \delta.
\end{equation}
\end{theorem}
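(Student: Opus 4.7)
\begin{proofsketch}
The plan is the classical Chernoff-bound argument: dominate the moment generating function of a $[0,1]$-valued variable by its Bernoulli counterpart, apply Markov's inequality, optimize the exponent, and finally invert the resulting tail bound so as to read it as a statement about $p$ rather than $\hat{p}$.

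First I would establish a Bernoulli MGF dominance. For any $\lambda \in \R$, the map $z \mapsto e^{\lambda z}$ is convex, so for $z \in [0,1]$ we have $e^{\lambda z} \leq (1-z) + z e^{\lambda}$. Taking expectations gives $\E[e^{\lambda Z_i}] \leq 1 - p + p e^{\lambda}$, which is exactly the MGF of $\Ber(p)$. By independence, $\E[e^{\lambda n \hat{p}}] \leq (1 - p + p e^{\lambda})^n$. Next, for any $t \in [0,p]$ and $\lambda < 0$, Markov's inequality applied to $e^{\lambda n \hat{p}}$ yields $\P[\hat{p} \leq t] \leq e^{-\lambda n t}(1 - p + p e^{\lambda})^n$. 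Minimizing the right-hand side over $\lambda$ (the minimizer being $\lambda^{*} = \ln\frac{t(1-p)}{p(1-t)}$) collapses the exponent to $-n\,\kl(t\|p)$, giving the lower Chernoff tail $\P[\hat{p} \leq t] \leq e^{-n \kl(t\|p)}$. An analogous argument with $\lambda > 0$ and $t \in [p,1]$ yields the upper tail $\P[\hat{p} \geq t] \leq e^{-n \kl(t\|p)}$.

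Finally I would invert. Let $\varepsilon = \frac{1}{n}\ln\frac{1}{\delta}$. For fixed $p$, the function $\hat{p} \mapsto \kl(\hat{p}\|p)$ is strictly decreasing on $[0,p]$, so there is a unique $t_{-} \in [0,p]$ with $\kl(t_{-}\|p) = \varepsilon$ (boundary cases are handled by continuity). Because $p' \mapsto \kl(\hat{p}\|p')$ is convex with minimum $0$ at $p' = \hat{p}$, the event $\{p \geq \kl^{-1,+}(\hat{p},\varepsilon)\}$ coincides with $\{\hat{p} \leq p \text{ and } \kl(\hat{p}\|p) \geq \varepsilon\}$, which in turn equals $\{\hat{p} \leq t_{-}\}$. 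Applying the lower Chernoff tail yields $\P[\hat{p} \leq t_{-}] \leq e^{-n \kl(t_{-}\|p)} = e^{-n\varepsilon} = \delta$, proving (\ref{eq:kl+}). Equation (\ref{eq:kl-}) follows from the symmetric inversion of the upper Chernoff tail. The only real point to be careful about is this translation between events phrased in terms of $\hat{p}$ and events phrased in terms of $p$ via $\kl$-inversion; once the monotonicity of $\kl(\cdot\|p)$ on each side of its minimum is in hand, everything else is routine calculus.
\end{proofsketch}
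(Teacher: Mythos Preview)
Your Chernoff-based argument is correct and is precisely the approach the paper attributes to the cited references: the paper does not prove Theorem~\ref{thm:kl} itself but notes that \citet{FBBT21,FBB22} obtain the $\ln\frac{1}{\delta}$ constant ``by basing the proof on Chernoff's inequality,'' which is exactly the MGF dominance $+$ Markov $+$ optimization $+$ inversion route you outline. Your handling of the inversion step via monotonicity of $\kl(\cdot\|p)$ on each side of its minimum is the right way to translate the Chernoff tail into a statement about $p$.
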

We note that the PAC-Bayes-$\kl$ inequality (Theorem~\ref{thm:pbkl} below) is based on the inequality $\E[e^{n\kl(\hat p\|p)}]\leq 2\sqrt{n}$ \citep{Mau04}, which gives $\P[\kl(\hat p\|p)\geq\frac{\ln \frac{2\sqrt{n}}{\delta}}{n}]\leq \delta$. \citet{FBBT21,FBB22} reduce the logarithmic factor down to $\ln\frac{1}{\delta}$ by basing the proof on Chernoff's inequality, but this proof technique cannot be combined with PAC-Bayes. Therefore, when we move on to PAC-Bayes we pay the extra $\ln 2\sqrt{n}$ factor in the bounds. It is a long-standing open question whether this factor can be reduced in the PAC-Bayesian setting \citep{FBBT21}.

Next we cite two versions of the Empirical Bernstein inequality.

\begin{theorem}[Empirical Bernstein Inequality~\citep{MP09}]\label{thm:EBernstein}
Let $Z_1,\cdots,Z_n$ be i.i.d.\ random variables bounded in a $[a,b]$ interval for some $a,b\in\R$, and with $\E[Z_i]=p$ for all $i$. Let $\hat{p}=\frac{1}{n}\sum_{i=1}^n Z_i$ be the empirical mean and let $\hat{\sigma}=\frac{1}{n-1}\sum\limits_{i=1}^n(Z_i-\hat p)^2$ be the empirical variance. Then for any $\delta\in(0,1)$
:
\begin{equation}\label{eq:EBernstein}
\P[p\geq \hat{p} + \sqrt{\frac{2\hat{\sigma}\ln\frac{2}{\delta}}{n}} + \frac{7(b-a)\ln\frac{2}{\delta}}{3(n-1)}]\leq \delta.
\end{equation}
\end{theorem}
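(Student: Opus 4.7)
The plan is to reduce the empirical Bernstein inequality to the standard (population) Bernstein inequality together with a concentration bound for the empirical standard deviation around its population counterpart. First, I would invoke the classical Bernstein inequality, which gives that with probability at least $1-\delta/2$,
\[
p \leq \hat{p} + \sqrt{\frac{2\sigma^2\ln(2/\delta)}{n}} + \frac{(b-a)\ln(2/\delta)}{3n},
\]
where $\sigma^2=\Var[Z_1]$. Since $\sigma^2$ is unobservable, the remaining task is to replace $\sigma$ by $\sqrt{\hat\sigma}$ while paying only a controlled additive price of second order in $1/\sqrt{n}$.

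Second, I would prove a one-sided concentration inequality of the form $\sigma \leq \sqrt{\hat\sigma}+c(b-a)\sqrt{\ln(2/\delta)/(n-1)}$ with probability at least $1-\delta/2$. The key observation is that the Bessel-corrected empirical variance admits the $U$-statistic representation $\hat\sigma=\frac{1}{2n(n-1)}\sum_{i\ne j}(Z_i-Z_j)^2$. From this representation, changing a single coordinate $Z_i$ alters $\hat\sigma$ by at most $(b-a)^2/(n-1)$, so by the subadditivity $|\sqrt{x}-\sqrt{y}|\leq\sqrt{|x-y|}$ the mapping $(Z_1,\dots,Z_n)\mapsto\sqrt{\hat\sigma}$ has bounded differences with constant $(b-a)/\sqrt{n-1}$. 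Combining McDiarmid's inequality with $\E[\sqrt{\hat\sigma}]\leq\sqrt{\E[\hat\sigma]}=\sigma$ (Jensen) produces the desired tail bound on $\sigma-\sqrt{\hat\sigma}$.

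Third, I would apply a union bound over the two events, substitute the upper bound on $\sigma$ into the Bernstein inequality, and simplify. The new cross term has the form $\sqrt{\tfrac{2\ln(2/\delta)}{n}}\cdot c(b-a)\sqrt{\tfrac{\ln(2/\delta)}{n-1}}$, which is an $O\!\lr{(b-a)\ln(2/\delta)/(n-1)}$ contribution; adding the Bernstein remainder $\frac{(b-a)\ln(2/\delta)}{3n}$ should consolidate everything into the single lower-order term $\frac{7(b-a)\ln(2/\delta)}{3(n-1)}$ quoted in the statement.

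The main obstacle I anticipate is the careful bookkeeping of constants, specifically verifying that the McDiarmid slack for $\sqrt{\hat\sigma}$ combines with the Bernstein remainder to give exactly the $7/3$ coefficient rather than something slightly larger. The appearance of $n-1$ (rather than $n$) in the denominator of the lower-order term is a direct footprint of the Bessel correction $1/(n-1)$ used in the definition of $\hat\sigma$, entering through the bounded-differences constant for $\sqrt{\hat\sigma}$.
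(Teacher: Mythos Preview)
The paper does not supply its own proof of this theorem; it is quoted from \citet{MP09} as background. Your three-step architecture---population Bernstein, one-sided tail for $\sigma-\sqrt{\hat\sigma}$, union bound---is exactly how Maurer and Pontil organize their argument, so at the structural level your plan matches the original source.

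The gap is in step~2. With bounded-differences constant $c_i=(b-a)/\sqrt{n-1}$ for $\sqrt{\hat\sigma}$, McDiarmid controls the deviation by
\[
\sqrt{\tfrac{1}{2}\Bigl(\textstyle\sum_{i=1}^n c_i^2\Bigr)\ln\tfrac{2}{\delta}}
\;=\;(b-a)\sqrt{\tfrac{n}{2(n-1)}\ln\tfrac{2}{\delta}},
\]
which is of \emph{constant} order in $n$, not the $(b-a)\sqrt{\ln(2/\delta)/(n-1)}$ you assert. (McDiarmid aggregates $\sum_i c_i^2$ over all $n$ coordinates; the $1/(n-1)$ in each $c_i^2$ is cancelled by that sum.) Substituting this into Bernstein leaves a cross term of order $(b-a)\ln(2/\delta)/\sqrt{n}$, strictly larger than the $(b-a)\ln(2/\delta)/(n-1)$ in the statement, so no amount of constant-chasing can reach $7/3$. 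A second issue: McDiarmid centers $\sqrt{\hat\sigma}$ at $\E[\sqrt{\hat\sigma}]$, but you need to reach $\sigma=\sqrt{\E[\hat\sigma]}$, and Jensen gives $\E[\sqrt{\hat\sigma}]\le\sigma$, which points the wrong way for the inequality $\sigma\le\sqrt{\hat\sigma}+\text{(slack)}$ you are after.

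What \citet{MP09} actually use in place of McDiarmid is a \emph{self-bounding} argument: the rescaled sum of squares $W=(b-a)^{-2}\sum_i(Z_i-\hat p)^2$ satisfies $0\le W-\inf_{z_i}W\le 1$ for each coordinate and $\sum_i\bigl(W-\inf_{z_i}W\bigr)\le W$, and the entropy-method inequality for self-bounding functions then yields directly $\sigma\le\sqrt{\hat\sigma}+(b-a)\sqrt{2\ln(1/\delta)/(n-1)}$ with probability at least $1-\delta$. Swapping your McDiarmid step for this self-bounding concentration is the missing ingredient; the remaining bookkeeping you describe then does deliver the $7/3$ constant.
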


\begin{theorem}[Unexpected Bernstein Inequality~\citep{FGL15,MGG20}]\label{thm:Unexpected-Bernstein}
Let $Z_1,\cdots,Z_n$ be i.i.d.\ random variables bounded from above by $b$ for some $b>0$, and with $\E[Z_i]=p$ for all $i$. Let $\hat{p}=\frac{1}{n}\sum_{i=1}^n Z_i$ be the empirical mean and let $\hat\sigma=\frac{1}{n}\sum_{i=1}^n Z_i^2$ be the empirical mean of the second moments. Let $\psi(u):=u-\ln(1+u)$ for $u > -1$. Then, for any $\gamma\in(0,1/b)$ and any $\delta\in(0,1)$
:
\begin{equation}\label{eq:Unexpected-Bernstein}
\P[p\geq \hat{p} + \frac{\psi(-\gamma b)}{\gamma b^2}\hat{\sigma} + \frac{\ln\frac{1}{\delta}}{\gamma n}]\leq \delta.
\end{equation}
\end{theorem}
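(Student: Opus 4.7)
The plan is a standard Cram\'er--Chernoff argument; all of the work concentrates in a pointwise upper bound on the joint exponential moment of $Z$ and $Z^2$. Multiplying the inequality in \eqref{eq:Unexpected-Bernstein} through by $\gamma n$ and rewriting $\hat p$ and $\hat\sigma$ in terms of sums, the target event becomes
\[
-\gamma \sum_{i=1}^n Z_i - \tfrac{\psi(-\gamma b)}{b^2}\sum_{i=1}^n Z_i^2 \;\geq\; -\gamma n p + \ln\tfrac{1}{\delta},
\]
so it suffices to show that this event has probability at most $\delta$.

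The technical core is the pointwise inequality
\[
\exp\lr{-\gamma x - \tfrac{\psi(-\gamma b)}{b^2}\, x^2} \;\leq\; 1 - \gamma x \qquad \text{for all } x \leq b,\ \gamma \in (0, 1/b).
\]
Taking logs and using the identity $-\ln(1-\gamma x) = \gamma x + \psi(-\gamma x)$, this is equivalent to $\psi(-\gamma x)/x^2 \leq \psi(-\gamma b)/b^2$. Setting $u = \gamma x$, it reduces to showing that $h(u) := \psi(-u)/u^2$ is non-decreasing on $(-\infty, 1)$. On $[0,1)$ this is immediate from the power series $\psi(-u) = \sum_{k \geq 2} u^k/k$, which yields $h(u) = \sum_{j \geq 0} u^j/(j+2)$. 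On $(-\infty, 0)$ a short calculus check suffices: writing $h'(u) = g(u)/u^3$ with $g(u) = u^2/(1-u) + 2u + 2\ln(1-u)$, one computes $g'(u) = u^2/(1-u)^2 \geq 0$ and $g(0) = 0$, so $g(u)$ and $u^3$ share a sign for $u \neq 0$ and $h'(u) \geq 0$ throughout. This monotonicity check is the only non-routine step I anticipate.

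Once the pointwise bound is in place, taking expectations (legal because $Z_i \leq b$ and $\gamma < 1/b$) gives
\[
\E[\exp\lr{-\gamma Z_i - \tfrac{\psi(-\gamma b)}{b^2}\, Z_i^2}] \;\leq\; 1 - \gamma p \;\leq\; e^{-\gamma p},
\]
where the second step uses $1 + t \leq e^t$. The i.i.d.\ assumption makes the joint MGF factorise, so
\[
\E[\exp\lr{-\gamma \sum_{i=1}^n Z_i - \tfrac{\psi(-\gamma b)}{b^2}\sum_{i=1}^n Z_i^2}] \;\leq\; e^{-\gamma n p}.
\]

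Finally, Markov's inequality applied to this non-negative random variable at the threshold $e^{-\gamma n p + \ln(1/\delta)}$ bounds the probability of the rearranged event by $e^{\gamma n p - \ln(1/\delta)} \cdot e^{-\gamma n p} = \delta$, and undoing the rearrangement recovers \eqref{eq:Unexpected-Bernstein}. Apart from the monotonicity lemma for $h$, which is the one place where the precise shape of $\psi$ genuinely enters, every step is routine exponential-moment bookkeeping; I therefore expect the monotonicity argument on the negative axis to be the main obstacle.
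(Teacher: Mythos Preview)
Your proposal is correct and follows essentially the same route as the paper. The paper packages the moment-generating-function bound $\E\bigl[\exp\bigl(\gamma\sum_i(\E[Z_i]-Z_i)-\tfrac{\psi(-b\gamma)}{b^2}\sum_i Z_i^2\bigr)\bigr]\le 1$ as the cited ``Unexpected Bernstein Lemma'' and then applies Markov's inequality; you unpack that lemma by proving the pointwise bound $\exp\bigl(-\gamma x - \tfrac{\psi(-\gamma b)}{b^2}x^2\bigr)\le 1-\gamma x$, which reduces to the monotonicity of $u\mapsto \psi(-u)/u^2$ on $(-\infty,1)$ --- equivalently, that $-u^{-2}\psi(u)$ is increasing on $(-1,\infty)$, which is exactly the fact used in the original proofs of the lemma.
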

To facilitate a comparison with other bounds, Theorem~\ref{thm:Unexpected-Bernstein} provides a slightly different form of the Unexpected Bernstein inequality than the one used by \citet{MGG20}. We provide a proof of the theorem in Appendix~\ref{sec:pf_thm:Unexpected-Bernstein}, which is based on the Unexpected Bernstein Lemma~\citep{FGL15}. We note that an inequality proposed by \citet{CBMS07} can be used to derive a relaxed version of the Unexpected Bernstein inequality, as discussed in Appendix~\ref{sec:comp:Unexpected-Bernstein}.  

\subsection{The Split-kl Inequality}

Let $Z$ be a random variable bounded in a $[a,b]$ interval for some $a,b\in\R$ and let $\mu\in[a,b]$ be a constant. We decompose $Z = \mu + Z^+ - Z^-$, where $Z^+=\max(0,Z-\mu)$ and $Z^- = \max(0, \mu-Z)$. Let $p=\E[Z]$, $p^+=\E[Z^+]$, and $p^-=\E[Z^-]$. For an i.i.d.\ sample $Z_1,\dots,Z_n$ let $\hat p^+=\frac{1}{n}\sum_{i=1}^n Z_i^+$ and $\hat p^-=\frac{1}{n}\sum_{i=1}^n Z_i^-$.

With these definitions we present the split-$\kl$ inequality.

\begin{theorem}[Split-$\kl$ inequality]\label{thm:Split_kl}
Let $Z_1,\dots,Z_n$ be i.i.d.\ random variables in a $[a,b]$ interval for some $a,b\in\R$, then for any $\mu\in [a,b]$ and $\delta\in(0,1)$:
\begin{equation}\label{eq:Split-kl}
\P[p\geq \mu + (b-\mu)\kl^{-1,+}\lr{\frac{\hat p^+}{b-\mu},\frac{1}{n}\ln\frac{2}{\delta}} - (\mu-a)\kl^{-1,-}\lr{\frac{\hat p^-}{\mu-a},\frac{1}{n}\ln\frac{2}{\delta}}]\leq \delta.
\end{equation}
\end{theorem}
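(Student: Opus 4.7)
The plan is to decompose the problem into two one-sided $\kl$ bounds, one upper bound on $p^+$ and one lower bound on $p^-$, and combine them through the identity $p = \mu + p^+ - p^-$.

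First I would verify the decomposition: since exactly one of $Z-\mu$ and $\mu-Z$ is nonnegative for any given $Z$, we have $Z^+-Z^- = Z-\mu$ and therefore $p=\mu+p^+-p^-$ and $\hat p^+ - \hat p^- = \frac{1}{n}\sum_i Z_i - \mu$. Importantly, $Z^+$ takes values in $[0,b-\mu]$ and $Z^-$ takes values in $[0,\mu-a]$, so after rescaling, $Z^+/(b-\mu)$ and $Z^-/(\mu-a)$ are $[0,1]$-valued i.i.d.\ random variables with means $p^+/(b-\mu)$ and $p^-/(\mu-a)$, respectively. (Degenerate cases $\mu=a$ or $\mu=b$ can be handled separately: then one of $Z^+,Z^-$ is almost surely zero and the corresponding term in the bound vanishes.)

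Next I would apply Theorem~\ref{thm:kl} to each of these rescaled samples with confidence parameter $\delta/2$. Inequality~\eqref{eq:kl+} applied to $Z_i^+/(b-\mu)$ gives that with probability at least $1-\delta/2$,
\begin{equation*}
\frac{p^+}{b-\mu}\leq \kl^{-1,+}\lr{\frac{\hat p^+}{b-\mu},\tfrac{1}{n}\ln\tfrac{2}{\delta}},
\end{equation*}
and inequality~\eqref{eq:kl-} applied to $Z_i^-/(\mu-a)$ gives that with probability at least $1-\delta/2$,
\begin{equation*}
\frac{p^-}{\mu-a}\geq \kl^{-1,-}\lr{\frac{\hat p^-}{\mu-a},\tfrac{1}{n}\ln\tfrac{2}{\delta}}.
\end{equation*}
A union bound ensures that both events hold simultaneously with probability at least $1-\delta$. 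Multiplying through by the rescaling constants and substituting into $p=\mu+p^+-p^-$ yields exactly the claimed bound~\eqref{eq:Split-kl}.

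There is no substantive obstacle: the argument is essentially bookkeeping once one notices that $Z^+$ and $Z^-$ are themselves bounded random variables to which the one-sided $\kl$ inequalities directly apply. The only subtlety worth flagging is that $Z^+$ and $Z^-$ are \emph{dependent} (on each data point only one can be nonzero), but this is irrelevant here because we only need marginal tail bounds on each empirical mean separately and combine them by a union bound rather than by a joint concentration argument. The tightness gain advertised in the paper comes precisely from the fact that when $Z$ is concentrated near $\mu$, both $p^+$ and $p^-$ are small, and the $\kl$ bound on a small-mean Bernoulli-like quantity is itself small, mimicking the variance-adaptive behaviour of Empirical Bernstein.
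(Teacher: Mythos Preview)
Your proposal is correct and follows exactly the paper's approach: decompose $p=\mu+p^+-p^-$, apply the upper $\kl$ bound \eqref{eq:kl+} to the rescaled $Z^+/(b-\mu)$ and the lower $\kl$ bound \eqref{eq:kl-} to the rescaled $Z^-/(\mu-a)$, each at level $\delta/2$, and union-bound. The paper's proof is the same argument written as a single two-line chain of inequalities; your additional remarks on the degenerate cases $\mu\in\{a,b\}$ and on the irrelevance of the dependence between $Z^+$ and $Z^-$ are sound and simply more explicit than the paper.
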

\begin{proof}
\begin{align*}
&\P[p\geq \mu + (b-\mu)\kl^{-1,+}\lr{\frac{\hat p^+}{b-\mu},\frac{1}{n}\ln\frac{2}{\delta}} - (\mu-a)\kl^{-1,-}\lr{\frac{\hat p^-}{\mu-a},\frac{1}{n}\ln\frac{2}{\delta}}]\\
&\qquad\leq \P[p^+ \geq (b-\mu)\kl^{-1,+}\lr{\frac{\hat p^+}{b-\mu},\frac{1}{n}\ln\frac{2}{\delta}}] + \P[p^-\leq (\mu-a)\kl^{-1,-}\lr{\frac{\hat p^-}{\mu-a},\frac{1}{n}\ln\frac{2}{\delta}}]\\
&\qquad\leq \delta,
\end{align*}
where the last inequality follows by application of the $\kl$ upper and lower bounds from Theorem~\ref{thm:kl} to the first and second terms in the middle line, respectively.
\end{proof}

For ternary random variables the best choice is to take $\mu$ to be the middle value, then the resulting $Z^+$ and $Z^-$ are binary and the corresponding $\kl$ upper and lower bounds on $p^+$ and $p^-$ are tight, and the resulting split-$\kl$ bound is tight. The inequality can be applied to any bounded random variables, but same way as the $\kl$ inequality is not necessarily a good choice for bounded random variables, if the distribution is not binary, the split-$\kl$ in not necessarily a good choice if the distribution is not ternary.

\subsection{Empirical Comparison}\label{sec:empirical_comparison}

We present an empirical comparison of the tightness of the above four concentration inequalities: the kl, the Empirical Bernstein, the Unexpected Bernstein, and the split-$\kl$. We take $n$ i.i.d.\ samples $Z_1,\dots,Z_n$ taking values in $\{-1,0,1\}$. The choice is motivated both by instructiveness of presentation and by subsequent applications to excess losses. We let $p_{-1} = \P[Z=-1]$, $p_0=\P[Z=0]$, and $p_1 = \P[Z=1]$, where $p_{-1}+p_0+p_1 = 1$. Then $p = \E[Z] = p_1-p_{-1}$. We also let $\hat p = \frac{1}{n}\sum_{i=1}^n Z_i$. 

In Figure~\ref{fig:main:plain_plots} we plot the difference between the bounds on $p$ given by the inequalities \eqref{eq:kl+}, \eqref{eq:EBernstein}, \eqref{eq:Unexpected-Bernstein}, and \eqref{eq:Split-kl}, and $\hat p$. Lower values in the plot correspond to tighter bounds. To compute the $\kl$ bound we first rescale the losses to the $[0,1]$ interval, and then rescale the bound back to the $[-1,1]$ interval. For the Empirical Bernstein bound we take $a=-1$ and $b=1$. For the Unexpected Bernstein bound we take a grid of $\gamma \in \{1/(2b), \cdots, 1/(2^k b)\}$ for $k=\lceil \log_2(\sqrt{n/\ln(1/\delta)}/2) \rceil$ and a union bound over the grid, as proposed by \citet{MGG20}. For the split-$\kl$ bound we take $\mu$ to be the middle value, 0, of the ternary random variable. In the experiments we take $\delta = 0.05$, and truncate the bounds at $1$.

In the first experiment, presented in Figure~\ref{fig:sim:n100_rate0.5}, we take $p_{-1} = p_1 = (1-p_0)/2$ and plot the difference between the values of the bounds and $\hat p$ as a function of $p_0$. For $p_0 = 0$ the random variable $Z$ is Bernoulli and, as expected, the $\kl$ inequality performs the best, followed by split-$\kl$, and then Unexpected Bernstein. As $p_0$ grows closer to 1, the variance of $Z$ decreases and, also as expected, the $\kl$ inequality falls behind, whereas split-$\kl$ and Unexpected Bernstein go closely together. Empirical Bernstein falls behind all other bounds throughout most of the range, except slightly outperforming $\kl$ when $p_0$ gets very close to 1. 

In the second experiment, presented in Figure~\ref{fig:sim:n100_rate0.99}, we take a skewed random variable with $p_1=0.99(1-p_0)$ and $p_{-1}=0.01(1-p_0)$, and again plot the difference between the values of the bounds and $\hat p$ as a function of $p_0$. This time the $\kl$ also starts well for $p_0$ close to zero, but then falls behind due to its inability of properly handling the values inside the interval. Unexpected Bernstein exhibits the opposite trend due to being based on uncentered second moment, which is high when $p_0$ is close to zero, even though the variance is small in this case. Empirical Bernstein lags behind all other bounds for most of the range due to poor constants, whereas split-$\kl$ matches the tightest bounds, the $\kl$ and Unexpected Bernstein, at the endpoints of the range of $p_0$, and outperforms all other bounds in the middle of the range, around $p_0=0.6$, due to being able to exploit the combinatorics of the problem.

The experiments demonstrate that for ternary random variables the split-$\kl$ is a powerful alternative to existing concentration of measure inequalities. To the best of our knowledge, this is also the first empirical evaluation of the Unexpected Bernstein inequality, and it shows that in many cases it is also a powerful inequality. We also observe that in most settings the Empirical Bernstein is weaker than the other three inequalities we consider. Numerical evaluations in additional settings are provided in Appendix~\ref{app:sec:empirical-comparison}.
\begin{figure}[t]
	\begin{subfigure}[b]{.45\textwidth}
		\includegraphics[width=\textwidth]{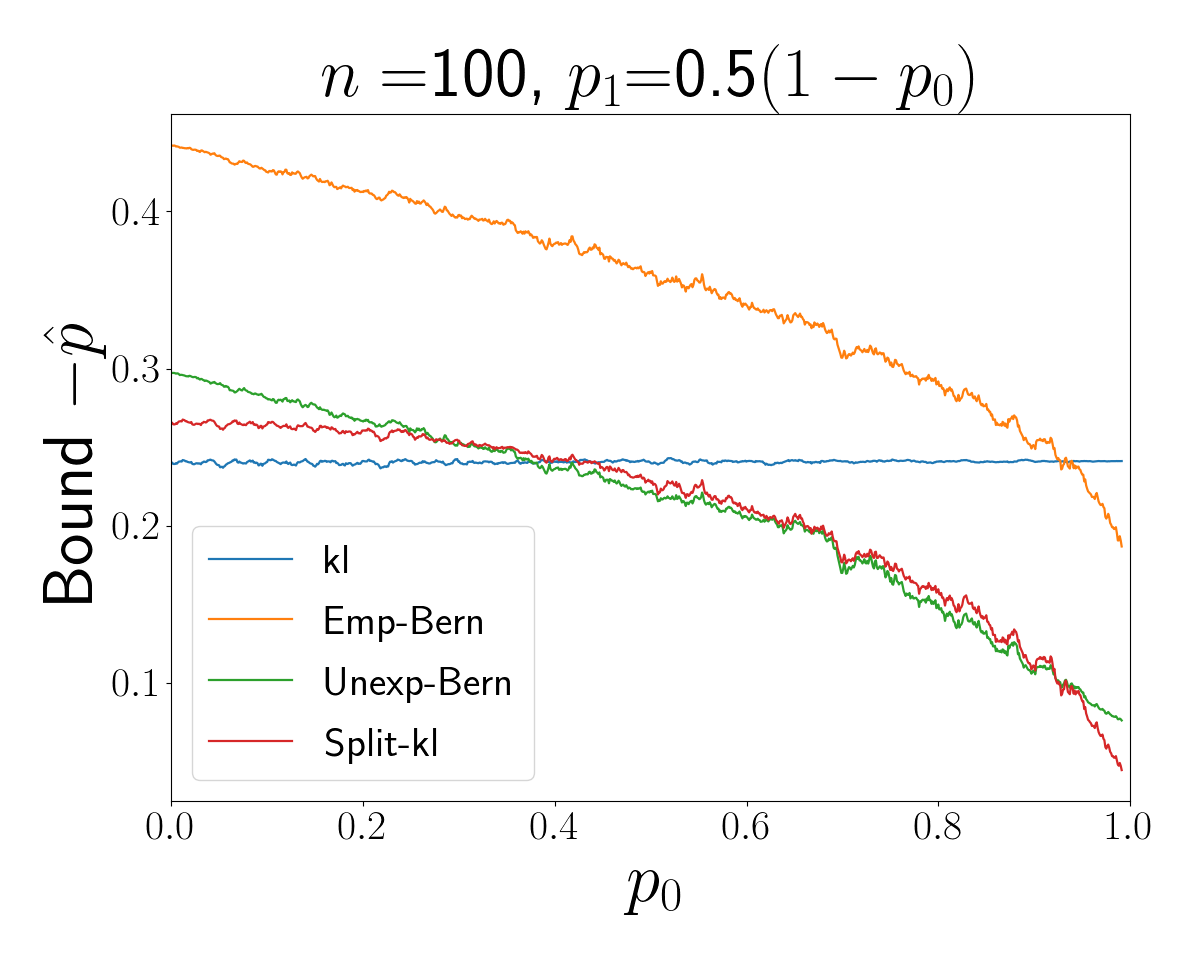}
		\caption{Comparison of the concentration bounds with $n=100$, $\delta=0.05$ and  $p_{-1}=p_1=0.5(1-p_0)$.}
		\label{fig:sim:n100_rate0.5}
	\end{subfigure}
	\hfill
	\begin{subfigure}[b]{.45\textwidth}
		\includegraphics[width=\textwidth]{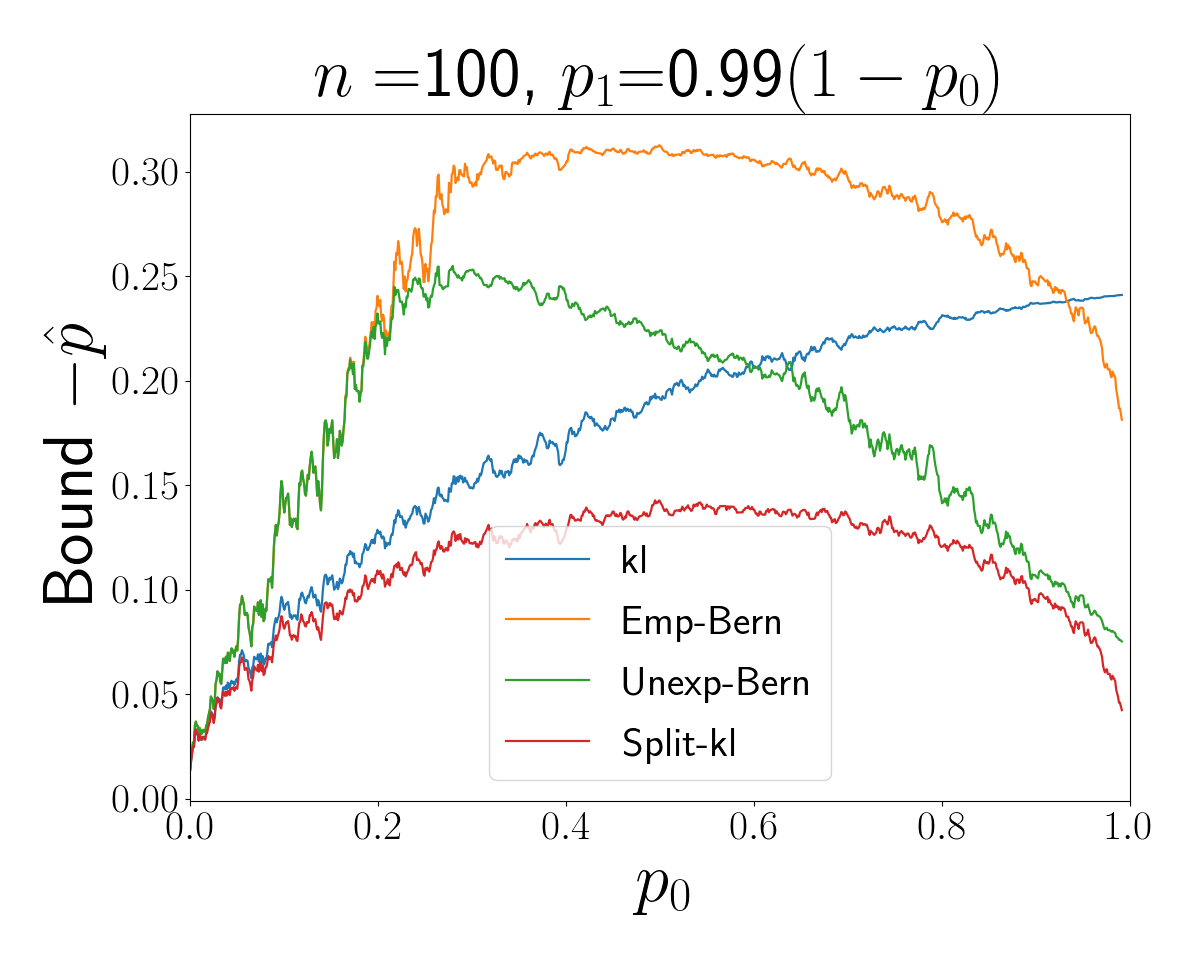}
		\caption{Comparison of the concentration bounds with $n=100$, $\delta=0.05$, $p_1=0.99(1-p_0)$, and $p_{-1}=0.01(1-p_0)$.}
		\label{fig:sim:n100_rate0.99}
	\end{subfigure}
	\caption{Empirical comparison of the concentration bounds. }
	\label{fig:main:plain_plots}
\end{figure}

\section{PAC-Bayesian Inequalities}

Now we elevate the basic concentration of measure inequalities to the PAC-Bayesian domain. We start with the supervised learning problem setup, then provide a background on existing PAC-Bayesian inequalities, and finish with presentation of the PAC-Bayes-split-$\kl$ inequality.

\subsection{Supervised Learning Problem Setup and Notations}
Let $\cX$ be a sample space, $\cY$ be a label space, and let $S=\lrc{(X_i,Y_i)}_{i=1}^n$ be an i.i.d.\  sample drawn according to an unknown distribution $\cD$ on the product-space $\cX\times \cY$.
Let $\cH$ be a hypothesis space containing hypotheses $h:\cX\rightarrow \cY$. The quality of a hypothesis $h$ is measured using the zero-one loss $\ell(h(X),Y)=\1[h(X)\neq Y]$, where $\1[\cdot]$ is the indicator function.
The expected loss of $h$ is denoted by $L(h)=\E_{(X,Y)\sim \cD}\lrs{\ell(h(X),Y)}$, and the empirical loss of $h$ on a sample $S$ is denoted by $\hat{L}(h,S)=\frac{1}{|S|}\sum_{(X,Y)\in S}\ell(h(X),Y)$. We use $\E_\cD[\cdot]$ as a shorthand for $\E_{(X,Y)\sim \cD}[\cdot]$.

PAC-Bayesian bounds bound the generalization error of Gibbs prediction rules. For each input $X\in\cX$, Gibbs prediction rule associated with a distribution $\rho$ on $\cH$ randomly draws a hypothesis $h\in\cH$ according to $\rho$ and predicts $h(X)$. The expected loss of the Gibbs prediction rule is $\E_{h\sim\rho}[L(h)]$ and the empirical loss is $\E_{h\sim\rho}[\hat{L}(h,S)]$. We use $\E_\rho[\cdot]$ as a shorthand for $\E_{h\sim\rho}[\cdot]$.

\subsection{PAC-Bayesian Analysis Background}

Now we present a brief background on the relevant results from the PAC-Bayesian analysis.

\paragraph{PAC-Bayes-$\kl$ Inequality}
The PAC-Bayes-$\kl$ inequality cited below is one of the tightest known generalization bounds on the expected loss of the Gibbs prediction rule.
\begin{theorem}[PAC-Bayes-$\kl$ Inequality, \citealp{See02}, \citealp{Mau04}] \label{thm:pbkl}
For any probability distribution $\pi$ on ${\cal H}$ that is independent of $S$ and any $\delta \in (0,1)$:
\begin{equation}
\label{eq:pbkl}
\P[\exists \rho\in{\mathcal{P}}: \kl\lr{\E_\rho[\hat L(h,S)]\middle\|\E_\rho\lrs{L(h)}} \geq \frac{\KL(\rho\|\pi) + \ln(2 \sqrt n/\delta)}{n}]\leq \delta,
\end{equation}
where $\mathcal{P}$ is the set of all possible probability distributions on $\cH$ that can depend on $S$.
\end{theorem}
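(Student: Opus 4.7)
The plan is to follow the standard PAC-Bayes template, combining a per-hypothesis exponential moment bound of Maurer with the Donsker-Varadhan change-of-measure inequality and Jensen's inequality for $\kl$. The starting ingredient is Maurer's lemma, which states that for any fixed hypothesis $h \in \cH$ with $L(h) \in [0,1]$,
\[
\E_S\bigl[e^{n\kl(\hat L(h,S)\,\|\,L(h))}\bigr] \leq 2\sqrt{n}.
\]
Since the prior $\pi$ is independent of $S$, I can integrate under $\pi$ on both sides and apply Fubini to obtain $\E_S \E_\pi\bigl[e^{n\kl(\hat L(h,S)\,\|\,L(h))}\bigr] \leq 2\sqrt{n}$.

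Next I would apply Markov's inequality to the nonnegative random variable $\E_\pi\bigl[e^{n\kl(\hat L(h,S)\,\|\,L(h))}\bigr]$, which yields that with probability at least $1-\delta$ over the draw of $S$,
\[
\E_\pi\bigl[e^{n\kl(\hat L(h,S)\,\|\,L(h))}\bigr] \leq \frac{2\sqrt{n}}{\delta}.
\]
On this high-probability event I would then invoke the Donsker-Varadhan change-of-measure inequality with the $S$-measurable function $f(h) = n\kl(\hat L(h,S)\,\|\,L(h))$. This gives, simultaneously for every posterior $\rho \in \mathcal{P}$ (in particular those that may depend on $S$),
\[
\E_\rho\bigl[n\kl(\hat L(h,S)\,\|\,L(h))\bigr] \leq \KL(\rho\|\pi) + \ln \E_\pi\bigl[e^{n\kl(\hat L(h,S)\,\|\,L(h))}\bigr] \leq \KL(\rho\|\pi) + \ln\frac{2\sqrt{n}}{\delta}.
\]

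To convert this bound on a $\rho$-averaged $\kl$ into a $\kl$ of $\rho$-averaged losses, I would use joint convexity of $\kl(p\|q)$ in $(p,q)$ together with Jensen's inequality, giving
\[
\kl\bigl(\E_\rho[\hat L(h,S)]\,\big\|\,\E_\rho[L(h)]\bigr) \leq \E_\rho\bigl[\kl(\hat L(h,S)\,\|\,L(h))\bigr].
\]
Chaining this with the previous display and dividing by $n$ produces precisely \eqref{eq:pbkl}, holding uniformly over $\rho$ on the high-probability event.

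The only genuinely nontrivial step is Maurer's exponential moment estimate; I would cite it as a known lemma. Everything else is mechanical assembly (Fubini, Markov, change of measure, Jensen). As the paper already flags in the paragraph preceding the theorem, this step is also exactly where the extra $\ln(2\sqrt{n})$ factor enters: the change-of-measure argument forces us to control the full expectation of $e^{n\kl(\hat L \| L)}$ rather than just its tail, and that expectation is $\Theta(\sqrt{n})$ even for Bernoulli losses, which is why the sharper Chernoff-based proof of the plain $\kl$ inequality does not transfer to the PAC-Bayes setting.
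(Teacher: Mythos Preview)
Your proposal is correct and follows exactly the approach the paper attributes to the result: the paper does not give a standalone proof of Theorem~\ref{thm:pbkl} but explicitly identifies Maurer's inequality $\E[e^{n\kl(\hat p\|p)}]\leq 2\sqrt n$ as its basis, and the same assembly (Lemma~\ref{lem:Note-PB-Maurer} plus the change-of-measure Lemma~\ref{lem:PAC-Bayes} plus convexity of $\kl$) is carried out verbatim in the proof of Theorem~\ref{thm:together_refined} in Appendix~\ref{sec:pf_thm:together_refined}. Nothing in your argument deviates from this template.
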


The following relaxation of the PAC-Bayes-$\kl$ inequality based on Refined Pinsker's relaxation of the $\kl$ divergence helps getting some intuition about the bound \citep{McA03}. With probability at least $1-\delta$, for all $\rho\in\mathcal{P}$ we have
\begin{equation}\label{eq:pbkl-relaxed}
\E_\rho[L(h)] \leq \E_\rho[\hat{L}(h,S)] + \sqrt{2\E_\rho[\hat{L}(h,S)]\frac{\KL(\rho\|\pi) + \ln(2 \sqrt n/\delta)}{n}} + \frac{2\lr{\KL(\rho\|\pi) + \ln(2 \sqrt n/\delta)}}{n}.
\end{equation}
If $\E_\rho[\hat{L}(h,S)]$ is close to zero, the middle term in the inequality above vanishes, leading to so-called "fast convergence rates" (convergence of $\E_\rho[\hat{L}(h,S)]$ to $\E_\rho[L(h)]$ at the rate of $1/n$). However, achieving low $\E_\rho[\hat{L}(h,S)]$ is not always possible \citep{DR17,ZVAAO19}. Subsequent research in PAC-Bayesian analysis has focused on two goals: (1) achieving fast convergence rates when the variance of prediction errors is low (and not necessarily the errors themselves), and (2) reducing the $\KL(\rho\|\pi)$ term, which may be quite large for large hypothesis spaces. For the first goal \citet{TS13} developed the PAC-Bayes-Empirical-Bernstein inequality and \citet{MGG20} proposed to use excess losses and also derived the alternative PAC-Bayes-Unexpected-Bernstein inequality. For the second goal \citet{APS07} suggested to use informed priors and \citet{MGG20} perfected the idea by proposing to average over "forward" and "backward" construction with informed prior. Next we explain the ideas behind the excess losses and informed priors in more details.  


\paragraph{Excess Losses}
Let $h^*$ be a reference prediction rule that is independent of $S$. 
%
We define the excess loss of a prediction rule $h$ with respect to the reference $h^*$ by
\[
\Delta_\ell(h(X),h^*(X),Y) = \ell(h(X),Y) - \ell(h^*(X),Y).
\] 
If $\ell$ is the zero-one loss, the excess loss naturally gives rise to ternary random variables, but it is well-defined for any real-valued loss function. We use $\Delta_L(h,h^*)=\E_D[\Delta_\ell(h(X),h^*(X),Y)]
=L(h)-L(h^*)$ to denote the expected excess loss of $h$ relative to $h^*$ and $\Delta_{\hat{L}}(h,h',S)=\frac{1}{|S|}\sum_{(X,Y)\in S} \Delta_\ell(h(X),h^*(X),Y) = \hat L(h) - \hat L(h^*)$ to denote the empirical excess loss of $h$ relative to $h^*$. The expected loss of a Gibbs prediction rule can then be written as
\[
\E_\rho[L(h)] = \E_\rho[\Delta_L(h,h^*)] + L(h^*).
\]
A bound on $\E_\rho[L(h)]$ can thus be decomposed into a summation of a PAC-Bayes bound on $\E_\rho[\Delta_L(h,h^*)]$ and a bound on $L(h^*)$. When the variance of the excess loss is small, we can use tools that exploit small variance, such as the  PAC-Bayes-Empirical-Bernstein, PAC-Bayes-Unexpected-Bernstein, or PAC-Bayes-Split-$\kl$ inequalities proposed below, to achieve fast convergence rates for the excess loss. Bounding $L(h^*)$ involves just a single prediction rule and does not depend on the value of $\KL(\rho\|\pi)$. We note that it is essential that the variance and not just the magnitude of the excess loss is small. For example, if the excess losses primarily take values in $\{-1,1\}$ and average out to zero, fast convergence rates are impossible. 

\paragraph{Informed Priors}

The idea behind informed priors is to split the data into two subsets, $S=S_1\cup S_2$, and to use $S_1$ to learn a prior $\pi_{S_1}$, and then use it to learn a posterior on $S_2$ \citet{APS07}. Note that since the size of $S_2$ is smaller than the size of $S$, this approach gains in having potentially smaller $\KL(\rho\|\pi_{S_1})$, but loses in having a smaller sample size in the denominator of the PAC-Bayes bounds. The balance between the advantage and disadvantage depends on the data: for some data sets it strengthens the bounds, but for some it weakens them. \citet{MGG20} perfected the approach by proposing to use it in the "forward" and "backward" direction and average over the two. Let $S_1$ and $S_2$ be of equal size. The "forward" part uses $S_1$ to train $\pi_{S_1}$ and then computes a posterior on $S_2$, while the "backward" part uses $S_2$ to train $\pi_{S_2}$ and then computes a posterior on $S_1$. Finally, the two posteriors are averaged with equal weight and the $\KL$ term becomes $\frac{1}{2}\lr{\KL(\rho\|\pi_{S_1})+\KL(\rho\|\pi_{S_2})}$. See \citep{MGG20} for the derivation.

\paragraph{Excess Losses and Informed Priors} Excess losses and informed priors make an ideal combination. If we split $S$ into two equal parts, $S=S_1\cup S_2$, we can use $S_1$ to train both a reference prediction rule $h_{S_1}$ and a prior $\pi_{S_1}$, and then learn a PAC-Bayes posterior on $S_2$, and the other way around. By combining the "forward" and "backward" approaches we can write
%
\begin{equation}\label{eq:excess&inform}
    \E_\rho[L(h)] = \frac{1}{2}\E_\rho[\Delta_L(h,h_{S_1})] + \frac{1}{2}\E_\rho[\Delta_L(h,h_{S_2})] + \frac{1}{2}\lr{L(h_{S_1})+L(h_{S_2})},
\end{equation}
and we can use PAC-Bayes to bound the first term using the prior $\pi_{S_1}$ and the data in $S_2$, and to bound the second term using the prior $\pi_{S_2}$ and the data in $S_1$, and we can bound $L(h_{S_1})$ and $L(h_{S_2})$ using the "complementary" data in $S_2$ and $S_1$, respectively.


\paragraph{PAC-Bayes-Empirical-Bernstein Inequalities}
The excess losses are ternary random variables taking values in $\{-1,0,1\}$ and, as we have already discussed, the $\kl$ inequality is not well-suited for them. PAC-Bayesian inequalities tailored for non-binary random variables were derived by \citet{SLCB+12}, \citet{TS13}, \citet{WMLIS21}, and \citet{MGG20}. \citet{SLCB+12} derived the PAC-Bayes-Bernstein oracle bound, which assumes knowledge of the variance. \citet{TS13} made it into an empirical bound by deriving the PAC-Bayes-Empirical-Bernstein bound for the variance and plugging it into the PAC-Bayes-Bernstein bound of \citeauthor{SLCB+12}. \citet{WMLIS21} derived an oracle PAC-Bayes-Bennett inequality, which again assumes oracle knowledge of the variance, and showed that it is always at least as tight as the PAC-Bayes-Bernstein, and then also plugged in the PAC-Bayes-Empirical-Bernstein bound on the variance. \citet{MGG20} derived the PAC-Bayes-Unexpected-Bernstein inequality, which directly uses the empirical second moment. Since we have already shown that the Unexpected Bernstein inequality is tighter than the Empirical Bernstein, and since the approach of \citeauthor{WMLIS21} requires a combination of two inequalities, PAC-Bayes-Empirical-Bernstein for the variance and PAC-Bayes-Bennett for the loss, whereas the approach of \citeauthor{MGG20} only makes a single application of PAC-Bayes-Unexpected-Bernstein, we only compare our work to the latter. 

We cite the inequality of \citet{MGG20}, which applies to an arbitrary loss function. 
We use $\tilde \ell$ and matching tilde-marked quantities to distinguish it from the zero-one loss $\ell$. For any $h\in\cH$, let $\tilde L(h)=\E_D[\tilde \ell(h(X),Y)]$ be the expected tilde-loss of $h$, 
and let $\hat {\tilde L}(h,S) = \frac{1}{|S|}\sum_{(X,Y)\in S} \tilde \ell(h(X),Y)$ be the empirical tilde-loss of $h$ on a sample $S$.

\begin{theorem}[PAC-Bayes-Unexpected-Bernstein inequality~\citep{MGG20}]\label{thm:PB-unexpected-bernstein}
Let $\tilde \ell(\cdot,\cdot)$ be an arbitrary loss function bounded from above by $b$ for some $b>0$, and assume that $\hat{\tilde \V}(h,S)=\frac{1}{|S|}\sum_{(X,Y)\in S}\tilde \ell(h(X),Y)^2$ is finite for all $h$. Let  $\psi(u):=u-\ln(1+u)$ for $u>-1$. Then for any distribution $\pi$ on $\cH$ that is independent of $S$, any $\gamma \in(0,1/b)$, and any $\delta\in(0,1)$
:
\[
\P[\exists \rho\in\mathcal{P}: \E_\rho[\tilde L(h)] \geq \E_\rho[\hat{\tilde L}(h,S)] + \frac{\psi(-\gamma b)}{\gamma b^2}\E_\rho[\hat{\tilde \Var}(h,S)] + \frac{\KL(\rho\|\pi)+\ln \frac{1}{\delta}}{\gamma n}]\leq\delta,
\]
where $\mathcal{P}$ is the set of all possible probability  distributions on $\cH$ that can depend on $S$.
\end{theorem}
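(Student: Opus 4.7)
The plan is to follow the standard PAC-Bayesian recipe. I would start from a single-hypothesis exponential moment inequality (the Unexpected Bernstein Lemma of \citet{FGL15}, which also underlies Theorem~\ref{thm:Unexpected-Bernstein}), tensorize it across the $n$ i.i.d.\ samples, and then combine Markov's inequality with the Donsker-Varadhan change-of-measure identity to obtain a bound that holds uniformly over all data-dependent posteriors $\rho$.

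Concretely, the Unexpected Bernstein Lemma guarantees that for any random variable $X \leq b$ and any $\gamma \in (0, 1/b)$,
$$\E\lrs{\exp\lr{-\gamma X - \frac{\psi(-\gamma b)}{b^2} X^2}} \leq e^{-\gamma \E[X]}.$$
For each fixed $h \in \cH$ I would apply this to $X = \tilde \ell(h(X_i), Y_i) \leq b$ for each of the $n$ independent samples and multiply the resulting inequalities, obtaining (after multiplying through by $e^{\gamma n \tilde L(h)}$)
$$\E_S\lrs{\exp\lr{\gamma n \tilde L(h) - \gamma n \hat{\tilde L}(h,S) - \frac{n \psi(-\gamma b)}{b^2}\hat{\tilde \V}(h,S)}} \leq 1.$$
Since $\pi$ is independent of $S$, Fubini's theorem lets me swap the expectations, and Markov's inequality then yields, with probability at least $1-\delta$ over $S$,
$$\ln \E_\pi\lrs{\exp\lr{\gamma n \tilde L(h) - \gamma n \hat{\tilde L}(h,S) - \frac{n \psi(-\gamma b)}{b^2}\hat{\tilde \V}(h,S)}} \leq \ln\tfrac{1}{\delta}.$$

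Finally, the Donsker-Varadhan change-of-measure inequality $\E_\rho[f(h)] \leq \KL(\rho\|\pi) + \ln \E_\pi[e^{f(h)}]$, applied with $f(h) = \gamma n \tilde L(h) - \gamma n \hat{\tilde L}(h,S) - \frac{n \psi(-\gamma b)}{b^2}\hat{\tilde \V}(h,S)$, promotes the above inequality to hold simultaneously for all $\rho \in \mathcal{P}$. Dividing through by $\gamma n > 0$ and rearranging produces the stated conclusion.

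The main technical ingredient is the Unexpected Bernstein Lemma itself, which packs the uncentered second moment $X^2$ inside the exponent in such a way that, after tensorization, the empirical uncentered second moment $\hat{\tilde \V}(h,S)$ appears cleanly in the final bound; this is what distinguishes the result from an ordinary Bernstein-type PAC-Bayes bound, and what allows $\gamma$ to appear as a free tuning parameter rather than being coupled to the (typically unknown) variance. Everything else is the familiar PAC-Bayes template and should go through without surprises.
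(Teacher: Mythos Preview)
Your proposal is correct and follows essentially the same route as the paper. The paper packages the Markov-plus-Donsker--Varadhan step as a single cited lemma (the PAC-Bayes Lemma of \citet{TS13}, Lemma~\ref{lem:PAC-Bayes}), applies it with $f_n(h,S)=\gamma n(\tilde L(h)-\hat{\tilde L}(h,S))-\tfrac{\psi(-b\gamma)}{b^2}\,n\,\hat{\tilde\V}(h,S)$, and then invokes the Unexpected Bernstein Lemma (Lemma~\ref{lem:unepected-bernstein}) together with the independence of $\pi$ and $S$ to bound the log-moment term by zero---exactly the ingredients and order of operations you describe, just with the change-of-measure/Markov combination quoted rather than spelled out.
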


In optimization of the bound, we take the same grid of $\gamma \in \{1/(2b), \cdots, 1/(2^k b)\}$ for $k=\lceil \log_2(\sqrt{n/\ln(1/\delta)}/2) \rceil$ and a union bound over the grid, as we did for Theorem~\ref{thm:Unexpected-Bernstein}.


\subsection{PAC-Bayes-Split-$\kl$ Inequality}\label{sec:PBSkl}

Now we present our PAC-Bayes-Split-$\kl$ inequality. 
For an arbitrary loss function $\tilde{\ell}$ taking values in a $[a,b]$ interval for some $a,b\in\R$, let $\tilde{\ell}^+:=\max\{0, \tilde{\ell}-\mu\}$ and $\tilde{\ell}^-:=\max\{0,\mu-\tilde{\ell}\}$ for some $\mu\in[a,b]$. For any $h\in\cH$, let $\tilde{L}^+(h)=\E_D[\tilde{\ell}^+(h(X), Y)]$ and $\tilde{L}^-(h)=\E_D[\tilde{\ell}^-(h(X), Y)]$. The corresponding empirical losses are denoted by $\hat{\tilde{L}}^+(h,S)=\frac{1}{n}\sum_{i=1}^n \tilde{\ell}^+(h(X_i),Y_i)$ and $\hat{\tilde{L}}^-(h,S)=\frac{1}{n}\sum_{i=1}^n \tilde{\ell}^-(h(X_i),Y_i)$. 

\begin{theorem}[PAC-Bayes-Split-kl Inequality]\label{thm:pac-bayes-split-kl-inequality}
Let $\tilde{\ell}(\cdot,\cdot)$ be an arbitrary loss function taking values in a $[a,b]$ interval for some $a,b\in\R$. Then for any distribution $\pi$ on $\cH$ that is independent of $S$, any $\mu\in[a,b]$, and any $\delta\in(0,1)$
:
\begin{align*}
    \P\Bigg[\exists\rho\in\mathcal{P}:\E_\rho[\tilde{L}(h)]&\geq \mu + (b-\mu)\kl^{-1,+}\lr{\frac{\E_\rho[\hat{\tilde{L}}^+(h,S)]}{b-\mu},\frac{\KL(\rho\|\pi)+\ln\frac{4\sqrt{n}}{\delta}}{n}}\\
    &\qquad\ - (\mu-a)\kl^{-1,-}\lr{\frac{\E_\rho[\hat{\tilde{L}}^-(h,S)]}{\mu-a},\frac{\KL(\rho\|\pi)+\ln\frac{4\sqrt{n}}{\delta}}{n}}\Bigg]\leq \delta,
\end{align*}
where $\mathcal{P}$ is the set of all possible probability distributions on $\cH$ that can depend on $S$.
\end{theorem}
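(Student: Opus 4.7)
The plan is to lift the proof of Theorem~\ref{thm:Split_kl} to the PAC-Bayesian setting, exactly replacing each invocation of the basic $\kl$ inequality with an invocation of the PAC-Bayes-$\kl$ inequality (Theorem~\ref{thm:pbkl}). The key structural observation, already used in the non-PAC-Bayes proof, is the pointwise identity $\tilde\ell(h(X),Y) = \mu + \tilde\ell^+(h(X),Y) - \tilde\ell^-(h(X),Y)$, which after taking expectations under $\cD$ and then under $\rho$ yields
\begin{equation*}
\E_\rho[\tilde L(h)] \;=\; \mu + \E_\rho[\tilde L^+(h)] - \E_\rho[\tilde L^-(h)].
\end{equation*}
Thus any simultaneous upper bound on $\E_\rho[\tilde L^+(h)]$ and lower bound on $\E_\rho[\tilde L^-(h)]$, valid for all $\rho\in\mathcal P$, will translate into an upper bound on $\E_\rho[\tilde L(h)]$.

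First I would rescale. Since $\tilde\ell \in [a,b]$ and $\mu\in[a,b]$, the loss $\tilde\ell^+ / (b-\mu)$ takes values in $[0,1]$, and likewise $\tilde\ell^-/(\mu-a) \in [0,1]$. Apply Theorem~\ref{thm:pbkl} to the rescaled loss $\tilde\ell^+/(b-\mu)$ at confidence $\delta/2$: this gives, with probability at least $1-\delta/2$ over $S$, the inequality
\begin{equation*}
\kl\!\lr{\frac{\E_\rho[\hat{\tilde L}^+(h,S)]}{b-\mu}\,\middle\|\,\frac{\E_\rho[\tilde L^+(h)]}{b-\mu}} \;\le\; \frac{\KL(\rho\|\pi) + \ln(4\sqrt n/\delta)}{n}
\end{equation*}
for every $\rho\in\mathcal P$, which by definition of $\kl^{-1,+}$ yields the upper bound $\E_\rho[\tilde L^+(h)] \le (b-\mu)\,\kl^{-1,+}(\cdot,\cdot)$ displayed in the theorem. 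Independently, apply Theorem~\ref{thm:pbkl} to $\tilde\ell^-/(\mu-a)$ at confidence $\delta/2$ and read off the lower tail via $\kl^{-1,-}$, obtaining $\E_\rho[\tilde L^-(h)] \ge (\mu-a)\,\kl^{-1,-}(\cdot,\cdot)$ simultaneously for all $\rho\in\mathcal P$ with probability at least $1-\delta/2$.

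The final step is to combine the two events by a union bound (contributing the factor of $2$ inside the logarithm, so that $\ln(2\sqrt n/(\delta/2)) = \ln(4\sqrt n/\delta)$) and to substitute the two PAC-Bayes-$\kl$ bounds into the decomposition $\E_\rho[\tilde L(h)] = \mu + \E_\rho[\tilde L^+(h)] - \E_\rho[\tilde L^-(h)]$. On the joint $1-\delta$ event this produces exactly the bound in the statement, uniformly over $\rho\in\mathcal P$.

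There is no substantial obstacle here beyond bookkeeping: the argument is essentially the proof of Theorem~\ref{thm:Split_kl} with Theorem~\ref{thm:pbkl} in place of Theorem~\ref{thm:kl}. The only points that require a little care are (i) confirming that the $\rho$-expectation commutes with the split (this is immediate from linearity, since the split is pointwise in $(h,X,Y)$ and does not depend on $\rho$), (ii) handling the two degenerate cases $\mu=a$ and $\mu=b$, where the corresponding term drops out of the decomposition and only a single PAC-Bayes-$\kl$ bound at confidence $\delta$ is needed, and (iii) ensuring that the ``for all $\rho\in\mathcal P$'' quantifier is preserved through the union bound, which it is because both events from Theorem~\ref{thm:pbkl} are already uniform in $\rho$.
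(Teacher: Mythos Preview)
Your proposal is correct and follows exactly the same approach as the paper: decompose $\E_\rho[\tilde L(h)] = \mu + \E_\rho[\tilde L^+(h)] - \E_\rho[\tilde L^-(h)]$, apply the PAC-Bayes-$\kl$ inequality (Theorem~\ref{thm:pbkl}) at confidence $\delta/2$ to each rescaled piece, and take a union bound. The paper's proof is in fact just a two-sentence sketch of precisely this argument, so your write-up is a faithful (and more detailed) expansion of it.
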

\begin{proof}
We have $\E_\rho[\tilde{L}(h)]=\mu+\E_\rho[\tilde{L}^+(h)]-\E_\rho[\tilde{L}^-(h)]$. Similar to the proof of Theorem~\ref{thm:Split_kl}, we take a union bound of PAC-Bayes-$\kl$ upper bound on $\E_\rho[\tilde{L}^+(h)]$ and PAC-Bayes-$\kl$ lower bound on $\E_\rho[\tilde{L}^-(h)]$.
\end{proof}

\subsection{PAC-Bayes-split-$\kl$ with Excess Loss and Informed Prior}

Looking back at the expected loss decomposition in equation \eqref{eq:excess&inform}, we can use PAC-Bayes-split-$\kl$ to bound the first two terms and a bound on the binomial tail distribution to bound the last term.
For $n$ i.i.d.\ Bernoulli random variables $Z_1,\dots, Z_n$ with bias $p\in(0,1)$, we define the binomial tail distribution $\Bin(n,k,p)=\P[\sum_{i=1}^n X_i\leq k]$ and its inverse $\Bin^{-1}(n,k,\delta)=\max\lrc{p: p\in [0,1] \text{ and } \Bin(n,k,p)\geq \delta}$. The following theorem relates $\hat p = \frac{1}{n}\sum_{i=1}^n Z_i$ and $p$.
\begin{theorem}[Test Set Bound~\citep{Lan05}]\label{thm:test-set-bound}
Let $Z_1,\dots, Z_n$ be $n$ i.i.d.\ Bernoulli random variables with bias $p\in(0,1)$ and let $\hat{p}=\frac{1}{n}\sum_{i=1}^n Z_i$ be the empirical mean. Then for any $\delta\in(0,1)$:
\[
\P[p\geq \Bin^{-1}(n, n\hat{p}, \delta)]\leq \delta.
\]
\end{theorem}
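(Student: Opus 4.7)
The plan is to translate the event $\{p \geq \Bin^{-1}(n, n\hat{p}, \delta)\}$ into a statement about the binomial CDF evaluated at the random observed count and at the true parameter $p$, and then invoke the discrete probability integral transform. The whole argument is essentially two short steps, so the main content of the proof is really just this translation.

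For the first step, note that $\Bin(n,k,p)$, viewed as a function of $p$, is a polynomial and hence continuous, and for $k<n$ it is strictly monotonically decreasing on $[0,1]$ with $\Bin(n,k,0)=1$ and $\Bin(n,k,1)=0$. Therefore $\{p':\Bin(n,k,p')\geq\delta\}$ is a closed interval $[0,\Bin^{-1}(n,k,\delta)]$ whose right endpoint satisfies $\Bin(n,k,\Bin^{-1}(n,k,\delta))=\delta$ (the case $k=n$ is trivial, giving $\Bin^{-1}(n,n,\delta)=1$, and the event in question then has probability $0$). Consequently, for every $p\in[0,1]$,
\[
p\geq\Bin^{-1}(n,n\hat p,\delta) \;\Longleftrightarrow\; \Bin(n,n\hat p,p)\leq\delta,
\]
so it suffices to prove $\P[\Bin(n,n\hat p,p)\leq\delta]\leq\delta$.

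For the second step, observe that with $p$ fixed, $\Bin(n,\cdot,p)$ is exactly the CDF $F$ of the binomial random variable $S:=\sum_{i=1}^n Z_i = n\hat p$, so $\Bin(n,n\hat p,p) = F(S)$ and what remains is the classical discrete probability integral transform inequality $\P[F(S)\leq\delta]\leq\delta$. This is the one spot that deserves any attention, because $S$ is integer-valued and $F$ is a step function, so $F(S)$ is not uniformly distributed and one cannot appeal directly to the continuous version. I would handle it by letting $s^{\star}:=\max\{s\in\{0,1,\dots,n\}:F(s)\leq\delta\}$ (with the convention $s^{\star}=-1$ and $F(s^{\star})=0$ when the set is empty), and observing that monotonicity of $F$ gives $\{F(S)\leq\delta\}=\{S\leq s^{\star}\}$, whence $\P[F(S)\leq\delta]=F(s^{\star})\leq\delta$ by the very definition of $s^{\star}$. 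Combining the two steps yields the claimed bound.
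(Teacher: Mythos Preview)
Your proof is correct. The paper itself does not prove this statement; it simply cites the result from \citet{Lan05} and applies it as a black box when bounding $L(h_{S_1})$ and $L(h_{S_2})$ in the proof of Theorem~\ref{thm:together_refined}. Your two-step argument --- reducing $\{p\geq\Bin^{-1}(n,n\hat p,\delta)\}$ to $\{\Bin(n,n\hat p,p)\leq\delta\}$ via monotonicity and continuity of the binomial CDF in the bias parameter, and then invoking the discrete probability integral transform $\P[F(S)\leq\delta]\leq\delta$ --- is exactly the standard derivation of this bound and matches the original argument in Langford's paper.
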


By applying Theorems~\ref{thm:pac-bayes-split-kl-inequality} and \ref{thm:test-set-bound} to equation \eqref{eq:excess&inform} we obtain the following result.
\begin{theorem}\label{thm:together_refined}
For any $\mu\in[-1,1]$ and any $\delta\in(0,1)$:
\[
    \P[\exists \rho\in\mathcal{P}: \E_\rho[L(h)] \geq \mu + (1-\mu)(a) - (\mu+1)(b) + \frac{1}{2}(c)]\leq \delta,
\]
where $\mathcal{P}$ is the set of all possible probability distributions on $\cH$ that can depend on $S$,
\begin{equation*}
    (a)= \kl^{-1,+}\lr{\frac{1}{2} \frac{\E_{\rho}[\Delta_{\hat{L}}^+(h,h_{S_1},S_2)]}{1-\mu} + \frac{1}{2}\frac{\E_{\rho}[\Delta_{\hat{L}}^+(h,h_{S_2},S_1)]}{1-\mu}, \frac{\KL(\rho\|\pi) + \ln\frac{8\sqrt{n/2}}{\delta}}{n/2} },
\end{equation*}
\begin{equation*}
    (b)= \kl^{-1,-}\lr{\frac{1}{2} \frac{\E_{\rho}[\Delta_{\hat{L}}^-(h,h_{S_1},S_2)]}{\mu+1} + \frac{1}{2}\frac{\E_{\rho}[\Delta_{\hat{L}}^-(h,h_{S_2},S_1)]}{\mu+1}, \frac{\KL(\rho\|\pi) + \ln\frac{8\sqrt{n/2}}{\delta}}{n/2} },
\end{equation*}
in which $\pi=\frac{1}{2}\pi_{S_1}+\frac{1}{2}\pi_{S_2}$, and
\begin{equation*}
    (c) = \Bin^{-1}\lr{\frac{n}{2}, \frac{n}{2}\hat{L}(h_{S_1}, S_2), \frac{\delta}{4}} + \Bin^{-1}\lr{\frac{n}{2}, \frac{n}{2}\hat{L}(h_{S_2}, S_1), \frac{\delta}{4}}.
\end{equation*}
\end{theorem}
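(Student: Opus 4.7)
The plan is to combine the expected-loss decomposition \eqref{eq:excess&inform} with four probabilistic bounds joined by a union bound. The two excess-loss terms $\E_\rho[\Delta_L(h, h_{S_1})]$ and $\E_\rho[\Delta_L(h, h_{S_2})]$ are each controlled by a PAC-Bayes-Split-$\kl$ inequality (Theorem~\ref{thm:pac-bayes-split-kl-inequality}): the ``forward'' application uses prior $\pi_{S_1}$ (which is independent of $S_2$) with sample $S_2$ of size $n/2$, while the ``backward'' application uses $\pi_{S_2}$ with $S_1$. The two reference-loss terms $L(h_{S_1})$ and $L(h_{S_2})$ are bounded via the Test Set Bound (Theorem~\ref{thm:test-set-bound}) applied to $\hat L(h_{S_1}, S_2)$ and $\hat L(h_{S_2}, S_1)$ respectively. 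Allocating confidence to these four events and taking a union bound produces an overall failure probability of at most $\delta$.

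The technical core is to collapse the averaged forward-and-backward split-$\kl$ bounds into the single $\kl^{-1,+}$ and $\kl^{-1,-}$ expressions appearing in $(a)$ and $(b)$. The key lemma is that $\kl^{-1,+}(\hat p,\epsilon)$ is jointly concave and $\kl^{-1,-}(\hat p,\epsilon)$ jointly convex in $(\hat p,\epsilon)$, which in each case follows from the joint convexity of $(\hat p, p)\mapsto \kl(\hat p\|p)$ by a level-set argument: a convex combination of feasible candidate points is itself a feasible candidate at the convex combination of radii. Applying joint concavity to the averaged $\kl^{-1,+}$ contributions and joint convexity to the averaged $\kl^{-1,-}$ contributions (with the latter carrying a negative sign) pushes the $\tfrac12$-averaging inside, producing the averaged empirical excess losses displayed in $(a)$ and $(b)$. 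The two separate $\KL(\rho\|\pi_{S_i})$ terms merge into the stated $\KL(\rho\|\pi)$ via convexity of $\KL(\rho\|\cdot)$ in its second argument, which gives $\KL(\rho\|\pi)\leq \tfrac12\bigl(\KL(\rho\|\pi_{S_1})+\KL(\rho\|\pi_{S_2})\bigr)$ for the mixture $\pi=\tfrac12\pi_{S_1}+\tfrac12\pi_{S_2}$, combined with monotonicity of the $\kl$-inverses in the radius.

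The main obstacle is that the monotonicity directions in this kl-inverse collapse step do not both point the favourable way: both $\kl^{-1,+}$ and $\kl^{-1,-}$ are monotone increasing in their radius, but $\kl^{-1,-}$ enters the bound with a minus sign. Replacing $\tfrac12(\KL(\rho\|\pi_{S_1})+\KL(\rho\|\pi_{S_2}))$ by the smaller $\KL(\rho\|\pi)$ thus shrinks $\kl^{-1,+}$ (good for an upper bound) while enlarging $-\kl^{-1,-}$ (bad). Handling this tension cleanly requires either retaining the averaged-KL form and absorbing a constant slack into the logarithmic factor, or recasting the forward--backward averaging directly as a single PAC-Bayes-$\kl$ application against the mixture prior. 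Once this step is managed, the remainder is routine bookkeeping: instantiate $a=-1$, $b=1$ so that $b-\mu=1-\mu$ and $\mu-a=\mu+1$, track the substitutions $n\to n/2$ and the confidence splits through the split-$\kl$ radius to produce the stated logarithmic factor, and collect the two $\Bin^{-1}$ contributions into $(c)$.
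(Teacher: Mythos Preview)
Your primary approach---apply Theorem~\ref{thm:pac-bayes-split-kl-inequality} separately in the forward and backward directions and then merge via joint concavity/convexity of the $\kl$-inverses---does not reach the stated bound, and the obstruction is worse than you diagnose. First, a sign slip: $\kl^{-1,-}(\hat p,\varepsilon)$ is \emph{decreasing} in $\varepsilon$ (a larger radius enlarges the feasible set, lowering the minimum), not increasing. Once this is corrected, \emph{both} replacements of $\tfrac12\bigl(\KL(\rho\|\pi_{S_1})+\KL(\rho\|\pi_{S_2})\bigr)$ by the smaller $\KL(\rho\|\pi)$ go the wrong way: a smaller radius shrinks $\kl^{-1,+}$ and enlarges $\kl^{-1,-}$, so each substitution strictly decreases the right-hand side of what was an upper bound. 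There is no ``one good, one bad'' asymmetry to exploit; the averaged-$\KL$ bound is simply strictly weaker than the stated one, and no amount of slack-absorption into the logarithmic constant recovers the mixture-$\KL$ form.

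The paper instead takes exactly the second route you mention but do not develop: a \emph{single} PAC-Bayes application against the mixture prior $\pi=\tfrac12\pi_{S_1}+\tfrac12\pi_{S_2}$, so that $\KL(\rho\|\pi)$ appears directly rather than via a lossy post-hoc replacement. The key device is to couple the reference hypothesis and evaluation sample to the prior component: define $f_n(h,S)=\tfrac{n}{2}\,\kl\!\bigl(\Delta_{\hat L}^{+}(h,h_{S_*},\bar S_*)/(1-\mu)\,\big\|\,\Delta_L^{+}(h,h_{S_*})/(1-\mu)\bigr)$, where $S_*\in\{S_1,S_2\}$ is the subset used to form the prior component from which $h$ was drawn and $\bar S_*=S\setminus S_*$. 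Then
\[
\E_S\E_\pi\bigl[e^{f_n(h,S)}\bigr]
=\tfrac12\sum_{i=1,2}\E_{S_i}\E_{\pi_{S_i}}\E_{\bar S_i}\bigl[e^{f_n(h,S)}\bigr]
\le 2\sqrt{n/2},
\]
because for each $i$ the prior $\pi_{S_i}$ is independent of $\bar S_i$, so Maurer's inequality applies to the inner expectation. Plugging this into the change-of-measure lemma yields a single $\kl$ inequality with $\KL(\rho\|\pi)$ on the right; Jensen (joint convexity of $\kl$) pulls $\E_\rho$ inside, and writing $\rho=\tfrac12\rho+\tfrac12\rho$ with the same coupling produces the averaged empirical quantities in $(a)$. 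The $(b)$ term is handled identically, and a four-way union bound (two PAC-Bayes events with $\delta/4$ each, two test-set events with $\delta/4$ each) yields the $\ln(8\sqrt{n/2}/\delta)$ constant. Your concavity/convexity lemma for the $\kl$-inverses is correct but simply not needed here.
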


The proof is postponed to Appendix~\ref{sec:pf_thm:together_refined}.

\section{Experiments}\label{sec:experiments}
We evaluate the performance of the PAC-Bayes-split-$\kl$ inequality in linear classification and in weighted majority vote using several data sets from UCI and LibSVM repositories~\citep{UCI,libsvm}. An overview of the data sets is provided in Appendix~\ref{app:sec:datasets}. For linear classification we reproduce the experimental setup of \citet{MGG20}, and for the weighted majority vote we reproduce the experimental setup of \citet{WMLIS21}. 

\subsection{The Experimental Setup of \citet{MGG20}: Linear Classifiers}\label{sec:LC}

In the first experiment we follow the experimental setup of \citet{MGG20}, who consider binary classification problems with linear classifiers in $\R^d$ and Gaussian priors and posteriors. A classifier $h_w$ associated with a vector $w\in\R^d$ makes a prediction on an input $X$ by $h_w(X)=\1[w^\top X>0]$. The posteriors have the form of Gaussian distributions centered at $w_S\in\R^d$, with covariance $\Sigma_S$ that depends on a sample $S$, $\rho=\mathcal{N}(w_S,\Sigma_S)$. The informed priors $\pi_{S_1}=\mathcal{N}(w_{S_1},\Sigma_{S_1})$ and $\pi_{S_2}=\mathcal{N}(w_{S_2},\Sigma_{S_2})$ are also taken to be Gaussian distributions centered at $w_{S_1}$ and $w_{S_2}$, with covariance $\Sigma_{S_1}$ and $\Sigma_{S_2}$, respectively.  We take the classifier associated with $w_{S_1}$ as the reference classifier $h_{S_1}$ and the classifier associated with $w_{S_2}$ as the reference classifier $h_{S_2}$. More details on the construction are provided in Appendix~\ref{app:sec:LC}.

Figure~\ref{fig:Avg-2Sqn-Ex-IP} compares the PAC-Bayes-Unexpected-Bernstein bound $\PBUB$ and the PAC-Bayes-split-$\kl$ bound $\PBSkl$ with excess losses and informed priors. The ternary random variables in this setup take values in $\{-1,0,1\}$, and we select $\mu$ to be the middle value $0$. Since the PAC-Bayes-$\kl$ bound ($\PBkl$) is one of the tightest known generalization bounds, we take $\PBkl$ with informed priors as a baseline. The details on bound calculation and optimization are provided in Appendix~\ref{app:sec:LC}. In this experiment all the three bounds, $\PBkl$, $\PBUB$, and $\PBSkl$ performed comparably. We believe that the reason is that with informed priors the $\KL(\rho\|\pi)$ term is small. From the relaxation of the $\PBkl$ bound in equation \eqref{eq:pbkl-relaxed}, we observe that a small $\KL(\rho\|\pi)$ term implies smaller difference between fast and slow convergence rates, and thus smaller advantage to bounding the excess loss instead of the raw loss. In other words, we believe that the effect of using informed priors dominates the effect of using excess losses. We note that in order to use excess losses we need to train the reference hypothesis $h^*$ on part of the data and, therefore, training an informed prior on the same data comes at no extra cost.
\begin{figure}[t]
    \centering
    \includegraphics[width=0.82\textwidth]{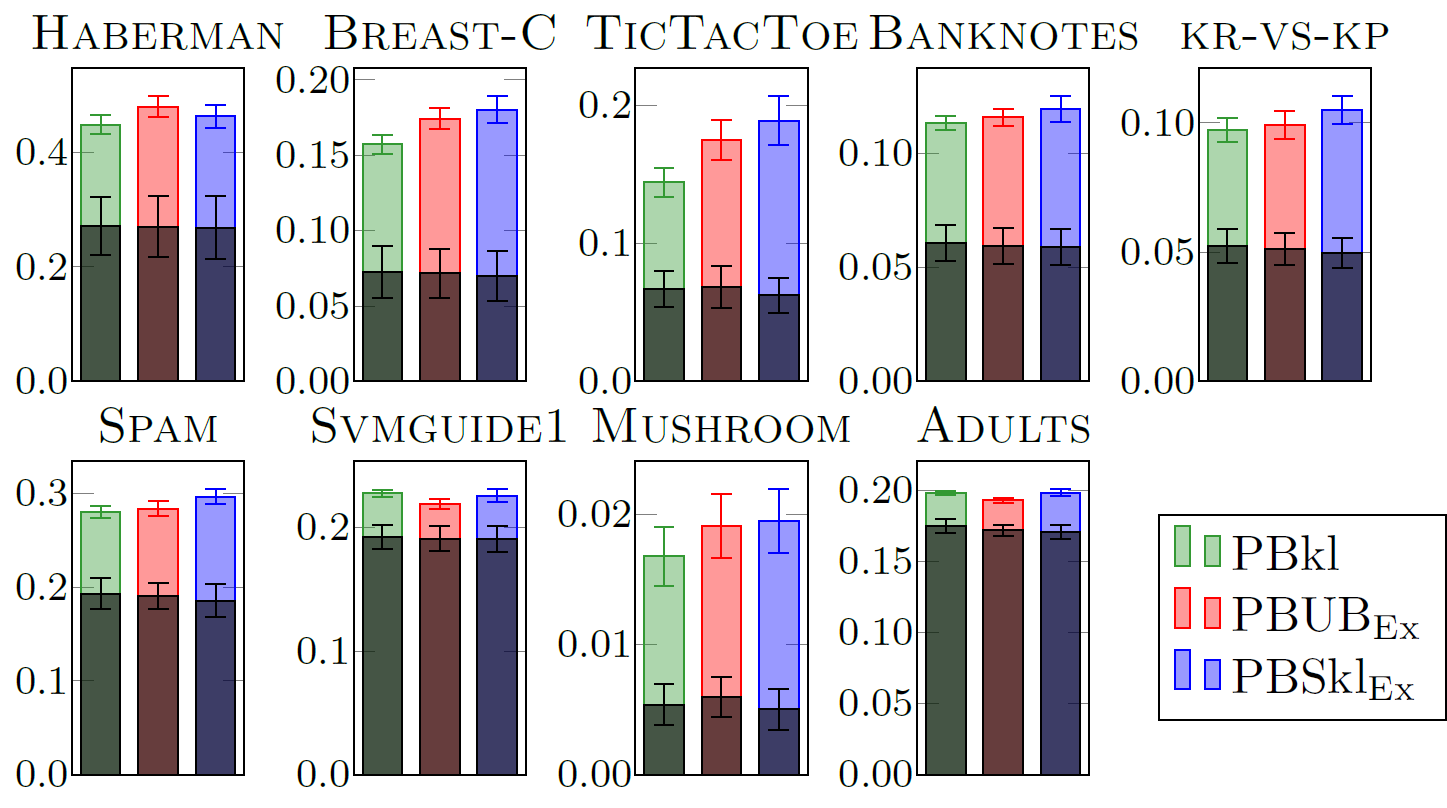}
    \caption{Comparison of the bounds and the test losses of the optimized Gaussian posterior $\rho^*$ generated by $\PBkl$ with informed priors, $\PBUB$ with excess losses and informed priors, and $\PBSkl$ with excess losses and informed priors. The test losses of the corresponding bounds are shown in black. We report the mean and the standard deviation over 20 runs of the experiments.}
    \label{fig:Avg-2Sqn-Ex-IP}
\end{figure}

\subsection{The Experimental Setup of \citet{WMLIS21}: Weighted Majority Vote}\label{sec:Majority_Vote}
In the second experiment we reproduce the experimental setup of \citet{WMLIS21}, who consider multiclass classification by a weighted majority vote. Given an input $X\in\cX$, a hypothesis space $\cH$, and a distribution $\rho$ on $\cH$, a $\rho$-weighted majority vote classifier predicts $\MV_\rho(X)=\argmax_{y\in\cY}\E_\rho[\1[h(X)=y]]$. One of the tightest bound for the majority vote is the tandem bound ($\TND$) proposed by~\citet{MLIS20}, which is based on tandem losses for pairs of hypotheses, $\ell(h(X), h'(X), Y) = \1[h(X)\neq Y]\1[h'(X)\neq Y]$, and the second order Markov's inequality. \citet{WMLIS21} proposed two improved forms of the bound, both based on a parametric form of the Chebyshev-Cantelli inequality. The first, $\CCTND$, using Chebyshev-Cantelli with the tandem losses and the PAC-Bayes-$\kl$ bound for bounding the tandem losses. The second, $\CCPBB$, using tandem losses with an offset, defined by $\ell_\alpha(h(X),h'(X),Y)=(\1[h(X)\neq Y]-\alpha)(\1[h'(X)\neq Y]-\alpha)$ for $\alpha < 0.5$, and PAC-Bayes-Empirical-Bennett inequality for bounding the tandem losses with an offset. We note that while the tandem losses are binary random variables, tandem losses with an offset are ternary random variables taking values in $\{\alpha^2,-\alpha(1-\alpha),(1-\alpha)^2\}$ and, therefore, application of Empirical Bernstein type inequalities makes sense. However, in the experiments of \citeauthor{WMLIS21} $\CCPBB$ lagged behind $\TND$ and $\CCTND$. We replaced PAC-Bayes-Empirical-Bennett with PAC-Bayes-Unexpected-Bernstein ($\CCPBUB$) and PAC-Bayes-split-$\kl$ ($\CCPBSkl$) and showed that the weakness of $\CCPBB$ was caused by looseness of PAC-Bayes-Empirical-Bernstein, and that $\CCPBUB$ and $\CCPBSkl$ lead to tighter bounds that are competitive and sometimes outperforming $\TND$ and $\CCTND$. For the PAC-Bayes-split-$\kl$ bound we took $\mu$ to be the middle value of the tandem loss with an offset, namely, for $\alpha \geq 0$ we took $\mu = \alpha^2$, and for $\alpha < 0$ we took $\mu = -\alpha(1-\alpha)$.

In Figure~\ref{fig:MCE_selected} we present a comparison of the $\TND$, $\CCTND$, $\CCPBB$, $\CCPBUB$, and $\CCPBSkl$ bounds on weighted majority vote of heterogeneous classifiers (Linear Discriminant Analysis, $k$-Nearest Neighbors, Decision Tree,
Logistic Regression, and Gaussian Naive Bayes), which adds the two new bounds, $\CCPBUB$ and $\CCPBSkl$ to the experiment done by \citet{WMLIS21}. A more detailed description of the experiment and results for additional data sets are provided in Appendix~\ref{app:sec:MCE}. We note that $\CCPBUB$ and $\CCPBSkl$ consistently outperform $\CCPBB$, demonstrating that they are more appropriate for tandem losses with an offset. The former two bounds perform comparably to $\TND$ and $\CCTND$, which operate on tandem losses without an offset. In Appendix~\ref{app:sec:RFC} we replicate another experiment of \citeauthor{WMLIS21}, where we use the bounds to reweigh trees in a random forest classifier. The results are similar to the results for heterogeneous classifiers.

\begin{figure}[t]
    \centering
    \includegraphics[width=0.82\textwidth]{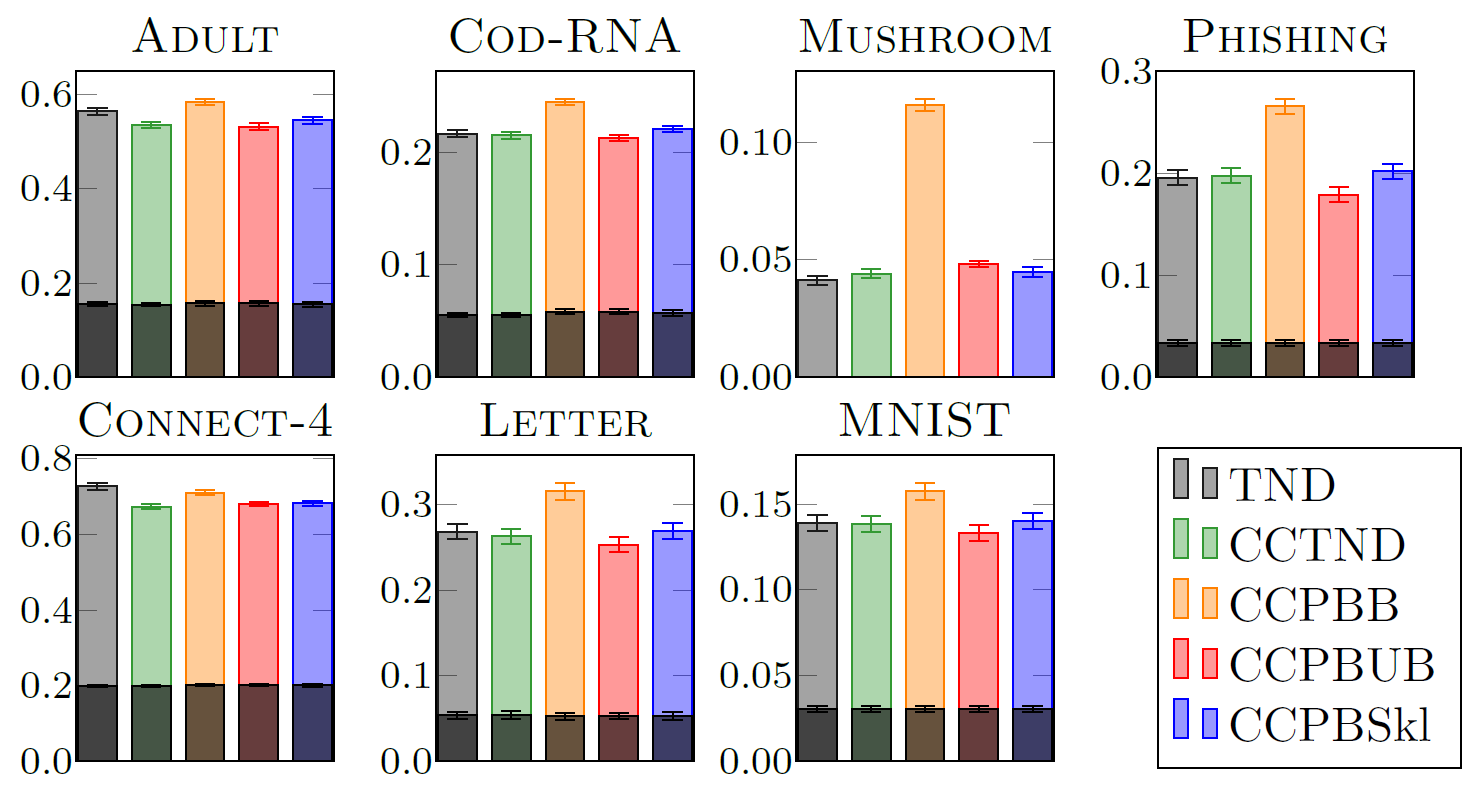}
    \caption{Comparison of the bounds and the test losses of the weighted majority vote on ensembles of heterogeneous classifiers with optimized posterior $\rho^*$ generated by $\TND$, $\CCTND$, $\CCPBB$, $\CCPBUB$, and $\CCPBSkl$. The test losses of the corresponding bounds are shown in black. We report the mean and the standard deviation over 10 runs of the experiments.}
    \label{fig:MCE_selected}
\end{figure}

\section{Discussion}

We have presented the split-$\kl$ and PAC-Bayes-split-$\kl$ inequalities. The inequalities answer a long-standing open question on how to exploit the structure of ternary random variables in order to provide tight concentration bounds. The proposed split-$\kl$ and PAC-Bayes-split-$\kl$ inequalities are as tight for ternary random variables, as the $\kl$ and PAC-Bayes-$\kl$ inequalities are tight for binary random variables.

In our empirical evaluation the split-$\kl$ inequality was always competitive with the $\kl$ and Unexpected Bernstein inequalities and outperformed both in certain regimes, whereas Empirical Bernstein typically lagged behind.  In our experiments in the PAC-Bayesian setting the PAC-Bayes-split-$\kl$ was always comparable to PAC-Bayes-Unexpected-Bernstein, whereas PAC-Bayes-Empirical-Bennett most often lagged behind. The first two inequalities were usually comparable to PAC-Bayes-$\kl$, although in some cases the attempt to exploit low variance did not pay off and PAC-Bayes-$\kl$ outperformed, which is also the trend observed earlier by \citet{MGG20}. To the best of our knowledge, this is the first time when the various approaches to exploitation of low variance were directly compared, and the proposed split-$\kl$ emerged as a clear winner in the basic setting, whereas in the PAC-Bayes setting in our experiments the PAC-Bayes-Unexpected-Bernstein and PAC-Bayes-split-$\kl$ were comparable, and preferable over PAC-Bayes-Empirical-Bernstein and PAC-Bayes-Empirical-Bennett.

\begin{ack}
This project has received funding from European Union’s Horizon 2020 research and innovation
programme under the Marie Skłodowska-Curie grant agreement No 801199. The authors also acknowledge partial support by the Independent Research Fund Denmark, grant number 0135-00259B.
\end{ack}

\newpage 
\small

\bibliographystyle{plainnat}
\bibliography{bibliography.bib}

\begin{thebibliography}{31}
\providecommand{\natexlab}[1]{#1}
\providecommand{\url}[1]{\texttt{#1}}
\expandafter\ifx\csname urlstyle\endcsname\relax
  \providecommand{\doi}[1]{doi: #1}\else
  \providecommand{\doi}{doi: \begingroup \urlstyle{rm}\Url}\fi

\bibitem[Ambroladze et~al.(2007)Ambroladze, Parrado-Hern{\' a}ndez, and
  Shawe-Taylor]{APS07}
Amiran Ambroladze, Emilio Parrado-Hern{\' a}ndez, and John Shawe-Taylor.
\newblock Tighter {PAC-B}ayes bounds.
\newblock In \emph{Advances in Neural Information Processing Systems
  (NeurIPS)}, 2007.

\bibitem[Audibert et~al.(2009)Audibert, Munos, and Szepesv{\' a}ri]{AMS09}
Jean~Yves Audibert, R{\' e}mi Munos, and Csaba Szepesv{\' a}ri.
\newblock Exploration-exploitation trade-off using variance estimates in
  multi-armed bandits.
\newblock \emph{Theoretical Computer Science}, 2009.

\bibitem[Bernstein(1946)]{Ber46}
Sergei~N. Bernstein.
\newblock \emph{Probability Theory}.
\newblock Moscow-Leningrad, 4$^{th}$ edition, 1946.
\newblock In Russian.

\bibitem[Boucheron et~al.(2013)Boucheron, Lugosi, and Massart]{BLM13}
St{\'e}phane Boucheron, G{\'a}bor Lugosi, and Pascal Massart.
\newblock \emph{Concentration Inequalities: {A} Nonasymptotic Theory of
  Independence}.
\newblock Oxford University Press, 2013.

\bibitem[Cesa-Bianchi et~al.(2007)Cesa-Bianchi, Mansour, and Stoltz]{CBMS07}
Nicol{\`o} Cesa-Bianchi, Yishay Mansour, and Gilles Stoltz.
\newblock Improved second-order bounds for prediction with expert advice.
\newblock \emph{Machine Learning}, 66, 2007.

\bibitem[Chang and Lin(2011)]{libsvm}
Chih-Chung Chang and Chih-Jen Lin.
\newblock {LIBSVM}: A library for support vector machines.
\newblock \emph{ACM Transactions on Intelligent Systems and Technology}, 2,
  2011.

\bibitem[Cortes et~al.(2018)Cortes, DeSalvo, Gentile, Mohri, and Yang]{CDG+18}
Corinna Cortes, Giulia DeSalvo, Claudio Gentile, Mehryar Mohri, and Scott Yang.
\newblock Online learning with abstention.
\newblock In \emph{Proceedings of the International Conference on Machine
  Learning (ICML)}, 2018.

\bibitem[Cover and Thomas(2006)]{CT06}
Thomas~M. Cover and Joy~A. Thomas.
\newblock \emph{Elements of Information Theory}.
\newblock Wiley Series in Telecommunications and Signal Processing, 2nd
  edition, 2006.

\bibitem[Dua and Graff(2019)]{UCI}
Dheeru Dua and Casey Graff.
\newblock {UCI} machine learning repository, 2019.
\newblock URL \url{http://archive.ics.uci.edu/ml}.

\bibitem[Dziugaite and Roy(2017)]{DR17}
Gintare~Karolina Dziugaite and Daniel~M. Roy.
\newblock Computing nonvacuous generalization bounds for deep (stochastic)
  neural networks with many more parameters than training data.
\newblock In \emph{UAI}, 2017.

\bibitem[Fan et~al.(2015)Fan, Grama, and Liu]{FGL15}
Xiequan Fan, Ion Grama, and Quansheng Liu.
\newblock Exponential inequalities for martingales with applications.
\newblock \emph{Electronic Journal of Probability}, 20:\penalty0 1--22, 2015.

\bibitem[Florescu and Igel(2018)]{FI18}
Ciprian Florescu and Christian Igel.
\newblock Resilient backpropagation (rprop) for batch-learning in tensorflow.
\newblock In \emph{International Conference on Learning Representations (ICLR)
  Workshop}, 2018.

\bibitem[Foong et~al.(2021)Foong, Bruinsma, Burt, and Turner]{FBBT21}
Andrew Foong, Wessel Bruinsma, David Burt, and Richard Turner.
\newblock How tight can pac-bayes be in the small data regime?
\newblock In \emph{Advances in Neural Information Processing Systems
  (NeurIPS)}, 2021.

\bibitem[Foong et~al.(2022)Foong, Bruinsma, and Burt]{FBB22}
Andrew Y.~K. Foong, Wessel~P. Bruinsma, and David~R. Burt.
\newblock A note on the chernoff bound for random variables in the unit
  interval.
\newblock \emph{arXiv preprint arXiv.2205.07880}, 2022.

\bibitem[Hoeffding(1963)]{Hoe63}
W.~Hoeffding.
\newblock Probability inequalities for sums of bounded random variables.
\newblock \emph{Journal of the American Statistical Association}, 58\penalty0
  (301):\penalty0 13--30, 1963.

\bibitem[Igel and H{\"u}sken(2003)]{IH03}
Christian Igel and Michael H{\"u}sken.
\newblock Empirical evaluation of the improved rprop learning algorithms.
\newblock \emph{Neurocomputing}, 50, 2003.

\bibitem[Langford(2005)]{Lan05}
John Langford.
\newblock Tutorial on practical prediction theory for classification.
\newblock \emph{Journal of Machine Learning Research}, 6, 2005.

\bibitem[Masegosa et~al.(2020)Masegosa, Lorenzen, Igel, and Seldin]{MLIS20}
Andr{\'e}s~R. Masegosa, Stephan~S. Lorenzen, Christian Igel, and Yevgeny
  Seldin.
\newblock Second order {PAC-Bayesian} bounds for the weighted majority vote.
\newblock In \emph{Advances in Neural Information Processing Systems
  (NeurIPS)}, 2020.

\bibitem[Maurer(2004)]{Mau04}
Andreas Maurer.
\newblock A note on the {PAC}-{B}ayesian theorem.
\newblock arXiv preprint cs/0411099, 2004.

\bibitem[Maurer and Pontil(2009)]{MP09}
Andreas Maurer and Massimiliano Pontil.
\newblock Empirical {Bernstein} bounds and sample variance penalization.
\newblock In \emph{Proceedings of the Conference on Learning Theory (COLT)},
  2009.

\bibitem[McAllester(2003)]{McA03}
David McAllester.
\newblock {PAC-Bayesian} stochastic model selection.
\newblock \emph{Machine Learning}, 51, 2003.

\bibitem[Mhammedi et~al.(2019)Mhammedi, Gr{\"u}nwald, and Guedj]{MGG20}
Zakaria Mhammedi, Peter Gr{\"u}nwald, and Benjamin Guedj.
\newblock {PAC-Bayes} un-expected {Bernstein} inequality.
\newblock In \emph{Advances in Neural Information Processing Systems
  (NeurIPS)}, 2019.

\bibitem[Mnih et~al.(2008)Mnih, Szepesv{\' a}ri, and Audibert]{MSA08}
Volodymyr Mnih, Csaba Szepesv{\' a}ri, and Jean-Yves Audibert.
\newblock Empirical {Bernstein} stopping.
\newblock In \emph{Proceedings of the International Conference on Machine
  Learning (ICML)}, 2008.

\bibitem[Seeger(2002)]{See02}
Matthias Seeger.
\newblock {PAC-Bayesian} generalization error bounds for {Gaussian} process
  classification.
\newblock \emph{Journal of Machine Learning Research}, 3, 2002.

\bibitem[Seldin et~al.(2012)Seldin, Laviolette, Cesa-Bianchi, Shawe-Taylor, and
  Auer]{SLCB+12}
Yevgeny Seldin, Fran\c{c}ois Laviolette, Nicol{\`o} Cesa-Bianchi, John
  Shawe-Taylor, and Peter Auer.
\newblock {PAC-Bayesian} inequalities for martingales.
\newblock \emph{IEEE Transactions on Information Theory}, 58, 2012.

\bibitem[Thiemann et~al.(2017)Thiemann, Igel, Wintenberger, and Seldin]{TIWS17}
Niklas Thiemann, Christian Igel, Olivier Wintenberger, and Yevgeny Seldin.
\newblock A strongly quasiconvex {PAC-Bayesian} bound.
\newblock In \emph{Proceedings of the International Conference on Algorithmic
  Learning Theory (ALT)}, 2017.

\bibitem[Thulasidasan et~al.(2019)Thulasidasan, Bhattacharya, Bilmes,
  Chennupati, and Mohd{-}Yusof]{TBB+19}
Sunil Thulasidasan, Tanmoy Bhattacharya, Jeff~A. Bilmes, Gopinath Chennupati,
  and Jamal Mohd{-}Yusof.
\newblock Combating label noise in deep learning using abstention.
\newblock In \emph{Proceedings of the International Conference on Machine
  Learning (ICML)}, 2019.

\bibitem[Tolstikhin and Seldin(2013)]{TS13}
Ilya Tolstikhin and Yevgeny Seldin.
\newblock {PAC-Bayes-Empirical-Bernstein} inequality.
\newblock In \emph{Advances in Neural Information Processing Systems
  (NeurIPS)}, 2013.

\bibitem[Wintenberger(2017)]{Win17}
Olivier Wintenberger.
\newblock Optimal learning with {Bernstein} online aggregation.
\newblock \emph{Machine Learning}, 106, 2017.

\bibitem[Wu et~al.(2021)Wu, Masegosa, Lorenzen, Igel, and Seldin]{WMLIS21}
Yi-Shan Wu, Andres Masegosa, Stephan Lorenzen, Christian Igel, and Yevgeny
  Seldin.
\newblock Chebyshev-cantelli pac-bayes-bennett inequality for the weighted
  majority vote.
\newblock In \emph{Advances in Neural Information Processing Systems
  (NeurIPS)}, 2021.

\bibitem[Zhou et~al.(2019)Zhou, Veitch, Austern, Adams, and Orbanz]{ZVAAO19}
Wenda Zhou, Victor Veitch, Morgane Austern, Ryan~P. Adams, and Peter Orbanz.
\newblock Non-vacuous generalization bounds at the imagenet scale: a
  {PAC}-bayesian compression approach.
\newblock In \emph{ICLR}, 2019.

\end{thebibliography}

\normalsize

\newpage
\appendix

\section{Unexpected Bernstein Inequality}
\subsection{A Proof of the Unexpected Bernstein Inequality (Theorem~\ref{thm:Unexpected-Bernstein})}\label{sec:pf_thm:Unexpected-Bernstein}
The proof is based on the Unexpected Bernstein lemma.
\begin{lemma}[Unexpected Bernstein lemma~\citep{FGL15,MGG20}]\label{lem:unepected-bernstein}
Let $Z_1,\cdots,Z_n$ be i.i.d. random variables bounded from above by $b>0$, and assume that $\sum_{i=1}^n Z_i^2$ is finite. Let $\psi(u):=u-\ln(1+u)$ for $u\in\R$. Then for any $\gamma\in(0,\frac{1}{b})$:
\[
\E[e^{\gamma\sum_{i=1}^n(\E[Z_i]-Z_i)-\frac{\psi(-b\gamma)}{b^2}\sum_{i=1}^nZ_i^2}]\leq 1.
\]
\end{lemma}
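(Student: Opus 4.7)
The plan is to exploit the independence of $Z_1,\dots,Z_n$ to factorize the expectation, and then reduce everything to a pointwise one-variable inequality on the real line.

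By independence,
$$\E\lrs{e^{\gamma\sum_{i=1}^n(\E[Z_i]-Z_i)-\frac{\psi(-b\gamma)}{b^2}\sum_{i=1}^n Z_i^2}} = \prod_{i=1}^n \E\lrs{e^{\gamma(\E[Z_i]-Z_i)-\frac{\psi(-b\gamma)}{b^2}Z_i^2}},$$
so it suffices to prove, for any random variable $Z$ with $Z\le b$ almost surely and any $\gamma\in(0,1/b)$, that $\E[e^{-\gamma Z - (\psi(-b\gamma)/b^2) Z^2}] \le e^{-\gamma \E[Z]}$. I would derive this from the pointwise bound
$$e^{-\gamma z - \frac{\psi(-b\gamma)}{b^2} z^2} \le 1 - \gamma z \qquad \text{for all } z\le b, \qquad (\ast)$$
since taking expectations then yields $\E[e^{-\gamma Z - (\psi(-b\gamma)/b^2)Z^2}] \le 1 - \gamma\E[Z] \le e^{-\gamma \E[Z]}$ by the elementary bound $1+x\le e^x$. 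Note that for $z\le b$ and $\gamma\in(0,1/b)$ one has $1 - \gamma z \ge 1-\gamma b > 0$, so both sides of $(\ast)$ are positive and we may take logarithms freely.

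The heart of the proof is therefore $(\ast)$. Taking logarithms and substituting $u = \gamma z$ and $c = b\gamma \in (0,1)$, it is equivalent to
$$\psi(-u) = -u-\ln(1-u) \le \frac{\psi(-c)}{c^2} u^2 \quad \text{for all } u \le c,$$
which (dividing by $u^2$) is in turn equivalent to the monotonicity statement: $u \mapsto \psi(-u)/u^2$ (extended continuously to $1/2$ at $u=0$) is non-decreasing on $(-\infty, 1)$. This monotonicity is where the main analytic work lives and is the principal obstacle. On $(-1,1)$ it is transparent from the Taylor expansion $\psi(-u) = \sum_{k\ge 2} u^k/k$, giving $\psi(-u)/u^2 = \sum_{k\ge 0} u^k/(k+2)$ whose derivative is manifestly non-negative on $(0,1)$ and which can be compared term-by-term on $(-1,0)$. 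For $u \le -1$, which is relevant because $Z$ is only bounded above and may be arbitrarily negative, I would instead differentiate directly: a short calculation gives
$$\frac{d}{du}\lr{\frac{\psi(-u)}{u^2}} = \frac{g(u)}{u^3}, \qquad g(u) := \frac{u^2}{1-u} + 2u + 2\ln(1-u),$$
and one checks that $g(0) = 0$ together with $g'(u) = u^2/(1-u)^2 \ge 0$, so $g(u)\ge 0$ for $u\ge 0$ and $g(u)\le 0$ for $u\le 0$; in both cases $g(u)/u^3\ge 0$, completing the monotonicity check. Multiplying the resulting single-variable bound over $i=1,\dots,n$ then yields the lemma.
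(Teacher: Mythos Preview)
Your proof is correct and is essentially the same argument as the paper's: reduce by independence to a single-variable bound, observe that the pointwise inequality $e^{-\gamma z-\frac{\psi(-b\gamma)}{b^2}z^2}\le 1-\gamma z$ (equivalently $\lambda Y-\frac{\psi(-b\lambda)}{b^2}Y^2\le\ln(1+\lambda Y)$ in the paper's change of variable $Y=-z$) is exactly the monotonicity of $u\mapsto\psi(u)/u^2$ on $(-1,\infty)$, and finish with $1+x\le e^x$. The only difference is that you actually verify the monotonicity via $g'(u)=u^2/(1-u)^2\ge 0$, whereas the paper simply asserts it.
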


\begin{proof}[Proof of Theorem~\ref{thm:Unexpected-Bernstein}]
Recall that by the assumption of the theorem $Z_1,\dots,Z_n$ are i.i.d., bounded from above by $b>0$, and that $p=\E[Z_i]$ for all $i$, $\hat{p}=\frac{1}{n}\sum_{i=1}^n Z_i$, and $\hat{\sigma}=\frac{1}{n}\sum_{i=1}^n Z_i^2$. For any $\gamma\in(0,1/b)$ we have:
\begin{align*}
    \P[p- \hat{p}-\frac{\psi(-\gamma b)}{\gamma b^2}\hat{\sigma}\geq \varepsilon] &= \P[\sum_{i=1}^n \lr{ \E[Z_i] - Z_i -\frac{\psi(-\gamma b)}{\gamma b^2}Z_i^2 }\geq n\varepsilon]\\
    &= \P[e^{\gamma\sum_{i=1}^n \lr{ \E[Z_i] - Z_i -\frac{\psi(-\gamma b)}{\gamma b^2}Z_i^2 }}\geq e^{\gamma n\varepsilon} ]\\
    &\leq \E[e^{\gamma\sum_{i=1}^n \lr{ \E[Z] - Z_i -\frac{\psi(-\gamma b)}{\gamma b^2}Z_i^2 }}]/e^{\gamma n\varepsilon}\\
    &\leq e^{-\gamma n\varepsilon},
\end{align*}
where the first inequality is by application of Markov's inequality and the second inequality is by application of Lemma~\ref{lem:unepected-bernstein}. By taking $\delta = e^{-\gamma n \varepsilon}$ and solving for $\varepsilon$ we complete the proof.
\end{proof}

\subsection{A relaxation of the Unexpected Bernstein lemma}\label{sec:comp:Unexpected-Bernstein}

We show that a concentration inequality introduced by \citet{CBMS07} yields a relaxation of the Unexpected Bernstein Lemma. The inequality of \citeauthor{CBMS07} can be used to directly derive a relaxed version of the Unexpected Bernstein lemma, but as we show the result is weaker than the Unexpected Bernstein lemma.  \citet[Lemma 1]{CBMS07} have shown that 
\begin{equation}\label{eq:CBMS07orig}
    \forall \gamma \geq -1/2: ~~ \gamma-\gamma^2\leq \ln(1+\gamma).
\end{equation}
Thus,
\begin{equation}\label{eq:CBMS07}
    \forall \gamma\leq 1/2: ~~ -\gamma^2\leq \gamma + \ln(1-\gamma) = -\psi(-\gamma).
\end{equation}
This gives a relaxed version of the Unexpected Bernstein lemma. For simplicity, we present it with $b=1$.
\begin{lemma}[Relaxed Unexpected Bernstein lemma]\label{lem:relaxed-unepected-bernstein}
Let $Z_1,\cdots,Z_n$ be i.i.d.\ random variables bounded from above by $1$, and assume that $\sum_{i=1}^n Z_i^2$ is finite. Then for any $\gamma\in[0,\frac{1}{2}]$:
\[
\E[e^{\gamma\sum_{i=1}^n(\E[Z_i]-Z_i)-\gamma^2\sum_{i=1}^nZ_i^2}]\leq 1.
\]
\end{lemma}

\begin{proof}
By \eqref{eq:CBMS07} and Lemma~\ref{lem:unepected-bernstein} we have
\[
\E[e^{\gamma\sum_{i=1}^n(\E[Z_i]-Z_i)-\gamma^2\sum_{i=1}^nZ_i^2}] \leq \E[e^{\gamma\sum_{i=1}^n(\E[Z_i]-Z_i)-\psi(-\gamma)\sum_{i=1}^nZ_i^2}] \leq 1.
\]
\end{proof}
We note that it is possible to prove Lemma~\ref{lem:relaxed-unepected-bernstein} directly by using inequality \eqref{eq:CBMS07orig} and without using Lemma~\ref{lem:unepected-bernstein}, as done by \citet{Win17}. The first inequality in our proof of Lemma~\ref{lem:relaxed-unepected-bernstein} shows that it is a relaxation of Lemma~\ref{lem:unepected-bernstein}.

\section{A Proof of the PAC-Bayes Unexpected Bernstein Inequality (Theorem~\ref{thm:PB-unexpected-bernstein})}\label{sec:pf_thm:PB-unexpected-bernstein}

The proof is based on using the Unexpected Bernstein lemma within a standard change of measure argument cited in Lemma~\ref{lem:PAC-Bayes} below. We cite the version before the expectations of $S'$ and $\pi$ are exchanged, which is an intermediate step in the proof of~\citet[Lemma 1]{TS13}.
\begin{lemma}[PAC-Bayes Lemma~\citep{TS13} ]
\label{lem:PAC-Bayes}
For any function $f_n:\cH\times ({\cal X}\times{\cal Y})^n\to \R$ and for any distribution $\pi$ on $\cH$
, with probability at least $1-\delta$ over a random draw of $S$, for all distributions $\rho$ on $\cH$ simultaneously:
\[
\E_\rho[f_n(h,S)] \leq \KL(\rho\|\pi) + \ln \frac{1}{\delta} + \ln\E_{S'}[\E_\pi[e^{f_n(h,S')}]].
\]
\end{lemma}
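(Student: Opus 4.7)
The plan is to combine two standard ingredients. First, a pointwise-in-$S$ change-of-measure inequality that trades $\E_\rho$ for $\E_\pi$ at the cost of a $\KL(\rho\|\pi)$ term. Second, Markov's inequality applied to the resulting non-negative function of $S$, which is free of $\rho$; this upgrades the pointwise bound to a high-probability bound that remains uniform in $\rho$ without any union bound over posteriors.

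First I would establish, for each fixed $S$ and every $\rho$ on $\cH$ with $\rho \ll \pi$, the Donsker--Varadhan change-of-measure inequality
\[
\E_\rho[f_n(h,S)] \leq \KL(\rho\|\pi) + \ln \E_\pi[e^{f_n(h,S)}].
\]
A short derivation writes $f_n(h,S) = \ln\bigl(e^{f_n(h,S)}/\E_\pi[e^{f_n(h,S)}]\bigr) + \ln \E_\pi[e^{f_n(h,S)}]$, takes $\E_\rho$ on both sides, uses $\E_\rho[\ln(d\pi/d\rho)] = -\KL(\rho\|\pi)$ on the density-ratio piece, and applies Jensen's inequality to the remaining $\rho$-expectation of a logarithm. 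No randomness over $S$ has been used, so this holds pointwise in $S$ and simultaneously for all $\rho$ (the case $\rho \not\ll \pi$ is vacuous because $\KL(\rho\|\pi) = +\infty$).

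Next, let $\Phi(S) := \E_\pi[e^{f_n(h,S)}]$, which is a non-negative random variable depending only on $S$. By Markov's inequality applied to $\Phi$,
\[
\P_S\lrs{\Phi(S) \geq \tfrac{1}{\delta}\,\E_{S'}[\Phi(S')]} \leq \delta,
\]
so with probability at least $1-\delta$ over the draw of $S$ we have $\ln \Phi(S) \leq \ln \frac{1}{\delta} + \ln \E_{S'}[\Phi(S')]$. Plugging this into the change-of-measure bound from the previous step, which already holds uniformly in $\rho$, yields the claim.

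The only subtlety, and the one easy point to mishandle, is the order of operations: the change of measure must come first, so that Markov's inequality is invoked on the $\rho$-free quantity $\Phi(S)$. If one instead tried to control a random variable that still involved $\rho$, a union bound over posteriors would be required and the defining feature of a PAC-Bayes statement (uniformity in $\rho$ for a single $1/\delta$ cost) would be lost; the same ordering also explains why the exponential moment inside the final logarithm is a joint expectation over $S'$ and $\pi$ rather than over $\rho$.
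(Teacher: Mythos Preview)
Your proof is correct and follows the standard route: the Donsker--Varadhan change-of-measure inequality applied pointwise in $S$, followed by Markov's inequality on the $\rho$-free random variable $\E_\pi[e^{f_n(h,S)}]$. The paper does not supply its own proof of this lemma; it simply cites it as an intermediate step in \citet[Lemma~1]{TS13}, whose argument is exactly the one you have outlined, so there is nothing further to compare.
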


\begin{proof}[Proof of Theorem~\ref{thm:PB-unexpected-bernstein}]
Let $f_n(h,S)=\gamma n\lr{\tilde{L}(h)-\hat{\tilde{L}}(h,S)}-\frac{\psi(-b\gamma)}{b^2}n\hat{\tilde{\Var}}(h,S)$. Since $\pi$ is independent of $S$ by assumption, we can exchange the expectations of $S$ and $\pi$.  Then by Lemma~\ref{lem:unepected-bernstein} we have $\E[e^{f_n(h,S)}]\leq 1$. By plugging this into Lemma~\ref{lem:PAC-Bayes} and dividing both sides by $\gamma n$, we complete the proof.
\end{proof}




\section{Proof of Theorem~\ref{thm:together_refined}}\label{sec:pf_thm:together_refined}

To prove the theorem, we need the test set bound (Theorem~\ref{thm:test-set-bound}), the PAC-Bayes Lemma (Lemma~\ref{lem:PAC-Bayes}), and the following lemma.
\begin{lemma}[\citep{Mau04}]\label{lem:Note-PB-Maurer}
    Let $X_1,\cdots,X_n$ be i.i.d.\ random variables with mean $p$ and bounded in the $[0,1]$ interval. Let $\hat{p}=\frac{1}{n}\sum_{i=1}^nX_i$ be the empirical mean. Then:
    \[
    \E[e^{n\kl(\hat{p}\|p)}]\leq 2\sqrt{n}.
    \]
\end{lemma}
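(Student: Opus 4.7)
My plan is to reduce the general case to i.i.d.\ Bernoulli random variables via a coupling argument, then prove the Bernoulli case by explicit calculation and Stirling's approximation.

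For the reduction (Bernoullification), given $X_1,\dots,X_n \in [0,1]$ with mean $p$, I would introduce auxiliary variables $B_1,\dots,B_n$ that, conditional on $\vec X$, are independent with $B_i \sim \Ber(X_i)$. Using independence of the $X_i$'s, one checks that unconditionally the $B_i$'s are i.i.d.\ $\Ber(p)$, and $\E[\hat B \mid \vec X] = \hat p$, where $\hat B = \frac{1}{n}\sum_{i=1}^n B_i$. A short calculation shows $\kl(\cdot\|p)$ is convex on $[0,1]$ (since $\partial_x^2 \kl(x\|p) = 1/(x(1-x)) > 0$), and composing with the increasing convex exponential shows $x \mapsto e^{n\kl(x\|p)}$ is also convex. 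Conditional Jensen then gives $e^{n\kl(\hat p\|p)} = e^{n\kl(\E[\hat B\mid \vec X]\|p)} \leq \E[e^{n\kl(\hat B\|p)}\mid \vec X]$, and taking unconditional expectations reduces the claim to showing $\E[e^{n\kl(\hat B\|p)}] \leq 2\sqrt n$ in the Bernoulli case.

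For the Bernoulli case, $n\hat B \sim \Bin(n,p)$, and the key observation is a cancellation of the $p$-dependence:
\begin{align*}
\E[e^{n\kl(\hat B\|p)}] = \sum_{k=0}^n \binom{n}{k} p^k(1-p)^{n-k} \cdot \frac{(k/n)^k ((n-k)/n)^{n-k}}{p^k(1-p)^{n-k}} = \sum_{k=0}^n \binom{n}{k}\frac{k^k(n-k)^{n-k}}{n^n},
\end{align*}
with the convention $0^0 = 1$. The right-hand side is independent of $p$, which is essential for the bound to hold uniformly.

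The last step is to show $\sum_{k=0}^n \binom{n}{k}\frac{k^k(n-k)^{n-k}}{n^n} \leq 2\sqrt n$. The boundary terms $k=0$ and $k=n$ each equal $1$. For $1 \leq k \leq n-1$, Stirling with its explicit error factor ($\sqrt{2\pi m}(m/e)^m \leq m! \leq \sqrt{2\pi m}(m/e)^m e^{1/(12m)}$) yields $\binom{n}{k}\frac{k^k(n-k)^{n-k}}{n^n} \leq \frac{e^{1/(12n)}}{\sqrt{2\pi}}\sqrt{\tfrac{n}{k(n-k)}}$, so the task reduces to bounding $\sum_{k=1}^{n-1}1/\sqrt{k(n-k)}$. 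Since the integrand $1/\sqrt{x(n-x)}$ is convex on $(0,n)$, Hermite--Hadamard gives $f(k) \leq \int_{k-1/2}^{k+1/2} f(x)\,dx$, and summing compares the sum to $\int_0^n dx/\sqrt{x(n-x)} = \pi$. This yields $\E[e^{n\kl(\hat B\|p)}] \leq 2 + \sqrt{\pi n/2}\,e^{1/(12n)}$, which is $\leq 2\sqrt n$ for $n$ sufficiently large (roughly $n \geq 8$). The main obstacle is pinning down the precise constant $2\sqrt n$ rather than a slightly larger $O(\sqrt n)$ bound: the Stirling error and Riemann-to-integral step must be tracked carefully, and the handful of small-$n$ cases need direct verification by enumeration of the $p$-free sum above.
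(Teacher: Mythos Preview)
The paper does not actually prove this lemma; it merely cites it from \cite{Mau04} and uses it as a black box in the proof of Theorem~\ref{thm:together_refined}. Your proposal is essentially a reconstruction of Maurer's original argument: the Bernoullification via conditional coupling and convexity of $x\mapsto e^{n\kl(x\|p)}$, the exact cancellation that makes the Bernoulli expectation $p$-free, and the Stirling estimate of the resulting combinatorial sum are precisely the ingredients of that proof. The steps you outline are correct, including the observation that the boundary terms each contribute $1$ and that the Hermite--Hadamard comparison bounds the interior sum by the integral $\int_0^n dx/\sqrt{x(n-x)}=\pi$, giving the asymptotic $2+e^{1/(12n)}\sqrt{\pi n/2}$.

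Your caveat about the constant is the only genuine loose end: your Stirling route as stated gives $2+e^{1/(12n)}\sqrt{\pi n/2}\leq 2\sqrt n$ only for $n\geq 8$, so you do need the direct check of the $p$-free sum for $n=1,\dots,7$. These are finite rational sums (e.g.\ $2$ for $n=1$ with equality, $5/2$ for $n=2$, $2+24/27$ for $n=3$, etc.) and each is readily seen to be $\leq 2\sqrt n$, so the plan closes. There is nothing to compare against in the present paper beyond noting that your argument recovers exactly the cited result.
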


\begin{proof}[Proof of Theorem~\ref{thm:together_refined}]
Recall that 
\begin{equation}\label{eqpf:refined_EL&IP}
    \E_\rho[L(h)]  = \frac{1}{2}\E_\rho[\Delta_L(h,h_{S_1})] + \frac{1}{2}\E_\rho[\Delta_L(h,h_{S_2})] + \frac{1}{2}\lr{L(h_{S_1})+L(h_{S_2})}.
\end{equation}
First, by applying Theorem~\ref{thm:test-set-bound} to $L(h_{S_1})$ and $L(h_{S_2})$, respectively, we have:
\begin{equation}\label{eqpf:refined_Lhs1}
        \P[L(h_{S_1}) \geq  \Bin^{-1}\lr{\frac{n}{2}, \frac{n}{2}\hat{L}(h_{S_1}, S_2), \delta}]\leq \delta
\end{equation}
and 
\begin{equation}\label{eqpf:refined_Lhs2}
     \P[L(h_{S_2}) \geq \Bin^{-1}\lr{\frac{n}{2}, \frac{n}{2}\hat{L}(h_{S_2}, S_1), \delta}]\leq \delta.
\end{equation}
Next, since  \[\E_\rho[\Delta_L(h,h_{S_1})]=\mu + \E_\rho[\Delta_L^+(h, h_{S_1})] - \E_\rho[\Delta_L^-(h,h_{S_1})]
\] 
and 
\[\E_\rho[\Delta_L(h,h_{S_2})]=\mu + \E_\rho[\Delta_L^+(h,h_{S_2})] - \E_\rho[\Delta_L^-(h,h_{S_2})],
\]
for any $\mu\in[a,b]$ we have
\begin{multline}
\frac{1}{2}\E_\rho[\Delta_L(h,h_{S_1})] + \frac{1}{2}\E_\rho[\Delta_L(h,h_{S_2})]\\
=\mu + \lr{\frac{1}{2}\E_\rho[\Delta_L^+(h, h_{S_1})] + \frac{1}{2}\E_\rho[\Delta_L^+(h,h_{S_2})]} - \lr{\frac{1}{2}\E_\rho[\Delta_L^-(h, h_{S_1})]  + \frac{1}{2}\E_\rho[\Delta_L^-(h,h_{S_2})]}.\label{eqpf:refined_rewriteby+-}
\end{multline}
Let $\pi=\frac{1}{2}\pi_{S_1}+\frac{1}{2}\pi_{S_2}$, 
and let $S_*$ be either $S_1$ or $S_2$ and $\bar{S}_*=S\backslash S_*$. If $h$ is sampled from $\pi_{S_*}$, we take $h_{S_*}$ as a reference hypothesis and estimate the excess loss on $\bar{S}_*$. Then,

\begin{align*}
    \E_{S}\E_{\pi}\lrs{e^{\frac{n}{2}\kl\lr{\frac{\Delta_{\hat{L}}^+(h,h_{S_*},\bar{S}_*)}{1-\mu}\middle\|\frac{\Delta_L^+(h,h_{S_*})}{1-\mu} }}} &= \frac{1}{2} \sum\limits_{i=1,2} \E_S \E_{\pi_{S_i}} \lrs{e^{\frac{n}{2}\kl\lr{\frac{\Delta_{\hat{L}}^+(h,h_{S_i},\bar{S}_i)}{1-\mu}\middle\|\frac{\Delta_L^+(h,h_{S_i})}{1-\mu} }}}\\
    &=\frac{1}{2} \sum\limits_{i=1,2} \E_{S_i} \E_{\pi_{S_i}} \E_{\bar{S}_i} \lrs{e^{\frac{n}{2}\kl\lr{\frac{\Delta_{\hat{L}}^+(h,h_{S_i},\bar{S}_i)}{1-\mu}\middle\|\frac{\Delta_L^+(h,h_{S_i})}{1-\mu} }}}\\
    &\leq 2\sqrt{n/2},
\end{align*}
where the second equality is due to the fact that $\pi_{S_*}$ is independent of $\bar{S}_*$ so they are exchangeable, and the inequality follows by Lemma~\ref{lem:Note-PB-Maurer}. 

Therefore, by applying Lemma~\ref{lem:PAC-Bayes} with $f(h,S)=\frac{n}{2}\kl\lr{\frac{\Delta_{\hat{L}}^+(h,h_{S_*},\bar{S}_*)}{1-\mu}\middle\|\frac{\Delta_L^+(h,h_{S_*})}{1-\mu} }$, we have with probability at least $1-\delta$ over $S$, for all $\rho$ on $\cH$ simultaneously:
\[
\E_\rho\lrs{\frac{n}{2}\kl\lr{ \frac{\Delta_{\hat{L}}^+(h,h_{S_*},\bar{S}_* )}{1-\mu}\middle\|\frac{\Delta_L^+(h,h_{S_*})}{1-\mu} } } \leq \KL(\rho\|\pi) + \ln\frac{2\sqrt{n/2}}{\delta}.
\]

By the convexity of $\KL$, we further have

\[
\kl\lr{\E_\rho\lrs{\frac{\Delta_{\hat{L}}^+(h,h_{S_*},\bar{S}_* )}{1-\mu} } \middle\| \E_\rho\lrs{\frac{\Delta_L^+(h,h_{S_*})}{1-\mu} } } \leq \E_\rho\lrs{\kl\lr{ \frac{\Delta_{\hat{L}}^+(h,h_{S_*},\bar{S}_* )}{1-\mu}\middle\|\frac{\Delta_L^+(h,h_{S_*})}{1-\mu} } },
\]
which together gives with probability at least $1-\delta$ over $S$, for all $\rho$ on $\cH$ simultaneously:
\[
\kl\lr{\E_\rho\lrs{\frac{\Delta_{\hat{L}}^+(h,h_{S_*},\bar{S}_* )}{1-\mu} } \middle\| \E_\rho\lrs{\frac{\Delta_L^+(h,h_{S_*})}{1-\mu} } } \leq \frac{\KL(\rho\|\pi) + \ln\frac{2\sqrt{n/2}}{\delta}}{n/2}.
\]

Similarly, $\Delta_{\hat{L}}^-(h,h_{S_*},\bar{S}_*)$ and $\Delta_L^-(h,h_{S_*})$ also satisfy with probability at least $1-\delta$ over $S$, for all $\rho$ on $\cH$ simultaneously:
\[
\kl\lr{\E_\rho\lrs{\frac{\Delta_{\hat{L}}^-(h,h_{S_*},\bar{S}_* )}{\mu+1} } \middle\| \E_\rho\lrs{\frac{\Delta_L^-(h,h_{S_*})}{\mu+1} } } \leq \frac{\KL(\rho\|\pi) + \ln\frac{2\sqrt{n/2}}{\delta}}{n/2}.
\]

Let $\rho=\frac{1}{2}\rho_1+\frac{1}{2}\rho_2$ be constructed in a similar way to $\pi$, where $\rho_1$ and $\rho_2$ are probability distributions on $\cH$. If $h$ is sampled from $\rho_*$, then we take $h_{S_*}$ as a reference hypothesis and estimate the excess loss on $\bar{S}_*$. In our case, $\rho_1=\rho_2=\rho$. Let $\Delta^\circ$ denote either $\Delta^+$ or $\Delta^-$. Then,
\[
\E_\rho[\Delta_{L}^\circ(h,h_{S_*})] = \frac{1}{2} \E_{\rho}[\Delta_{L}^\circ(h,h_{S_1})] + \frac{1}{2}\E_{\rho}[\Delta_{L}^\circ(h,h_{S_2})]
\]
and
\[
\E_\rho[\Delta_{\hat{L}}^\circ(h,h_{S_*},\bar{S}_*)] = \frac{1}{2} \E_{\rho}[\Delta_{\hat{L}}^\circ(h,h_{S_1},S_2)] + \frac{1}{2}\E_{\rho}[\Delta_{\hat{L}}^\circ(h,h_{S_2},S_1)].
\]


By taking the inverse of $\kl$, we obtain that with probability at least $1-\delta$ over $S$, for all $\rho$ on $\cH$ simultaneously:
\begin{align}\label{eqpf:refined_EL_plus}
    \lefteqn{\frac{1}{2} \frac{\E_{\rho}[\Delta_{L}^+(h,h_{S_1})]}{1-\mu} + \frac{1}{2}\frac{\E_{\rho}[\Delta_{L}^+(h,h_{S_2})]}{1-\mu}} \nonumber \\
    &\leq \kl^{-1,+}\lr{\frac{1}{2} \frac{\E_{\rho}[\Delta_{\hat{L}}^+(h,h_{S_1},S_2)]}{1-\mu} + \frac{1}{2}\frac{\E_{\rho}[\Delta_{\hat{L}}^+(h,h_{S_2},S_1)]}{1-\mu}, \frac{\KL(\rho\|\pi) + \ln\frac{2\sqrt{n/2}}{\delta}}{n/2} },
\end{align}
and with the same probability
\begin{align}\label{eqpf:refined_EL_minus}
    \lefteqn{\frac{1}{2} \frac{\E_{\rho}[\Delta_{L}^-(h,h_{S_1})]}{\mu+1} + \frac{1}{2}\frac{\E_{\rho}[\Delta_{L}^-(h,h_{S_2})]}{\mu+1}} \nonumber \\
    &\geq \kl^{-1,-}\lr{\frac{1}{2} \frac{\E_{\rho}[\Delta_{\hat{L}}^-(h,h_{S_1},S_2)]}{\mu+1} + \frac{1}{2}\frac{\E_{\rho}[\Delta_{\hat{L}}^-(h,h_{S_2},S_1)]}{\mu+1}, \frac{\KL(\rho\|\pi) + \ln\frac{2\sqrt{n/2}}{\delta}}{n/2} }.
\end{align}
Thus, we can bound Eq. \eqref{eqpf:refined_rewriteby+-} by Eq.\eqref{eqpf:refined_EL_plus} and Eq.\eqref{eqpf:refined_EL_minus}. By replacing Eq.\eqref{eqpf:refined_EL&IP} by the upper bound of each term and taking a union bound, we complete the proof.
\end{proof}

\section{Empirical Comparison}\label{app:sec:empirical-comparison}
We present more results on empirical comparison of the concentration inequalities: the $\kl$, the Empirical Bernstein, the Unexpected Bernstein, and the split-$\kl$. In particular, Section~\ref{app:sec:empirical-comparison-ternary} expands the empirical comparison in Section~\ref{sec:empirical_comparison} in the body for ternary random variables, and Section~\ref{app:sec:empirical-comparison-bounded} studies the empirical comparison of bounded random variables.   The source code for replicating the experiments is available at Github\footnote{\url{https://github.com/YiShanAngWu/Split-KL-R/tree/main/simulation}}. 

\subsection{Ternary Random Variables}\label{app:sec:empirical-comparison-ternary}
In this section, we follow the settings and the parameters in Section~\ref{sec:empirical_comparison}, considering $n$ i.i.d. samples taking values in $\{-1,0,1\}$. For completeness, Figure~\ref{fig:app:sim:n100_rate0.5} and Figure~\ref{fig:app:sim:n100_rate0.99} repeats Figures~\ref{fig:sim:n100_rate0.5} and ~\ref{fig:sim:n100_rate0.99} while we add Figure~\ref{fig:app:sim:n100_rate0.01}, where the probability is defined by $p_1=0.01(1-p_0)$ and $p_{-1}=0.99(1-p_0)$. In this case, the $\kl$ starts well for $p_0$ close to zero, but similar to the case in Figure~\ref{fig:app:sim:n100_rate0.99} falls behind due to its inability of properly handling the values inside the interval. The Unexpected Bernstein and the Empirical Bernstein perform similarly when $p_0$ is small in Figure~\ref{fig:app:sim:n100_rate0.99}  since the bounds are cut to $1$, while Unexpected Bernstein falls behind Empirical Bernstein when $p_0$ is small in Figure~\ref{fig:app:sim:n100_rate0.01} due to the uncentered second moment. The split-$\kl$ matches, and in many cases outperforms, the tightest bounds.
\begin{figure}[t]
	\begin{subfigure}[b]{.32\textwidth}
		\includegraphics[width=\textwidth]{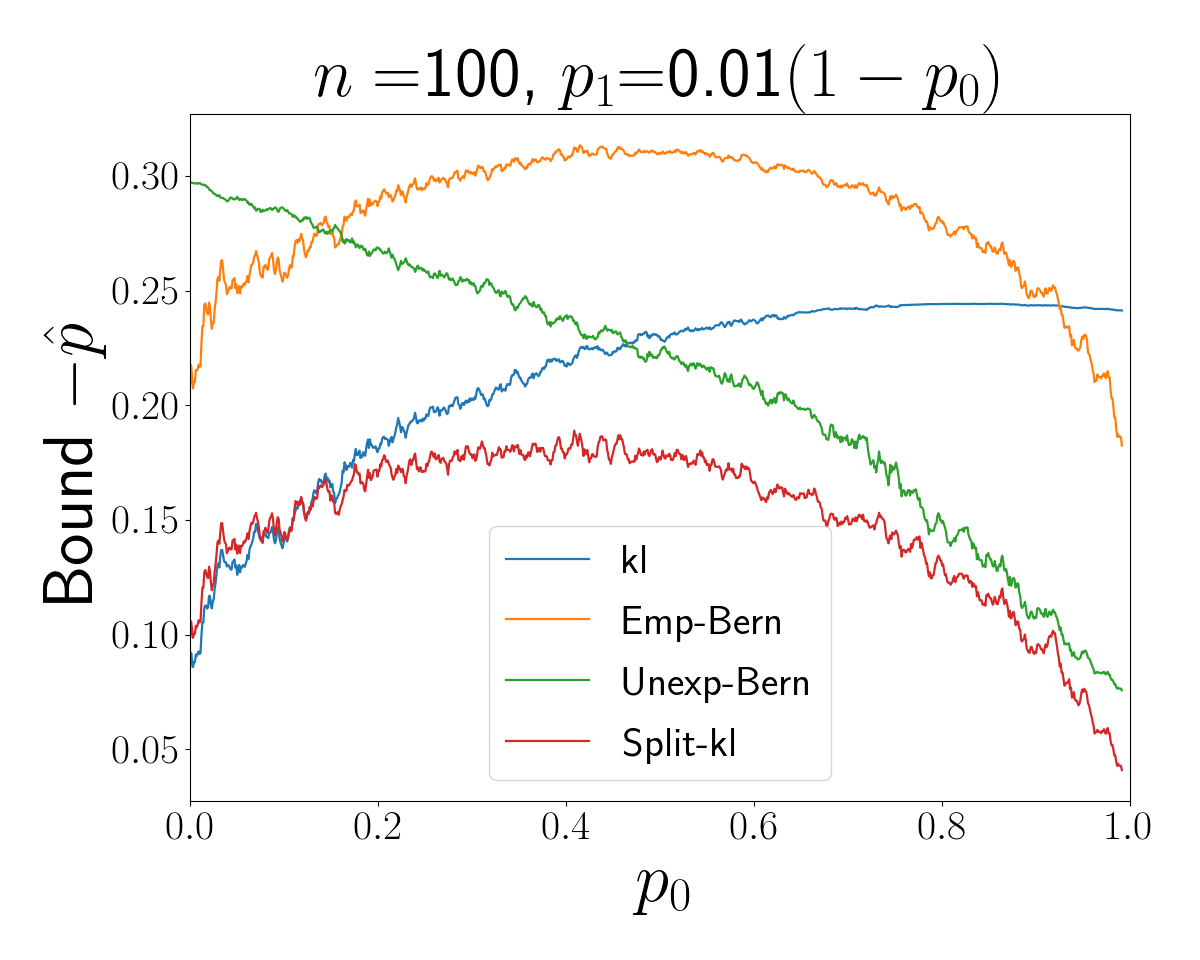}
		\caption{}
		\label{fig:app:sim:n100_rate0.01}
	\end{subfigure}
	\hfill
	\begin{subfigure}[b]{.32\textwidth}
		\includegraphics[width=\textwidth]{experiment/simulation/n100_rate0.5_RV-101.png}
		\caption{}
		\label{fig:app:sim:n100_rate0.5}
	\end{subfigure}
	\hfill
	\begin{subfigure}[b]{.32\textwidth}
		\includegraphics[width=\textwidth]{experiment/simulation/n100_rate0.99_RV-101.png}
		\caption{}
		\label{fig:app:sim:n100_rate0.99}
	\end{subfigure}
	\caption{Comparison of the concentration bounds with $n=100$, $\delta=0.05$, and (a) $p_1=0.01(1-p_0)$ and $p_{-1}=0.99(1-p_0)$, (b)  $p_{-1}=p_1=0.5(1-p_0)$, (c)  $p_1=0.99(1-p_0)$ and $p_{-1}=0.01(1-p_0)$.}
	\label{fig:app:ternary:n100}
\end{figure}

Figure~\ref{fig:app:ternary:n1000} has the same setting with a larger number of samples $n=1000$. The trends of the bounds are similar to Figure~\ref{fig:app:ternary:n100}. However, the Empirical Bernstein performs better than the Unexpected Bernstein in Figure~\ref{fig:app:sim:n1000_rate0.01} and Figure~\ref{fig:app:sim:n1000_rate0.99} when $p_0$ is less than $0.6$. In both cases, split-$\kl$ keeps its leading position. When $p_1=p_{-1}=(1-p_0)/2$ (Figure~\ref{fig:app:sim:n1000_rate0.5}), as $p_0=0$, the random variable becomes Bernoulli and, as expected, the $\kl$ bound performs the best, followed by the split-$\kl$, and then the two Bernstein bounds. As $p_0$ grows larger, the $\kl$ bound falls behind the other three bounds due to inability of properly handling the values inside the interval. The Unexpected Bernstein, the Empirical Bernstein and the split-$\kl$ perform similarly well.

\begin{figure}[t]
	\begin{subfigure}[b]{.32\textwidth}
		\includegraphics[width=\textwidth]{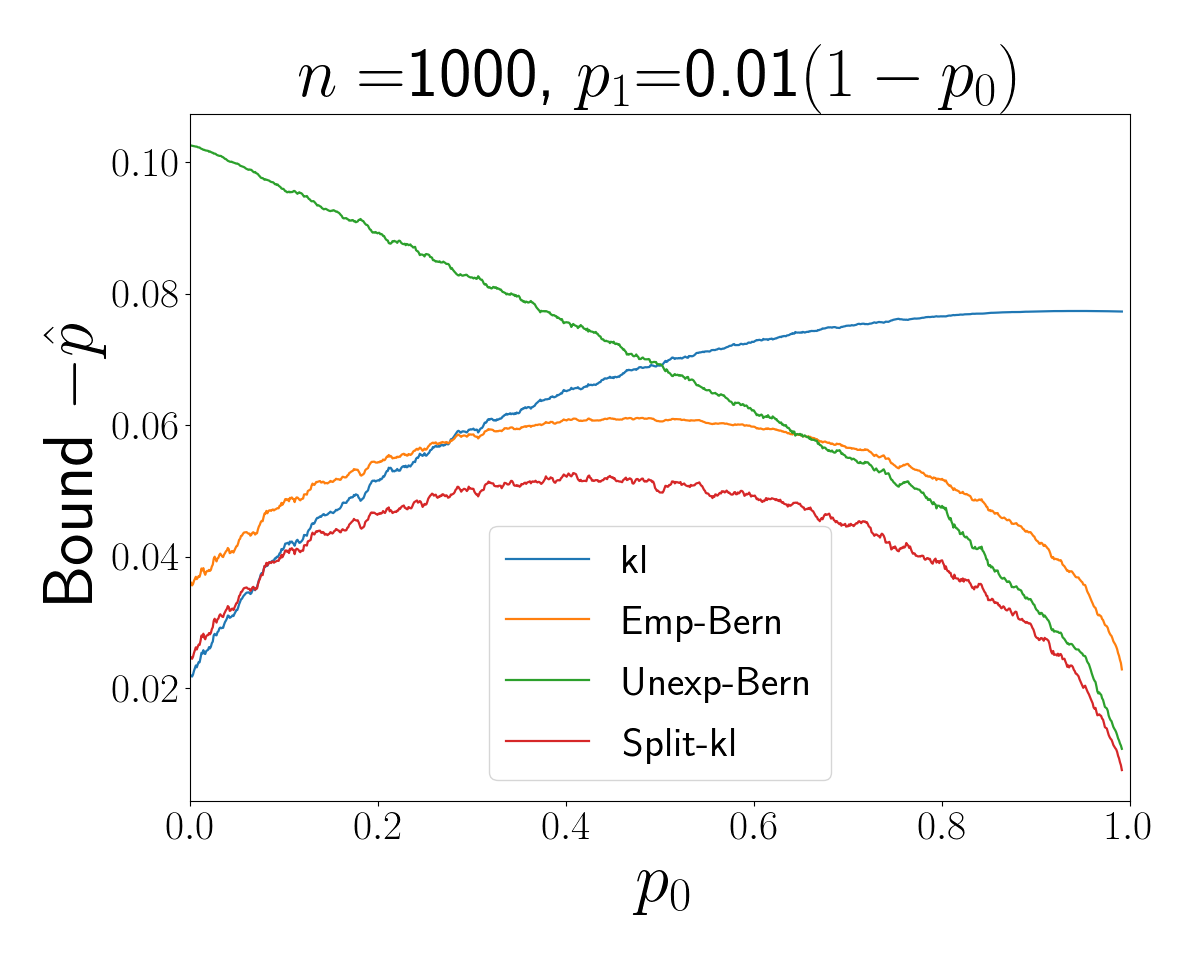}
		\caption{}
		\label{fig:app:sim:n1000_rate0.01}
	\end{subfigure}
	\hfill
	\begin{subfigure}[b]{.32\textwidth}
		\includegraphics[width=\textwidth]{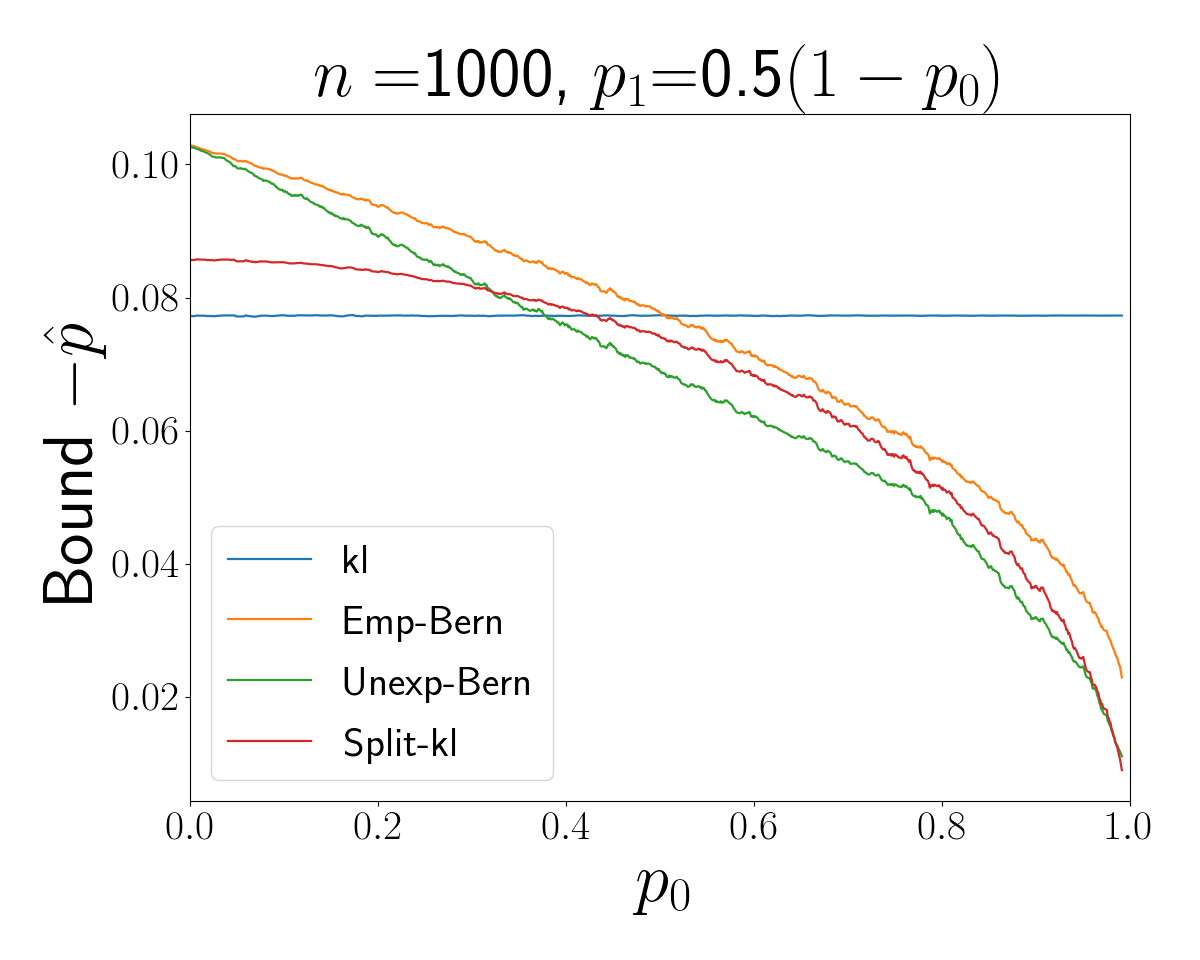}
		\caption{}
		\label{fig:app:sim:n1000_rate0.5}
	\end{subfigure}
	\hfill
	\begin{subfigure}[b]{.32\textwidth}
		\includegraphics[width=\textwidth]{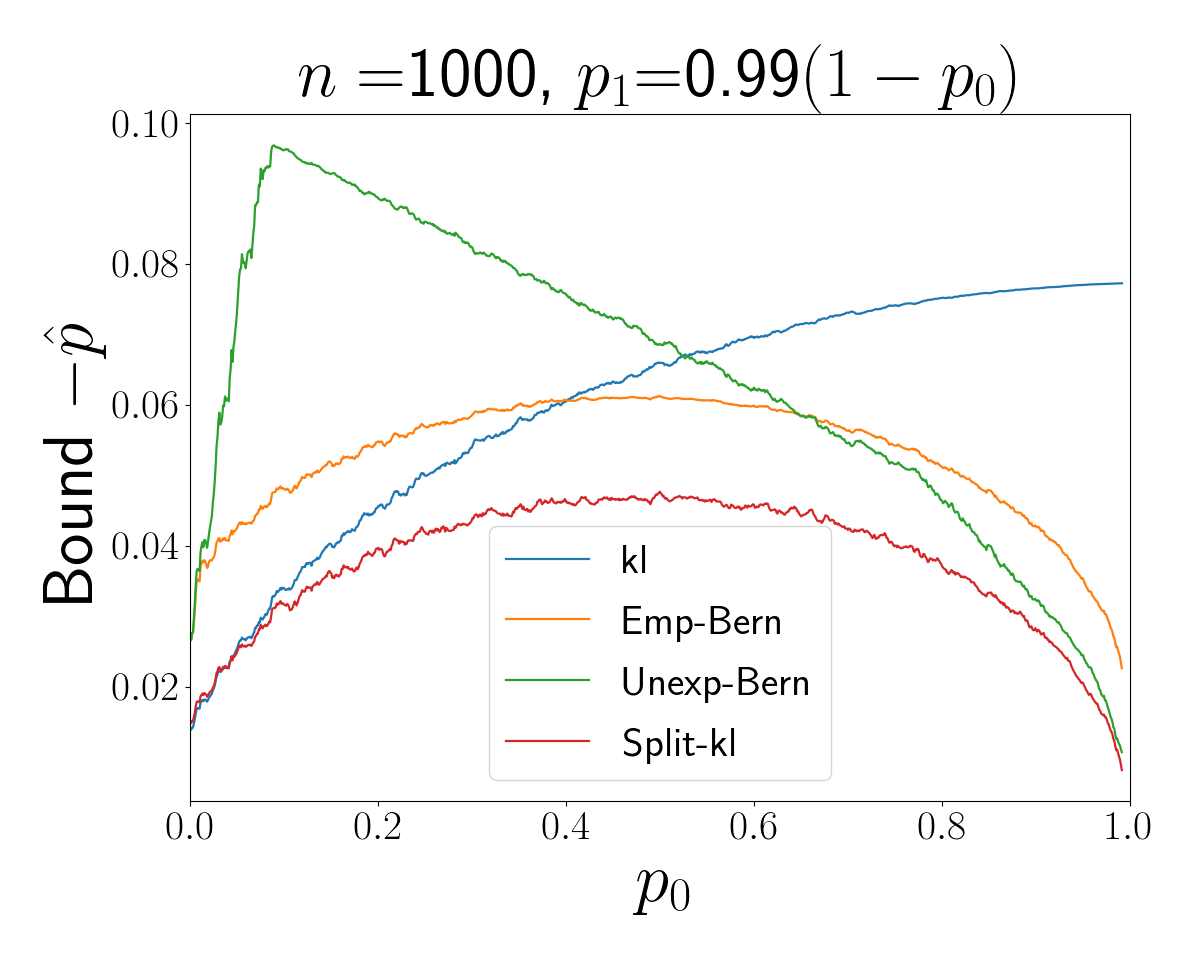}
		\caption{}
		\label{fig:app:sim:n1000_rate0.99}
	\end{subfigure}
	\caption{Comparison of the concentration bounds with $n=1000$, $\delta=0.05$, and (a) $p_1=0.01(1-p_0)$ and $p_{-1}=0.99(1-p_0)$, (b) $p_{-1}=p_1=0.5(1-p_0)$, (c)   $p_1=0.99(1-p_0)$ and $p_{-1}=0.01(1-p_0)$.}
	\label{fig:app:ternary:n1000}
\end{figure}

\subsection{Bounded Random Variables}\label{app:sec:empirical-comparison-bounded}
In this section, we study a more general setting, where the i.i.d.\ random variables $Z_1,\cdots,Z_n$ taking values in $[0,1]$. Naturally, we consider the random variables following beta distribution with parameters $\alpha>0$ and $\beta>0$, where the mean $p=\frac{\alpha}{\alpha+\beta}$ and the variance $\V[Z]=\frac{\alpha\beta}{(\alpha+\beta)^2(\alpha+\beta+1)}$. For the Empirical Bernstein bound, we take $a=0$ and $b=1$. For the Unexpected Bernstein bound we take a grid of $\gamma \in \{1/(2b), \cdots, 1/(2^k b)\}$ for $b=1$, $k=\lceil \log_2(\sqrt{n/\ln(1/\delta)}/2) \rceil$, and a union bound over the grid, as in Section~\ref{sec:empirical_comparison}.  For the split-$\kl$ bound we take $\mu$ to be the middle value $0.5$. Again, in the experiments we take $\delta=0.05$ and cut the bounds to $1$.

In Figure~\ref{fig:app:bounded:ConstM} we take $\alpha=\beta$ in an interval of $[0.01, 10]$. The mean is a constant $p=0.5$ throughout the interval and the variance is in an interval of $[0,012, 0.245]$, where a small $\alpha$ and $\beta$ corresponds to a large variance, and a large $\alpha$ and $\beta$ corresponds to a small variance. We plot the difference between the values of the bounds and $\hat p$ as a function of the variance $\V[Z]$. Since the true mean is a constant, the $\kl$ bound is also almost a constant throughout the interval. When the variance large, the $\kl$ bound performs the best, followed by spli-$\kl$ and the two Bernstein bounds. When the variance is small, the Empirical Bernstein bound exploit the low variance and outperform all the others when the number of samples is sufficiently large. The Unexpected Bernstein falls behind due the uncentered second moment. The split-$\kl$ bound is comparable to the $\kl$ bound when the variance is large and also comparable to the tightest bound when the variance is small.

\begin{figure}[t]
	\begin{subfigure}[b]{.48\textwidth}
		\includegraphics[width=\textwidth]{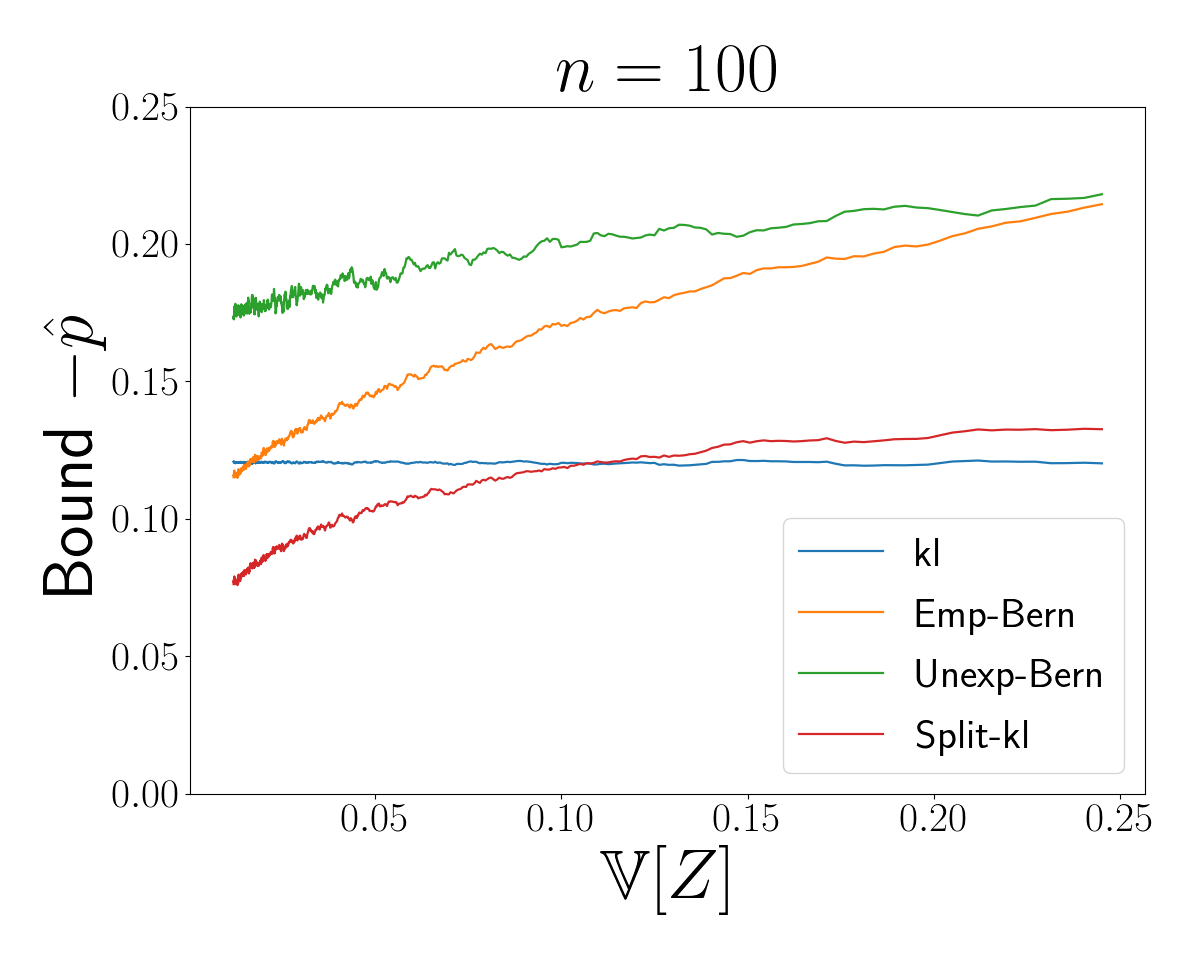}
		\caption{$n=100$.}
		\label{fig:app:sim:n100_ConstM}
	\end{subfigure}
	\hfill
	\begin{subfigure}[b]{.48\textwidth}
		\includegraphics[width=\textwidth]{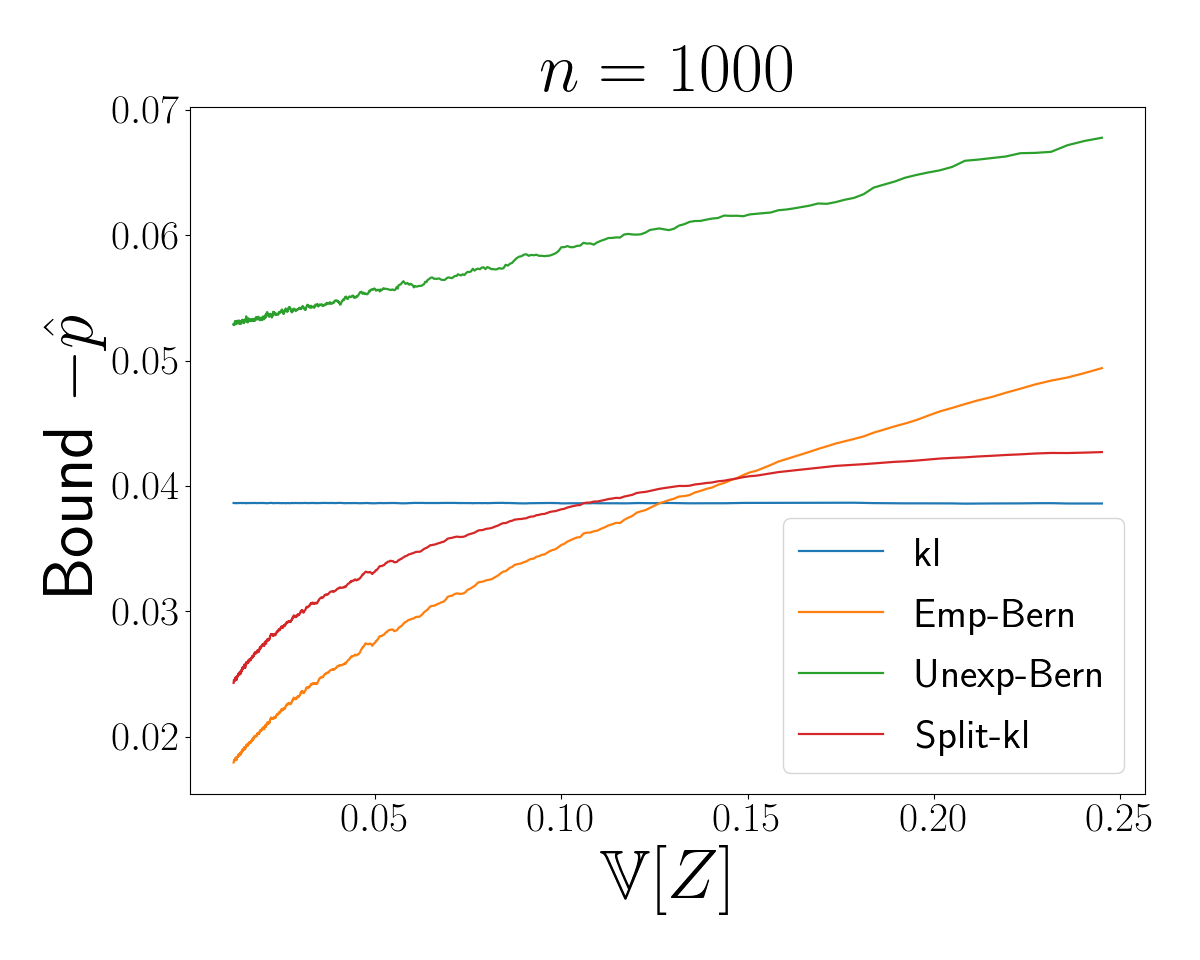}
		\caption{$n=1000$.}
		\label{fig:app:sim:n1000_ConstM}
	\end{subfigure}
	\caption{Comparison of the concentration bounds for beta distributions with parameters $\alpha=\beta$ taking values in the $[0.01,10]$ interval, with $\delta=0.05$, and with the number of samples $n=100$ and $n=1000$, respectively.}
	\label{fig:app:bounded:ConstM}
\end{figure}

In Figure~\ref{fig:app:bounded:spectrum} we consider another case where the variances stay similar but the means lie across the spectrum in between $0$ and $1$. We define the distributions being studied by a combination of two sets of probability distributions. First of all, we take $\beta=5$ and $\alpha\in[0.01, 5]$, resulting in the mean $p$ in between $0$ and $0.5$. We define another part by taking $\alpha=5$ and $\beta\in[0.01, 5]$, resulting in the mean $p$ in between $0.5$ and $1$. We plot the difference between the values of the bounds and $\hat p$ as a function of $p$. The $\kl$ bound is relatively weak around $p=0.5$ as expected. Since the variances stay similar across the interval, the performance of the Empirical Bernstein stay similar throughout the spectrum, and is tighter than the $\kl$ bound when the number of samples is sufficiently large. The split-$\kl$ bound is comparable and sometimes outperform the tightest bounds. The Unexpected Bernstein bound again falls behind due to the uncentered second moments.

\begin{figure}[t]
	\begin{subfigure}[b]{.48\textwidth}
		\includegraphics[width=\textwidth]{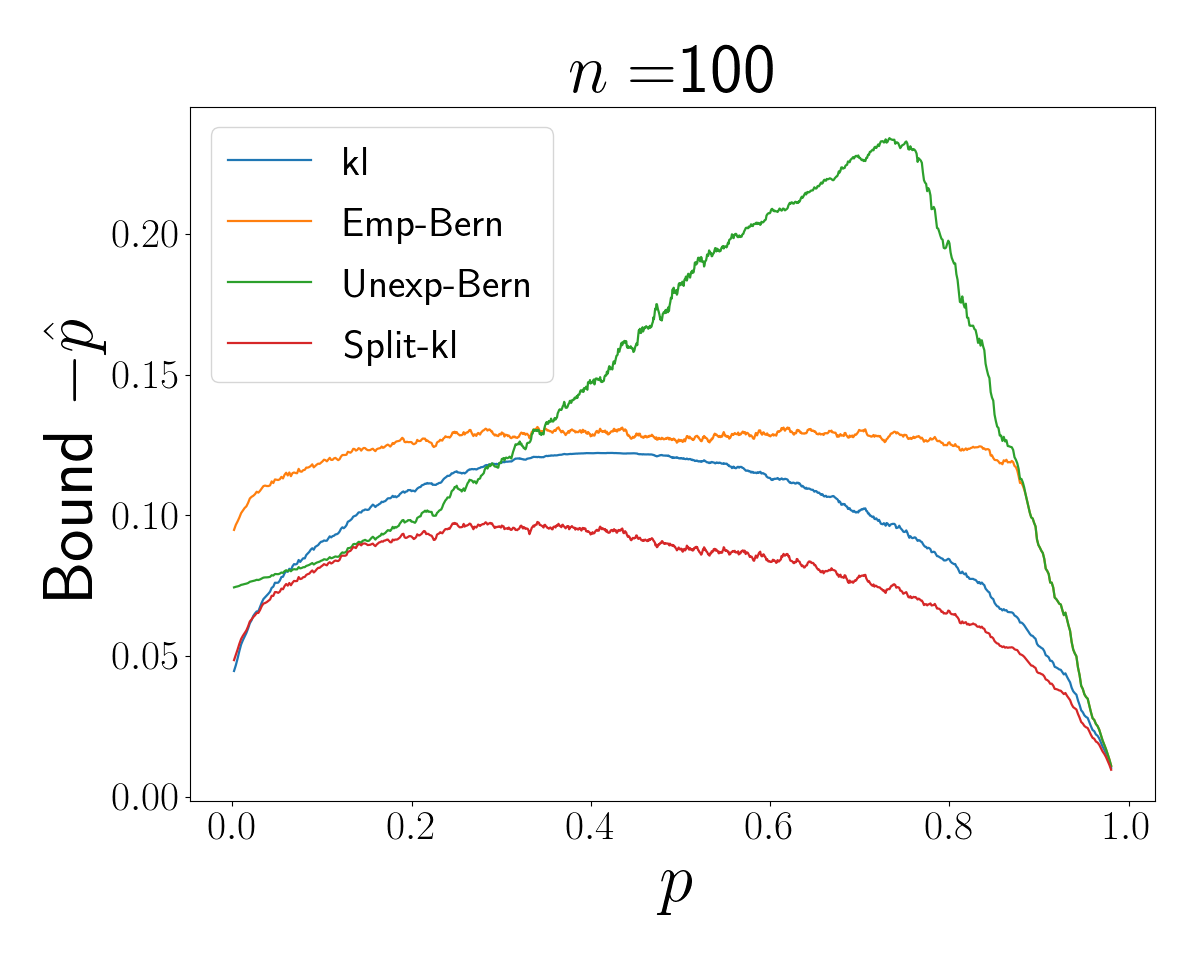}
		\caption{$n=100$.}
		\label{fig:app:sim:n100_spectrum}
	\end{subfigure}
	\hfill
	\begin{subfigure}[b]{.48\textwidth}
		\includegraphics[width=\textwidth]{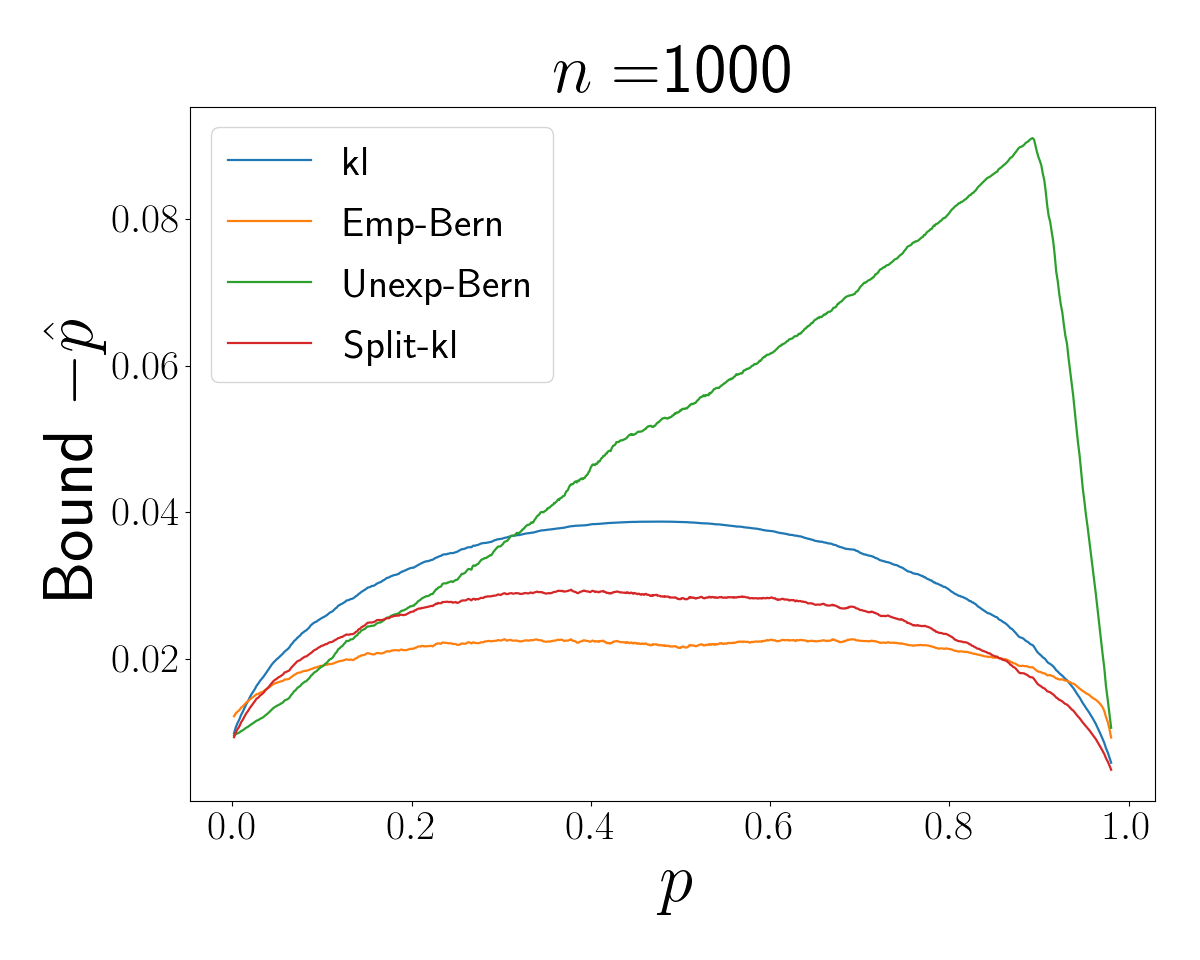}
		\caption{$n=100$.}
		\label{fig:app:sim:n1000_spectrum}
	\end{subfigure}
	\caption{Empirical comparison of concentration bounds for beta distribution with parameters $\alpha$ and $\beta$ and with the number of samples $n=100$ and $n=1000$. For $p\in[0,0.5]$, we take $\beta=5$ and $\alpha\in[0.01, 5]$ while for $p\in[0.5, 1]$, we take $\alpha=5$ and $\beta\in[0.01,5]$.}
	\label{fig:app:bounded:spectrum}
\end{figure}

\section{Experiments}\label{app:sec:experiments}

\subsection{Data Sets}\label{app:sec:datasets}
As mentioned in Section~\ref{sec:experiments}, we consider data sets from UCI and LibSVM repositories~\citep{UCI,libsvm}, as well as Fashion-MNIST from Zalando Research~\footnote{https://github.com/zalandoresearch/fashion-mnist}. An overview of the data sets is listed in Table~\ref{tab:data_sets}, where Banknote stands for Banknote Authentication, Breast-C stands for Breast Cancer Wisconsin, Fashion stands for Fashion-MNIST, Haberman stands for Haberman's Survival, and kr-vs-kp stands for Chess (King-Rook vs. King-Pawn). For data sets with a training and a testing set, we combine the training and the testing sets.
\begin{table}[t]
    \centering
    \caption{Data set overview. $c_{\min}$ and $c_{\max}$ denote the minimum and maximum class frequency.}
    \label{tab:data_sets}
    \begin{tabular}{lcccccc}\toprule
Data set & $N$ & $d$ & $c$ & $c_{\min}$ & $c_{\max}$ & Source\\
\midrule
\dataset{Adult} & 32561 & 14 & 2 & 0.2408 & 0.7592 & LIBSVM (a1a) \\
\dataset{Banknote} & 1372 & 4 & 2 & 0.4447 & 0.5553 & UCI \\
\dataset{Breast-C} & 699 & 9 & 2 & 0.3448 & 06552 & UCI \\
\dataset{Cod-RNA} & 59535 & 8 & 2 & 0.3333 & 0.6667 & LIBSVM \\
\dataset{Connect-4} & 67557 & 126 & 3 & 0.0955 & 0.6583 & LIBSVM \\
\dataset{Fashion} & 70000 & 784 & 10 & 0.1000 & 0.1000 & Zalando Research \\
\dataset{Haberman} & 306 & 3 & 2 & 0.2647 & 0.7353 & UCI \\
\dataset{kr-vs-kp} & 3196 & 36 & 2 & 0.48 & 0.52 & UCI \\
\dataset{Letter} & 20000 & 16 & 26 & 0.0367 & 0.0406 & UCI \\
\dataset{MNIST} & 70000 & 780 & 10 & 0.0902 & 0.1125 & LIBSVM \\
\dataset{Mushroom} & 8124 & 22 & 2 & 0.4820 & 0.5180 & LIBSVM \\
\dataset{Pendigits} & 10992 & 16 & 10 & 0.0960 & 0.1041 & LIBSVM \\
\dataset{Phishing} & 11055 & 68 & 2 & 0.4431 & 0.5569 & LIBSVM \\
\dataset{Protein} & 24387 & 357 & 3 & 0.2153 & 0.4638 & LIBSVM \\
\dataset{SVMGuide1} & 3089 & 4 & 2 & 0.3525 & 0.6475 & LIBSVM \\
\dataset{SatImage} & 6435 & 36 & 6 & 0.0973 & 0.2382 & LIBSVM \\
\dataset{Sensorless} & 58509 & 48 & 11 & 0.0909 & 0.0909 & LIBSVM \\
\dataset{Shuttle} & 58000 & 9 & 7 & 0.0002 & 0.7860 & LIBSVM \\
\dataset{Spambase} & 4601 & 57 & 2 & 0.394 & 0.606 & UCI \\
\dataset{Splice} & 3175 & 60 & 2 & 0.4809 & 0.5191 & LIBSVM \\
\dataset{TicTacToe} & 958 & 9 & 2 & 0.347 & 0.653 & UCI \\
\dataset{USPS} & 9298 & 256 & 10 & 0.0761 & 0.1670 & LIBSVM \\
\dataset{w1a} & 49749 & 300 & 2 & 0.0297 & 0.9703 & LIBSVM \\
\bottomrule
\end{tabular}

\end{table}

\paragraph{Linear Classifiers.} For the linear classifiers experiments, we consider selected data sets with binary class ($c=2$). We rescale all the real-valued attributes to the $[-1,1]$ interval and use one-hot encoding to encode categorical variables to $\{-1,1\}$, which increases the dimension of the attributes for some of the data sets. In particular, the effective dimension of Adult becomes 108, kr-vs-kp becomes 73, and Mushroom becomes 116. We remove rows containing missing features. For each data set, we shuffle the data sets and take four 5-fold train-test split, which gives 20 runs in total.

\paragraph{Weighted Majority Vote.} For the weighted majority vote experiments, including the ensemble of multiple heterogeneous classifiers and the random forest, we consider several binary and multiclass ($c>2$) data sets. We encode the categorical variables into integers and remove rows containing missing features. For each data set we take 10 runs, and for each run we randomly set aside 20\% of sample as the test set.

\subsection{Linear Classifiers}\label{app:sec:LC}
In this section, we describe the details of the experimental setting of the linear classifiers~\ref{app:sec:lc:setting}, the details of the bounds~\ref{app:sec:lc:bounds}, and the details of optimization~\ref{app:sec:lc:optimization}.  The  source code for replicating the experiments is available at Github\footnote{\url{https://github.com/YiShanAngWu/Split-KL-R}}.
\subsubsection{Experimental Setting}\label{app:sec:lc:setting}
In this section, we detail the settings and the construction of informed priors and excess losses using linear classifiers with Gaussian posterior. We follow the construction by~\citet{MGG20}.

As described in Section~\ref{sec:experiments}, the posterior $\rho=\mathcal{N}(w_S, \Sigma_S)$ is a Gaussian distribution centered at $w_S$, which is learned on $S$ using regularized logistic regression
\begin{equation}\label{eq:reg_logistic_regression}
    w_{S}=\argmin_{w\in\R^d} \frac{\lambda\|w\|^2}{2} + \frac{1}{|S|}\sum\limits_{(X,Y)\in S}-\lr{ Y \ln\phi(w^\top X)+(1-Y)\ln(1-\phi(w^\top X)) },
\end{equation}
where $\phi(x):=1/(1+e^{-x})$ for $x\in\R$ is the sigmoid function.
The covariance of the posterior is a diagonal matrix $\sigma^2 I_d$, where the variance $\sigma^2$ is learned from the corresponding PAC-Bayes bounds. We use the informed priors in all the PAC-Bayes bounds. The informed priors $\pi_{S_1}=\mathcal{N}(w_{S_1},\Sigma_{S_1})$ and $\pi_{S_2}=\mathcal{N}(w_{S_2},\Sigma_{S_2})$ are also chosen to be Gaussian distributions over $\R^d$, where the centers of the distributions are learned similarly using regularized logistic regression on the corresponding sample $S_1$ and $S_2$. If using excess losses, we take the classifier associated with $w_{S_1}$ as the reference classifier $h_{S_1}$ for the ``forward'' approach and take the classifier associated with $w_{S_2}$ as the reference classifier $h_{S_2}$ for the ``backward'' approach. We let the covariance of the informed priors to be also diagonal matrices $\Sigma_{S_1}=\Sigma_{S_2}=\sigma_\pi^2 I_d$, where $\sigma_\pi^2$ is selected from a grid $\mathcal{G}=\{1/2,\cdots,1/2^j\}$ for $j=\lceil\log_2 |S|\rceil$.

For all data sets, we use $\lambda=0.01$ in equation~\eqref{eq:reg_logistic_regression} and solve it using the BFGS algorithm. For all the bounds, we take $\delta=0.05$. Note that to be able to select the variance of the priors from a grid $\mathcal{G}$, we have to take a union bound over $\mathcal{G}$. Since the hypothesis space is infinitely large, we approximate the excess risk by drawing 100 classifiers from the posterior $\rho$ and compute the excess losses with respect to the reference classifiers. 

\subsubsection{Bounds}\label{app:sec:lc:bounds}
As mentioned in the body that we used informed priors for all the bounds we applied. The $\PBSkl_{\Ex}$ bound is presented in Theorem~\ref{thm:together_refined}, while the $\PBUB_{\Ex}$ bound and the $\PBkl$ bound will be presented in the following. The idea to derive PAC-Bayes bounds with informed priors in general is similar to the technique used in the proof of Theorem~\ref{thm:together_refined} in Appendix~\ref{sec:pf_thm:together_refined}.

The key element of the derivations is to bound $\E_{S'}[\E_\pi[e^{f_n(h,S')}]]$ for a given function $f_n:\mathcal{H}\times\lr{\mathcal{X}\times\mathcal{Y}}^n\rightarrow \R$ in Lemma~\ref{lem:PAC-Bayes}. Let the prior $\pi=\frac{1}{2}\pi_{S_1}+\frac{1}{2}\pi_{S_2}$, and let $S_*$ be either $S_1$ or $S_2$. If $h$ is sampled from $\pi_{S_*}$, we estimate the loss on $\bar S_*=S\backslash S_*$. Then,
\begin{align*}
    \E_{S}\E_{\pi}\lrs{e^{f_n(h,S)}} = \frac{1}{2} \sum\limits_{i=1,2} \E_S \E_{\pi_{S_i}} \lrs{e^{f_n(h,S)}}=\frac{1}{2} \sum\limits_{i=1,2} \E_{S_i} \E_{\pi_{S_i}} \E_{\bar{S}_i} \lrs{e^{f_n(h,S)}},
\end{align*}
where the second equality is due to the fact that $\pi_{S_*}$ is independent of $\bar{S}_*$ so they are exchangeable. We will then select the function $f_n(h,S)$ later such that $\E_{S_i}\E_{\pi_{S_i}}\E_{\bar S_i}[e^{f_n(h,S)}]$ is bounded for $i=1,2$.

Similarly, we let $\rho=\frac{1}{2}\rho_1+\frac{1}{2}\rho_2$. If $h$ is sampled from $\rho_*$, we estimate the loss on $\bar S_*=S\backslash S_*$. Then we have
\begin{equation}\label{eq:rewrite-IP-true}
\E_\rho[\tilde L(h)] = \frac{1}{2}\E_{\rho_1}[\tilde L(h)] + \frac{1}{2}\E_{\rho_2}[\tilde L(h)]
\end{equation}
and
\begin{equation}\label{eq:rewrite-IP-empirical}
    \E_\rho[\hat{\tilde L}(h,S_*)] = \frac{1}{2}\E_{\rho_1}[\hat{\tilde L}(h,S_2)] + \frac{1}{2}\E_{\rho_2}[\hat{\tilde L}(h,S_1)]
\end{equation}
for any loss $\tilde \ell$ and the corresponding quantities following the definitions in Section~\ref{sec:PBSkl}.
We assume that $\rho_1=\rho_2=\rho$ in all the bounds. Note that for simpler computation, we replace $\kl(\rho\|\pi)$ by its upper bound $\frac{1}{2}\kl(\rho\|\pi_{S_1})+\frac{1}{2}\kl(\rho\|\pi_{S_2})$ for all the bounds in the experiments.

\paragraph{PAC-Bayes-$\kl$ bound with Informed Priors ($\PBkl$).}
We take the PAC-Bayes-$\kl$ bound with informed priors as the baseline:
\begin{equation*}
    \E_\rho[L(h)]\leq \kl^{-1,+}\lr{\frac{1}{2}\E_\rho[\hat L(h,S_1)]+\frac{1}{2}\E_\rho[\hat L(h,S_2)],\frac{\KL(\rho\|\pi)+\ln\frac{2|\mathcal{G}|\sqrt{n/2}}{\delta}}{n/2}},
\end{equation*}
which is obtained by letting $f_n(h,S)=\frac{n}{2}\kl(\hat L(h,\bar S_*)\| L(h))$ and plugging it into Lemma~\ref{lem:PAC-Bayes}. In particular, we have $\E_{S_i}\E_{\pi_{S_i}}\E_{\bar S_i}[e^{f_n(h,S)}]=\E_{S_i}\E_{\pi_{S_i}}\E_{\bar S_i}[e^{\frac{n}{2}\kl(\hat L(h,\bar S_i)\| L(h))}]\leq 2\sqrt{n/2}$ for $i=1,2$ by Lemma~\ref{lem:Note-PB-Maurer}. Also, by the convexity of $\KL$, we further have 
\[
\kl\lr{\E_\rho[\hat{L}(h,S_*)]\|\E_\rho[L(h)]} \leq \E_\rho\lrs{\kl(\hat L(h,S_*)\| L(h))}.
\]
By taking the inverse of $\kl$, applying the relations in Eq.~\eqref{eq:rewrite-IP-true} and Eq.~\eqref{eq:rewrite-IP-empirical}, and taking a union bound over $\mathcal{G}$, we obtain the desired formula.

\paragraph{PAC-Bayes-Unexpected Bernstein Bound with Excess Loss and Informed Priors ($\PBUB_{\Ex}$).}
Let $\Delta_{\hat{\V}}(h,h^*,S)=\frac{1}{|S|}\sum_{(X,Y)\in S}( \Delta_\ell(h(X),h^*(X),Y))^2$ denote the average of the second moment of the excess losses. Then, the $\PBUB_{\Ex}$ has the form:
\begin{multline*}
    \E_\rho[L(h)]\leq \frac{1}{2}\E_\rho[\Delta_{\hat L}(h,h_{S_1},S_2)] + \frac{1}{2}\E_\rho[\Delta_{\hat L}(h,h_{S_2},S_1)]\nonumber\\
    + \frac{\psi(-\gamma b)}{\gamma b^2}\lr{\frac{1}{2}\E_\rho[\Delta_{\hat{\V}}(h,h_{S_1},S_2)]+\frac{1}{2}\E_\rho[\Delta_{\hat{\V}}(h,h_{S_2},S_1)]} + \frac{\KL(\rho\|\pi)+\ln\frac{3|\mathcal{G}||\Gamma|}{\delta}}{\gamma (n/2)}\nonumber\\
     +\Bin^{-1}\lr{\frac{n}{2}, \frac{n}{2}\hat{L}(h_{S_1}, S_2), \frac{\delta}{3|\mathcal{G}|}} + \Bin^{-1}\lr{\frac{n}{2}, \frac{n}{2}\hat{L}(h_{S_2}, S_1), \frac{\delta}{3|\mathcal{G}|}},
\end{multline*}
where $|\Gamma|$ comes from a union bound over a grid of $\gamma \in \Gamma=\{1/(2b), \cdots, 1/(2^k b)\}$ for $k=\lceil \log_2(\sqrt{|S|/\ln(1/\delta)}/2) \rceil$ when applying the PAC-Bayes-Unexpected-Bernstein inequality.

The last line of the bound is by applying Theorem~\ref{thm:test-set-bound} to $L(h_{S_1})$ and $L(h_{S_2})$ as in Theorem~\ref{thm:together_refined}, while the first two lines of the bound are derived from applying the PAC-Bayes-Unexpected-Bernstein inequality to the first two terms in equation~\eqref{eq:excess&inform}. In particular, let $f_n(h,S)=\gamma \frac{n}{2}\lr{\Delta_L(h,h_{S_*})-\Delta_{\hat L}(h,h_{S_*},\bar S_*)}-\frac{\psi(-b\gamma)}{b^2}\frac{n}{2}\Delta_{\hat \V}(h,h_{S_*},\bar S_*)$ and plug it into Lemma~\ref{lem:PAC-Bayes}. Then we have \[\E_{S_i}\E_{\pi_{S_i}}\E_{\bar S_i}[e^{f_n(h,S)}]=\E_{S_i}\E_{\pi_{S_i}}\E_{\bar S_i}[e^{\gamma \frac{n}{2}\lr{\Delta_L(h,h_{S_i})-\Delta_{\hat L}(h,h_{S_i},\bar S_i)}-\frac{\psi(-b\gamma)}{b^2}\frac{n}{2}\Delta_{\hat \V}(h,h_{S_i},\bar S_i)}]\leq 1\]
for $i=1,2$ by Lemma~\ref{lem:unepected-bernstein}. By moving the empirical quantities to the right hand side, applying the relations in Eq.~\eqref{eq:rewrite-IP-true} and Eq.~\eqref{eq:rewrite-IP-empirical}, and taking the union bounds, we obtain the desired formula.

\paragraph{PAC-Bayes-spli-$\kl$ Bound with Excess Loss and Informed Priors ($\PBSkl_{\Ex}$).}
The bound is stated in Theorem~\ref{thm:together_refined}, except that we replace $\delta$ by $\delta/|\mathcal{G}|$ for the union bound of $\mathcal{G}$. We take $\mu=0$ for the bound in the experiments.

\subsubsection{Optimization}\label{app:sec:lc:optimization}
Since the center of the posterior $w_S$ is learned using regularized logistic regression, the only thing remains is to decide the variance of the posterior $\sigma^2$ using the PAC-Bayes bounds. In general, the variance can be any non-negative values since the bound holds with high probability for all $\rho$ simultaneously. For simpler computation, we only consider the variance taking the same value as the variance of the priors \ie taking $\sigma^2=\sigma_\pi^2\in\mathcal{G}$. For each PAC-Bayes bounds, we find the optimal $\sigma^2$ by iterating over variances $\sigma^2=\sigma_\pi^2\in\mathcal{G}$ and return the one corresponds to the tightest bound. We approximate $\E_\rho[\cdot]$ by sampling $100$ classifiers from $\rho$.

The inverse $\kl$ in the $\PBkl$ and the $\PBSkl_{\Ex}$ bounds can be computed by binary search. The inverse of the binomial tail distribution in the $\PBUB_{\Ex}$ and the $\PBSkl_{\Ex}$ bounds can also be computed by binary search. To optimize the $\PBUB_{\Ex}$ bound, we also need to iterate over $\gamma\in\Gamma$.

\subsection{Ensemble of Multiple Heterogeneous Classifiers}\label{app:sec:MCE}
In this section, we describe the details of the experimental setting of the ensemble of multiple heterogeneous classifiers~\ref{app:sec:mce:setting}, the details of bounds and optimization~\ref{app:sec:mce:bounds&optimization}, and lastly, the results~\ref{app:sec:mce:results}.  The  source code for replicating the experiments is available at Github\footnote{\url{https://github.com/StephanLorenzen/MajorityVoteBounds}}.

\subsubsection{Experimental Setting}\label{app:sec:mce:setting}
In this experiment, we follow the setting in~\citet{WMLIS21}. We take the following standard classifiers available in \textit{scikit-learn} using default parameters to build the ensemble: 1. \textbf{Linear Discriminant Analysis} 2. \textbf{Decision Tree} 3. \textbf{Logistic Regression} 4. \textbf{Gaussian Naive Bayes}. We also take three versions of \textbf{k-Nearest Neighbors}: 1. $k=3$ with uniform weights (\ie all points in each neighborhood are weighted equally) 2. $k=5$ with uniform weights, and 3. $k=5$ with the weights of the points are defined by the inverse of their L2 distance. Thus, there are 7 classifiers for ensemble in total. 

\paragraph{Ensemble Construction by Bagging.} We follow the construction used by~\citet{MLIS20,WMLIS21}. For each classifier $h$, we generate a random split of the data set $S$ into a pair of subsets $S=S_h\cup \bar S_h$, where $\Bar S_h=S\backslash S_h$. We generate the split by the standard bagging method, where $S_h$ contains $0.8|S|$ samples randomly subsampled with replacement from $S$.  We train the classifier on $S_h$, and estimate the expected loss on the out-of-bag (OOB) sample $\bar S_h$ to make an unbiased estimation. The resulting set of classifiers produces an ensemble, while the estimates are used for calculating the bounds and deciding the weights of the ensemble. In particular, we estimate the expected loss by $\hat L(h,\bar S_h)$, and let $n=\min_h|\bar S_h|$. In the remaining of the paper, we call the tandem loss with an offset $\alpha$ by the $\alpha$-tandem loss. Then for a pair of classifiers $h$ and $h'$, we take the overlap of the OOB sample $\bar S_h\cap \bar S_{h'}$ to estimate the unbiased tandem loss $\hat L(h,h',\bar S_h\cap \bar S_{h'})$, $\alpha$-tandem loss $\hat L_\alpha(h,h',\bar S_h\cap \bar S_{h'})$, the second moment of the $\alpha$-tandem loss $\hat \Var_\alpha(h,h',\bar S_h\cap \bar S_{h'})$, the variance of the $\alpha$-tandem loss $\hat{\operatorname{Var}}_\alpha(h,h',\bar S_h\cap \bar S_{h'})$, as well as the splits of the $\alpha$-tandem loss $\hat L^+_\alpha(h,h',\bar S_h\cap \bar S_{h'})$ and $\hat L^-_\alpha(h,h',\bar S_h\cap \bar S_{h'})$. Let $m=\min_{h,h'}|\bar S_h\cap \bar S_{h'}|$ be the minimum size of the overlap.

\subsubsection{Bounds and Optimization}\label{app:sec:mce:bounds&optimization} The bounds we are comparing in this section are the $\TND$, $\CCTND$, $\CCPBB$, $\CCPBUB$, and $\CCPBSkl$ bounds. The derivations of the first three bounds are provided in~\citet{MLIS20,WMLIS21}, while we will provide the derivations of the $\CCPBUB$ bound and the $\CCPBSkl$ bound.

In the experiments, we take $\delta=0.05$ and take $\pi$ to be a uniform distribution over the classifiers. For $\CCPBB,\CCPBUB$ and $\CCPBSkl$ bounds, we take a grid of $\alpha\in[-0.5,0.5]$ since the bounds are not differentiable w.r.t\ $\alpha$. Note that we don't need a union bound over $\alpha$~\citep{WMLIS21}. To optimize the weighting $\rho$, we
applied iRProp+ for the gradient based optimization \citep{IH03,FI18}, until
the bound did not improve more than $10$ for 10 iterations. To find the optimal $\rho$ and the parameters, we start by $\rho=\pi$, and apply alternating minimization until the bound doesn't change for more than $10^{-9}$. The details of alternating minimization for each bound are provided below.

We first cite the three existing bounds.
\paragraph{Tandem Bound ($\TND$)~\citep{MLIS20}}
They used the following formula to compute the bound after obtaining the optimal weights $\rho$:
\begin{equation*}\label{eq:TND-bound}
    L(\MV_\rho) \leq 4\kl^{-1,+}\lr{\E_{\rho^2}[\hat L(h,h',\bar S_h\cap \bar S_{h'})], \frac{2\KL(\rho\|\pi)+\ln(4\sqrt{m}/\delta)}{m}},
\end{equation*}
and used the following relaxation, based on the PAC-Bayes-$\lambda$ inequality~\ref{app:sec:PBlambda}, for easier optimization:
\begin{equation}\label{eq:TND-optimization}
    L(\MV_\rho) \leq 4\lr{\frac{\E_{\rho^2}[\hat L(h,h',\bar S_h\cap \bar S_{h'})]}{1-\lambda/2}+\frac{2\KL(\rho\|\pi)+\ln(2\sqrt{m}/\delta)}{\lambda(1-\lambda/2)m}}
\end{equation}
for any $\lambda\in(0,2)$. The bound can be optimized by implementing alternating minimization: Given $\rho$, starting with $\rho=\pi$, find the corresponding optimal $\lambda$ (Sec.~\ref{app:sec:PBlambda}). Then given $\lambda$, optimize $\rho$ by projected gradient descent.

\paragraph{Chebyshev-Cantelli bound with $\TND$ empirical loss estimate bound ($\CCTND$)~\citep{WMLIS21}} They used the following formula to compute the bound after obtaining the optimal weights $\rho$:
\begin{multline*}
L(\MV_\rho)\leq \frac{1}{(0.5-\alpha)^2}\bigg[\kl^{-1,+}\lr{\E_{\rho^2}[\hat L(h,h',\bar S_h\cap \bar S_{h'})], \frac{2\KL(\rho\|\pi)+\ln(4\sqrt{m}/\delta)}{m}}\\ 
- 2\alpha\kl^{-1,\circ}\lr{\E_\rho[\hat L(h,\bar S_h)] , \frac{\KL(\rho\|\pi) + \ln(4 \sqrt n/\delta)}{n}} + \alpha^2\bigg],
\end{multline*}
for $\alpha<0.5$, where $\circ$ is ``$-$'' for $\alpha\geq 0$ and ``$+$'' otherwise.
On the other hand, they used the following relaxations, based on the PAC-Bayes-$\lambda$ inequality~\ref{app:sec:PBlambda}, for easier optimization:
\begin{multline*}
L(\MV_\rho)\leq \frac{1}{(0.5-\alpha)^2}\bigg[\frac{\E_{\rho^2}[\hat L(h,h',\bar S_h\cap \bar S_{h'})]}{1 - \frac{\lambda}{2}} + \frac{2\KL(\rho\|\pi) + \ln(4 \sqrt m/\delta)}{\lambda\lr{1-\frac{\lambda}{2}}m}\\ 
- 2\alpha\lr{\lr{1 - \frac{\gamma}{2}}\E_\rho[\hat L(h,\bar S_h)] - \frac{\KL(\rho\|\pi) + \ln(4 \sqrt n/\delta)}{\gamma n}} + \alpha^2\bigg]
\end{multline*}
for $0\leq\alpha<0.5$, and
\begin{multline*}
L(\MV_\rho)\leq \frac{1}{(0.5-\alpha)^2}\bigg[\frac{\E_{\rho^2}[\hat L(h,h',\bar S_h\cap \bar S_{h'})]}{1 - \frac{\lambda}{2}} + \frac{2\KL(\rho\|\pi) + \ln(4 \sqrt m/\delta)}{\lambda\lr{1-\frac{\lambda}{2}}m}\\ 
- 2\alpha\lr{ \frac{\E_{\rho}[\hat L(h,\bar S_h)]}{1 - \frac{\gamma}{2}} + \frac{\KL(\rho\|\pi) + \ln(4\sqrt n/\delta)}{\gamma\lr{1-\frac{\gamma}{2}}n}} + \alpha^2\bigg]
\end{multline*}
for $\alpha<0$. The optimization of the bound can, again, be done by alternating minimization of the following steps: 1. Given $\alpha$ and $\rho$, where we start with $\alpha=0$ and $\rho=\pi$, compute the corresponding closed-form minimizer $\lambda$ and $\gamma$ (Sec.~\ref{app:sec:PBlambda}). 2. Given $\rho$, $\lambda$ and $\gamma$, find the closed-form minimizer $\alpha$. 3. Given parameters $\alpha$, $\lambda$, and $\gamma$, optimize over $\rho$ using projected gradient descent.

\paragraph{Chebyshev-Cantelli bound with PAC-Bayes-Bennett loss estimate bound ($\CCPBB$)~\citep{WMLIS21}}
The $\CCPBB$ bound has the following formula for both computing the bound and optimization:
\begin{multline*}
L(\MV_\rho) \leq \frac{1}{(0.5-\alpha)^2}\Bigg[\E_{\rho^2}[\hat L_\alpha(h,h',\bar S_h\cap \bar S_{h'})] + \frac{2\KL(\rho\|\pi) + \ln\frac{2k_\lambda k_\gamma}{\delta}}{\gamma m}\\
+\frac{\phi(\gamma K_\alpha)}{\gamma K_\alpha^2}\lr{\frac{\E_{\rho^2}[\hat{\operatorname{Var}}_\alpha(h,h',\bar S_h\cap \bar S_{h'})]}{1-\frac{\lambda m}{2(m-1)}} + \frac{K_\alpha^2\lr{2\KL(\rho\|\pi)+\ln \frac{2k_\lambda k_\gamma}{\delta}}}{n\lambda \lr{1-\frac{\lambda m}{2(m-1)}}}}\Bigg],
\end{multline*}
where $\phi(x)=e^x-x-1$ and $K_\alpha=\max\{1-\alpha,1-2\alpha\}$ is the length of the range of the $\alpha$-tandem loss. The parameter $\gamma$ is taken in a grid $\lrc{\gamma_1,\cdots,\gamma_{k_\gamma}}$, where $\gamma_i>0$ for all $i$ and $\lambda$ is taken in a grid $\lrc{\lambda_1,\cdots,\lambda_{k_\lambda}}$, where $\lambda_i\in\lr{0,\frac{2(n-1)}{n}}$ for all $i$. $k_\gamma$ and $k_\lambda$ in the bound come from the union bounds over a grid of $\gamma$ and a grid of $\lambda$.


To optimize the bound, we take a grid of $\alpha\in[-0.5, 0.5]$ and iterate over $\alpha$ in the grid. For a given $\alpha$, we first compute $\hat L_\alpha(h,h',\bar S_h\cap \bar S_{h'})$ and $\hat{\operatorname{Var}}_\alpha(h,h',\bar S_h\cap \bar S_{h'})$ for all $h,h'$. Then, optimize the bound for a fix $\alpha$ by alternating the following steps: 1. Given $\rho$, starting with $\rho=\pi$, find the corresponding optimal $\lambda$, and then the optimal $\gamma$ in the grids. 2. Given $\lambda$ and $\gamma$, optimize $\rho$ by projected gradient descent.

Next, we present the two new bounds $\CCPBUB$ and $\CCPBSkl$, which are based on the oracle parametric form of the Chebyshev-Cantelli bound~\citep[Theorem 8]{WMLIS21}:
For all $\rho$ and for all $\alpha<0.5$
\begin{equation}\label{eq:oracle_CC}
    L(\MV_\rho)\leq \frac{\E_{\rho^2}[L_\alpha(h,h')]}{(0.5-\alpha)^2}.
\end{equation}
By applying the PAC-Bayes-Unexpected-Bernstein inequality to the $\alpha$-tandem loss, we obtain the $\CCPBUB$ bound, while by applying the PAC-Bayes-split-$\kl$ inequality to the $\alpha$-tandem loss, we obtain the $\CCPBSkl$ bound.

\paragraph{Chebyshev-Cantelli bound with PAC-Bayes-Unexpected-Bernstein loss estimate bound ($\CCPBUB$)}
By applying the PAC-Bayes-Unexpected-Bernstein inequality to the $\alpha$-tandem loss in equation~\eqref{eq:oracle_CC}, with the upper bound of the $\alpha$-tandem loss $b=(1-\alpha)^2$ for $\alpha<0.5$, we obtain the bound:
\begin{multline*}
    L(\MV_\rho) \leq \frac{1}{(0.5-\alpha)^2}\bigg[\E_{\rho^2}[\hat L_\alpha(h,h',\bar S_h\cap \bar S_{h'})]+\frac{\psi(-\gamma (1-\alpha)^2)}{\gamma (1-\alpha)^4}\E_{\rho^2}[\hat{\V}_\alpha(h,h',\bar S_h\cap \bar S_{h'})]\\
    +\frac{2\KL(\rho\|\pi)+\ln \frac{k_\gamma}{\delta}}{\gamma m}\bigg],
\end{multline*}
where the $2$ in front of $\KL$ comes from the fact that $\KL(\rho^2\|\pi^2)=2\KL(\rho\|\pi)$. As in the previous experiments, we take a grid of $\gamma\in\{1/(2(1-\alpha)^2), \cdots,1/(2^{k_\gamma}(1-\alpha)^2)\}$ for $k_\gamma=\lceil\log_2(\sqrt{m/\ln(1/\delta)}/2) \rceil$ when applying the PAC-Bayes-Unexpected-Bernstein inequality.

Similar to the optimization of the $\CCPBB$ bound, we again take a grid of $\alpha\in[-0.5,0.5]$ and iterate over $\alpha$ in the grid. For a given $\alpha$, we first compute $\hat L_\alpha(h,h',\bar S_h\cap \bar S_{h'})$ and $\hat{\V}_\alpha(h,h',\bar S_h\cap \bar S_{h'})$ for all $h,h'$. Then, optimize the bound for a fix $\alpha$ by alternating minimization of $\rho$ and $\gamma$ in the grid. We initialize $\rho=\pi$ and optimize it by pojected gradient descent.

\paragraph{Chebyshev-Cantelli bound with PAC-Bayes-split-$\kl$ loss estimate bound ($\CCPBSkl$)}
Similarly, by applying the PAC-Bayes-split-$\kl$ inequality to the $\alpha$-tandem loss in equation~\eqref{eq:oracle_CC}, we obtain the following formula to compute the bound after obtaining the optimal weights $\rho$:
\begin{multline*}
    L(\MV_\rho) \leq \frac{1}{(0.5-\alpha)^2}\bigg[\mu+ \lr{b-\mu}\kl^{-1,+}\lr{\frac{\E_{\rho^2}[\hat L^+_\alpha(h,h',\bar S_h\cap \bar S_{h'})]}{b-\mu},\frac{2\KL(\rho\|\pi)+\ln\frac{4\sqrt{m}}{\delta}}{m}} \\ 
    - (\mu-a)\kl^{-1,-}\lr{\frac{\E_{\rho^2}[\hat L^-_\alpha(h,h',\bar S_h\cap \bar S_{h'})]}{\mu-a},\frac{2\KL(\rho\|\pi)+\ln\frac{4\sqrt{m}}{\delta}}{m}}\bigg].
\end{multline*}
We use the following relaxation formula, which is based on the PAC-Bayes-$\lambda$ inequality~\ref{app:sec:PBlambda}, for optimization.
\begin{multline*}
    L(\MV_\rho) \leq \frac{1}{(0.5-\alpha)^2}\bigg[\mu+ \lr{b-\mu}\lr{\frac{\E_{\rho^2}[\hat L^+_\alpha(h,h',\bar S_h\cap \bar S_{h'})]}{\lr{b-\mu}\lr{1-\lambda/2}}+\frac{2\KL(\rho\|\pi)+\ln\frac{4\sqrt{m}}{\delta}}{\lambda(1-\lambda/2)m}} \\ 
    - (\mu-a)\lr{\lr{1-\frac{\gamma}{2}}\frac{\E_{\rho^2}[\hat L^-_\alpha(h,h',\bar S_h\cap \bar S_{h'})]}{\mu-a}-\frac{2\KL(\rho\|\pi)+\ln\frac{4\sqrt{m}}{\delta}}{\gamma m}}\bigg].
\end{multline*}
$2\KL(\rho\|\pi)$ in both bounds again comes from $\KL(\rho^2\|\pi^2)=2\KL(\rho\|\pi)$.  Recall that the $\alpha$-tandem loss takes values in $\lrc{(1-\alpha)^2,-\alpha(1-\alpha),\alpha^2}$. For $\alpha<0.5$, $(1-\alpha)^2$ has the largest value.  Therefore, we take $b=(1-\alpha)^2$ in the bound. Furthermore, for $\alpha < 0$ we have $\alpha^2<-\alpha(1-\alpha)$, and for $\alpha \geq 0$ we have $-\alpha(1-\alpha)\leq \alpha^2$. Therefore, for $\alpha<0$, we take $a=\alpha^2$ and $\mu$ to be the middle value $-\alpha(1-\alpha)$, while for $\alpha\geq 0$, we take $a=-\alpha(1-\alpha)$ and $\mu=\alpha^2$.

To optimize the bound, we again take a grid of $\alpha\in[-0.5,0.5]$ and iterate over $\alpha$ in the grid. For a given $\alpha$, we compute the parameters $a, b, \mu$, and then compute the losses $\hat L^+_\alpha(h,h',\bar S_h\cap \bar S_{h'})$ and $\hat L^-_\alpha(h,h',\bar S_h\cap \bar S_{h'})$ for all $h,h'$. The optimization of the bound for a fixed $\alpha$ can be done by alternating minimization: 1. Given $\rho$, starting with $\rho=\pi$, compute the corresponding optimal $\lambda$ and $\gamma$ (Sec.~\ref{app:sec:PBlambda}). 2. Given $\lambda$ and $\gamma$, optimize over $\rho$ using projected gradient descent.

\subsubsection{Results}\label{app:sec:mce:results}
We presented in the body the results of the selected data sets. Here we show the results on more data sets. We present the results for binary data sets in Figure~\ref{fig:mce:binary}, while we present the results for multiclass data sets in Figure~\ref{fig:mce:multiclass}. Taking $\alpha=0$ for $\CCTND$, $\CCPBB$, $\CCPBUB$, and $\CCPBSkl$ bounds collapses to the $\TND$ bound. Therefore, we take $\TND$ as a baseline. In both figures, $\CCTND$ performs similar to, and often better than the baseline. The second bound, $\CCPBB$, using the $\alpha$-tandem loss, lags behind due to nested application of concentration bounds. The two new bounds based the $\alpha$-tandem loss, $\CCPBUB$ and $\CCPBSkl$, clearly improve the shortage of the $\CCPBB$ bound and often provide tighter bounds than the rest.
\begin{figure}[t]
    \centering
    \includegraphics[width=0.95\textwidth]{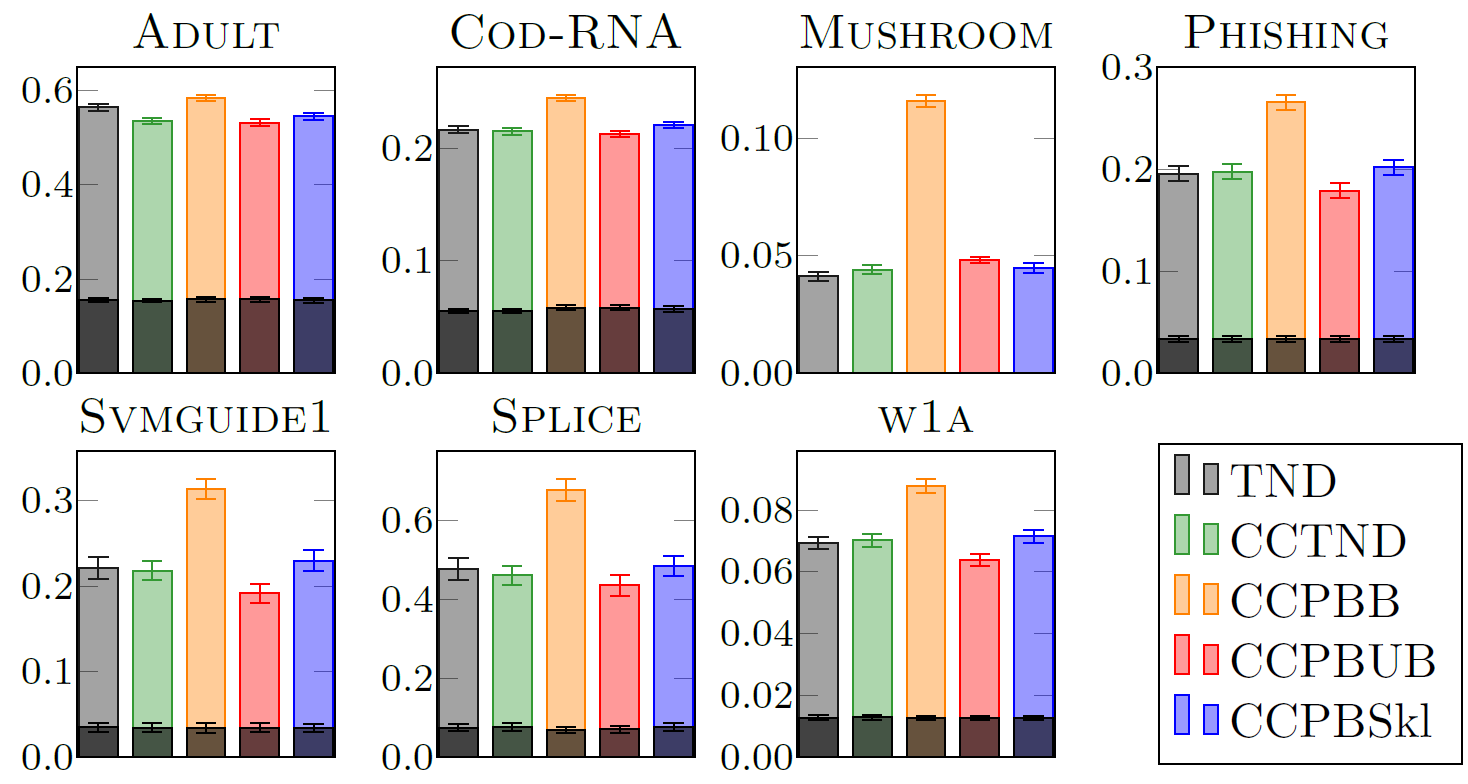}
    \caption{Comparison of the bounds and the test losses of the weighted majority vote on ensembles of heterogeneous classifiers with optimized posterior $\rho^*$ generated by $\TND$, $\CCTND$, $\CCPBB$, $\CCPBUB$, and $\CCPBSkl$. The data sets are binary labeled. The test losses of the corresponding bounds are shown in black. We report the mean and the standard deviation over 10 runs of the experiments.}
    \label{fig:mce:binary}
\end{figure}

\begin{figure}[t]
    \centering
    \includegraphics[width=0.95\textwidth]{}
    \caption{Comparison of the bounds and the test losses of the weighted majority vote on ensembles of heterogeneous classifiers with optimized posterior $\rho^*$ generated by $\TND$, $\CCTND$, $\CCPBB$, $\CCPBUB$, and $\CCPBSkl$. The data sets are multiclass labeled. The test losses of the corresponding bounds are shown in black. We report the mean and the standard deviation over 10 runs of the experiments.}
    \label{fig:mce:multiclass}
\end{figure}

\subsection{Random Forest Majority Vote Classifiers}\label{app:sec:RFC}
In this section, we describe the details of the experimental setting of random forest~\ref{app:sec:rf:setting} and the results of the experiments~\ref{app:sec:rf:results}. Since both the ensemble of the heterogeneous classifiers and the random forest are examples of weighted majority vote, the bounds and optimization methods in this experiment are the same as described in Sec.~\ref{app:sec:mce:bounds&optimization}.  The  source code for replicating the experiments is available at Github\footnote{\url{https://github.com/StephanLorenzen/MajorityVoteBounds}}.
\subsubsection{Experimental Setting}\label{app:sec:rf:setting}
In this section, we follow the construction used by~\citet{WMLIS21}. 
We construct the ensemble from decision trees, which is available in \textit{scikit-learn}. We take 100 fully grown trees to build the random forest. The ensemble is again construct by bagging as described in~\ref{app:sec:mce:setting}, where each tree $h$ is trained on a subset of a random split $S_h$ and estimated on $\bar S_h$. To train each tree, we use the Gini criterion for splitting and consider $\sqrt{d}$ features in each split, where $d$ is the number of the attributes in data. 

\subsubsection{Results}\label{app:sec:rf:results}
The results of random forest weighted majority vote on binary data sets are shown in Figure~\ref{fig:rfc:binary} while the results on multiclass data sets are shown in Figure~\ref{fig:rfc:multiclass}. Similar to the discussions in Sec.~\ref{app:sec:mce:results} for the ensemble of heterogeneous classifiers, $\TND$ serves as a baseline. In both figures, $\CCTND$ performs similar to the baseline. The $\CCPBB$, using the $\alpha$-tandem loss, lags behind due to nested application of concentration bounds. The two new bounds based the $\alpha$-tandem loss, $\CCPBUB$ and $\CCPBSkl$, clearly improve the shortage of the $\CCPBB$ bound. The $\CCPBSkl$ bound is comparable to the baseline, and the $\CCPBUB$ bound often provide tighter bounds than the baseline.

\begin{figure}[t]
    \centering
    \includegraphics[width=\textwidth]{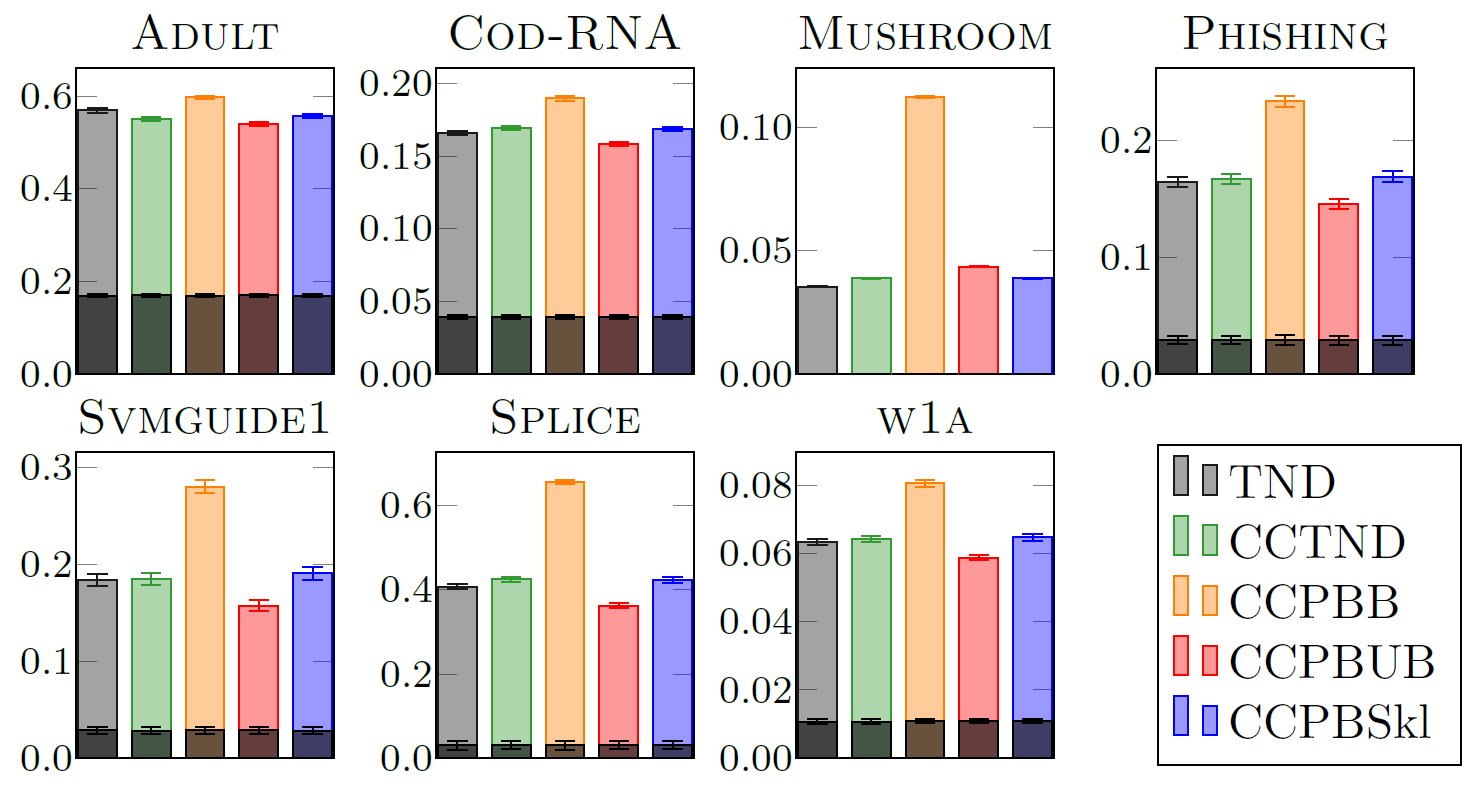}
    \caption{Comparison of the bounds and the test losses of the weighted majority vote on random forest with optimized posterior $\rho^*$ generated by $\TND$, $\CCTND$, $\CCPBB$, $\CCPBUB$, and $\CCPBSkl$. The data sets are binary labeled. The test losses of the corresponding bounds are shown in black. We report the mean and the standard deviation over 10 runs of the experiments.}
    \label{fig:rfc:binary}
\end{figure}

\begin{figure}[t]
    \centering
    \includegraphics[width=\textwidth]{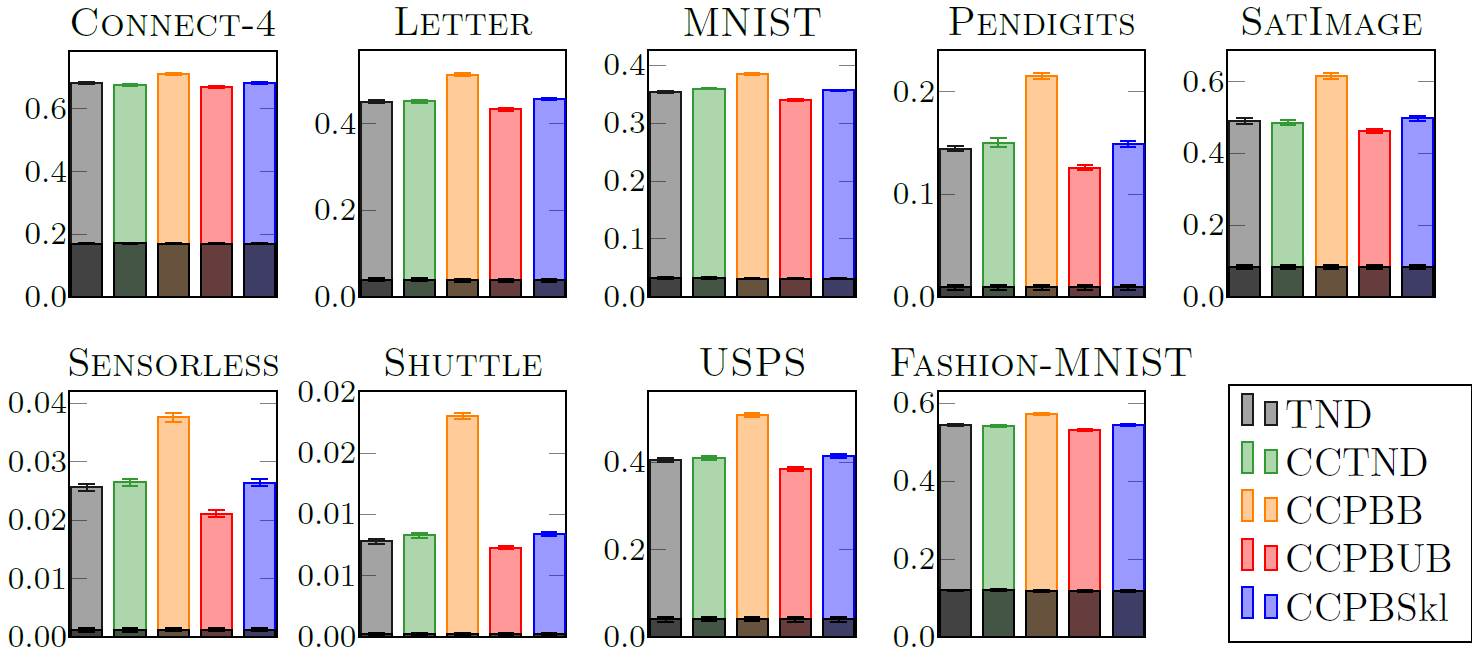}
    \caption{Comparison of the bounds and the test losses of the weighted majority vote on random forest with optimized posterior $\rho^*$ generated by $\TND$, $\CCTND$, $\CCPBB$, $\CCPBUB$, and $\CCPBSkl$. The data sets are multiclass labeled. The test losses of the corresponding bounds are shown in black. We report the mean and the standard deviation over 10 runs of the experiments.}
    \label{fig:rfc:multiclass}
\end{figure}

\section{PAC-Bayes-$\lambda$ Inequality}\label{app:sec:PBlambda}
\begin{theorem}[PAC-Bayes-$\lambda$ Inequality, \citealp{TIWS17,MLIS20}]\label{thm:lambdabound} For any loss $\ell\in[0,1]$, any probability distribution $\pi$ on ${\cal H}$ that is independent of $S$ and any $\delta \in (0,1)$, with probability at least $1-\delta$ over a random draw of a sample $S$, for all distributions $\rho$ on ${\cal H}$ and all $\lambda \in (0,2)$ and $\gamma > 0$ simultaneously:
\begin{align}
\E_\rho\lrs{L(h)} &\leq \frac{\E_\rho[\hat L(h,S)]}{1 - \frac{\lambda}{2}} + \frac{\KL(\rho\|\pi) + \ln(2 \sqrt n/\delta)}{\lambda\lr{1-\frac{\lambda}{2}}n},\label{eq:PBlambda}\\
\E_\rho\lrs{L(h)} &\geq \lr{1 - \frac{\gamma}{2}}\E_\rho[\hat L(h,S)] - \frac{\KL(\rho\|\pi) + \ln(2 \sqrt n/\delta)}{\gamma n}.\label{eq:PBlambda-lower}
\end{align}
\label{thm:PBlambda}
\end{theorem}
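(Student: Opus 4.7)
The plan is to reduce both inequalities to the already-established PAC-Bayes-$\kl$ inequality (Theorem~\ref{thm:pbkl}) by a purely deterministic relaxation of the Bernoulli KL divergence. The key auxiliary inequalities are, for all $p,q\in[0,1]$,
\begin{equation*}
\lambda(q-p) - \tfrac{\lambda^{2} q}{2} \;\leq\; \kl(p\|q) \quad \text{for all } \lambda\in(0,2),
\end{equation*}
and
\begin{equation*}
\gamma(p-q) - \tfrac{\gamma^{2} p}{2} \;\leq\; \kl(p\|q) \quad \text{for all } \gamma>0.
\end{equation*}
I would prove both by maximizing the quadratic on the left in the free parameter: the maximizer is $\lambda^{\star}=(q-p)/q$ (resp.\ $\gamma^{\star}=(p-q)/p$), yielding maximum value $(q-p)^{2}/(2q)$ (resp.\ $(p-q)^{2}/(2p)$). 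In the ``correct-sign'' regime (where the interior maximizer is positive) this is bounded above by $\kl(p\|q)$ via the refined Pinsker inequality $\kl(p\|q)\geq (p-q)^{2}/(2\max(p,q))$; in the ``wrong-sign'' regime the left-hand side is already non-positive and the bound is trivial.

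Given these relaxations, the proof is a one-line substitution. Theorem~\ref{thm:pbkl} gives, with probability at least $1-\delta$ over $S$, simultaneously for every posterior $\rho$,
\begin{equation*}
\kl\!\lr{\E_\rho[\hat L(h,S)] \,\middle\|\, \E_\rho[L(h)]} \;\leq\; \frac{\KL(\rho\|\pi)+\ln(2\sqrt{n}/\delta)}{n}.
\end{equation*}
Setting $p=\E_\rho[\hat L(h,S)]$ and $q=\E_\rho[L(h)]$ in the first auxiliary inequality and solving the resulting linear inequality for $q$ yields exactly \eqref{eq:PBlambda}; substituting into the second and solving for $q$ yields \eqref{eq:PBlambda-lower}. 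Note that the ``for all $\lambda\in(0,2)$ and all $\gamma>0$ simultaneously'' quantifier is obtained at no additional probabilistic cost: Theorem~\ref{thm:pbkl} already holds uniformly in $\rho$ with probability $1-\delta$, and the two auxiliary inequalities are deterministic identities valid pointwise in $\lambda$ and $\gamma$, so these parameters may be chosen adaptively (data-dependently) without any union bound.

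The main (and essentially only) obstacle is the clean verification of the two auxiliary relaxations across both sign regimes. This is not deep: the quadratic-in-$\lambda$ (or $\gamma$) expression is maximized in closed form, and the resulting $(p-q)^{2}/(2\max(p,q))$ bound against $\kl(p\|q)$ is a standard strengthening of Pinsker. Beyond this, no additional moment-generating-function manipulation, union bound over the parameter grid, or change of measure is required; the entire argument is a single invocation of Theorem~\ref{thm:pbkl} followed by algebraic rearrangement.
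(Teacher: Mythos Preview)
Your approach is correct and matches the paper's own description. The paper does not actually give a proof of Theorem~\ref{thm:lambdabound}; it merely cites \citet{TIWS17} and \citet{MLIS20} and remarks that ``The PAC-Bayes-$\lambda$ bound is an optimization friendly relaxation of the PAC-Bayes-$\kl$ bound,'' which is exactly the route you take: invoke Theorem~\ref{thm:pbkl} once and then apply the deterministic quadratic relaxations $\lambda(q-p)-\tfrac{\lambda^2 q}{2}\leq\kl(p\|q)$ and $\gamma(p-q)-\tfrac{\gamma^2 p}{2}\leq\kl(p\|q)$ to obtain \eqref{eq:PBlambda} and \eqref{eq:PBlambda-lower} simultaneously with no additional union bound.
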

The upper bound is due to~\citet{TIWS17} and the lower bound is due to~\citet{MLIS20}, and both hold simultaneously. The PAC-Bayes-$\lambda$ bound is an optimization friendly relaxation of the PAC-Bayes-$\kl$ bound. ~\citet{TS13} has shown that for a given $\rho$, equation~\ref{eq:PBlambda} is convex in $\lambda$ and has the minimizer
\begin{equation*}\label{eq:opt_lambda}
    \lambda_\rho^* = \frac{2}{\sqrt{\frac{2n\E_{\rho}[\hat L(h,S)]}{\KL(\rho||\pi)+\ln \frac{2\sqrt{n}}{\delta}}+1}+1}.
\end{equation*}
On the other hand,~\citet{MLIS20} has shown that for a given $\rho$, the optimal $\gamma$ in equation~\ref{eq:PBlambda-lower} can be achieved by
\begin{equation*}\label{eq:opt_gamma}
    \gamma_\rho^* = \sqrt{\frac{\KL(\rho||\pi) + \ln(2\sqrt{n}/\delta)}{n\E_\rho[\hat L(h,S)] }}.
\end{equation*}
Furthermore, given $\lambda$ or $\gamma$, the optimal $\rho$ can be achieved by projected gradient descent.

\end{document}